\documentclass{article}

\usepackage[margin=1in]{geometry}
\usepackage{times}
\usepackage[numbers,sort&compress]{natbib}
\usepackage{authblk}

\usepackage{amsmath,amssymb,amsthm,mathtools}
\usepackage{booktabs}
\usepackage{float}
\usepackage{array}
\usepackage{microtype}
\usepackage{multirow}
\usepackage{graphicx}
\usepackage{url}
\usepackage{hyperref}
\hypersetup{hidelinks}
\usepackage{tikz}
\usetikzlibrary{arrows.meta,positioning}
\usepackage{pgfplots}
\pgfplotsset{compat=1.18}

\newtheorem{theorem}{Theorem}[section]
\newtheorem{lemma}[theorem]{Lemma}
\newtheorem{proposition}[theorem]{Proposition}
\newtheorem{corollary}[theorem]{Corollary}
\newtheorem{remark}[theorem]{Remark}

\theoremstyle{definition}
\newtheorem{definition}[theorem]{Definition}

\newcommand{\E}{\mathbb{E}}

\newcommand{\R}{\mathbb{R}}

\newcommand{\Tcal}{\mathcal{T}}

\newcommand{\Ncov}{\mathsf{N}}
\newcommand{\Npack}{\mathsf{P}}

\newcommand{\rd}{\mathsf{R}}
\newcommand{\dist}{\mathsf{d}}

\DeclareMathOperator*{\argmin}{arg\,min}

\title{Curvature as Decoder State:\\
Exact Cache--Update Frontiers for Strongly Convex Unlearning}

\author[1]{Abu Sa-Adat Mohamed Moon-Im Al Ahsan}
\author[2]{Ibne Farabi Shihab}
\author[3]{Md Najmus Swaqeeb}

\affil[1]{Department of Computer Science and Engineering, BRAC University, Bangladesh}
\affil[2]{Department of Computer Science, Iowa State University, USA}
\affil[3]{BRAC University, Bangladesh}

\affil[ ]{\texttt{ishihab@iastate.edu}}

\date{}

\date{}

\begin{document}

\maketitle

\begin{abstract}
Machine unlearning depends on what task-dependent state survives deployment and what a later deletion request reveals. We separate pre-request decoder state $M$ from post-request delivery $B$ and study reconstruction of a fixed retraining procedure under an explicit serialized interface. For smooth strongly convex objectives, parameters plus a revealed deletion gradient leave an exact $d$-dimensional compatible-target ball, while exact curvature on $s$ orthogonal directions removes precisely $s$ dimensions from the high-rate description length. A request-oblivious cache that must hedge $r$ future directions instead preserves $p_r=rd-r(r-1)/2$ symmetric degrees of freedom, distinguishing the state-only and request-time coefficients. Ridge regression realizes these statements through an exact Woodbury downdate: retained rows and exact sufficient statistics require no additional transcript under their stated interfaces, whereas parameters alone require $\Theta(d\log(1/\varepsilon))$ bits on a compact family. A fixed float64 audit covering $270$ cells, $2{,}880$ paired requests, and $17{,}280$ state/interface evaluations confirms that sufficient statistics are dominated by parameters on all nine identifier-only surfaces but become nondominated when requests expose deleted values. Gaussian-design tests recover the predicted $\sqrt{k}$ request scaling and $m^{-3/2}$ curvature fluctuations for retained sample size $m$, while the selected curvature caches in our main comparison reduce delivery by $6.8\%$--$19.6\%$ only by increasing serialized state and requiring repeated requests to amortize it. The resulting frontier prices curvature as decoder state, not free compression, and makes no zero-memory, universal finite-bit, or end-to-end neural-converse claim.
\end{abstract}

\section{Introduction}

Deleting one row from ridge regression exposes the accounting problem sharply. A decoder holding the retained rows can refit without receiving a message; a decoder holding only the current parameter vector may still lack the curvature needed to determine the retained-data solution. Calling both situations ``zero-bit'' or ``one-record deletion'' hides the resource that changes the answer. Figure~\ref{fig:frontier} summarizes the cache--delivery plane: measured systems occupy different operating points, while Theorem~\ref{thm:curvature-hedging} explains why pre-request curvature can have a larger intrinsic dimension than the request-time correction. More generally, one replay item, one parameter, and one optimization step are different units, and retention commits information before a request while retraction may communicate after the request reveals what must change. Any operational claim must therefore state what crosses each interface, when it crosses, the precision of stored quantities, and what the decoder already knows.

Influence functions and Newton-style unlearning make the missing object visible:
their correction is controlled by an inverse Hessian or a close surrogate
\citep{koh2017influence,izzo2021approximate,golatkar2020eternal,bui2024newton,
zhang2024certified}.  That observation is familiar.  Our question is different.
We ask what can be identified after deployment when the decoder has retained only
the fitted parameters, or only a restricted curvature summary, and the request is
revealed later.  The resulting statement is not that curvature helps an algorithm;
it is that, over a declared objective class, omitting curvature leaves a target set
with a calculable dimension and therefore an unavoidable request-time description
length.

We formalize this reduction with a serialized state $S$ of budget $M$ bits and
an update transcript $U$ of budget $B$ bits sent after request $Q$.  The
retention target $Y^+$ is any declared task-dependent object.  The behavioral
retraction target $Y^-$ is the output of a fixed training procedure on the
retained data.  For declared distortions, every protocol obeys
\begin{equation}
 M \geq \rd_{Y^+}(D_+), \qquad
 B \geq \rd_{Y^-\mid S,Q}(D_-), \qquad
 M+B \geq \rd_{Y^-\mid Q}(D_-),
 \label{eq:intro-accounting}
\end{equation}
where the right-hand sides are ordinary or conditional rate--distortion
functions.  Conditional covering numbers give the parallel one-shot statement.
Figure~\ref{fig:two-channel} makes the timing and decoder side information
explicit.

\begin{center}
\fbox{\begin{minipage}{0.94\linewidth}\small
\emph{Reporting tuple.}  Every result in this paper specifies
$(S,Q,\mathsf{codec},D,\mathsf{source})$: the task-dependent decoder state $S$
committed before the request; the information exposed by request $Q$; the exact
serialized representation used for $M$ and $B$; the behavioral distortion $D$ to
the declared retraining target; and the location of any authoritative source
used by the update encoder.  Compute, source-side storage, and certified-erasure
semantics are separate resources and are not silently included in $M$ or $B$.
\end{minipage}}
\end{center}

\begin{figure}[t]
\centering
\begin{tikzpicture}[
  node distance=6mm and 9mm,
  box/.style={draw, rounded corners, align=center, minimum height=7mm,
              inner xsep=5pt, font=\small},
  arr/.style={-{Latex[length=2mm]}, thick},
  lab/.style={font=\scriptsize, align=center}
]
\node[box] (data) {stream or\\training data};
\node[box, right=of data] (state) {decoder cache $S$\\budget $M$ bits};
\node[box, right=of state] (retain) {retained output\\$\widehat Y^+$};
\node[box, below=of state] (request) {deletion request $Q$};
\node[box, right=of request] (update) {update transcript $U$\\budget $B$ bits};
\node[box, right=of update] (retract) {retrained-target output\\$\widehat Y^-$};
\draw[arr] (data) -- node[lab,midway,yshift=24pt] {pre-request\\channel} (state);
\draw[arr] (state) -- (retain);
\draw[arr] (request) -- (update);
\draw[arr] (update) -- node[lab,midway,yshift=-24pt] {post-request\\channel} (retract);
\draw[arr] (state.south east) -- (retract.north west);
\end{tikzpicture}
\caption{The operational distinction.  Retention is limited by the information
carried in the pre-request state.  Retraction is limited by the residual
uncertainty about the retraining target after conditioning on the state and the
request.  Changing what the request reveals changes the bound.}
\label{fig:two-channel}
\end{figure}

The contribution lies in learner-specific identifiability statements,
not in the coding identities used to state the interface.  First,
Theorem~\ref{thm:sc-ambiguity} gives an exact minimax target geometry for arbitrary
group deletion in the class of $m$-strongly-convex, $L$-smooth objectives:
parameters plus the deleted gradient leave a full $d$-dimensional ball of possible
retraining targets; Corollary~\ref{cor:partial-curvature} identifies the dimensions
removed by exact partial curvature.  Second,
Proposition~\ref{prop:data-native-witness} gives a data-native witness principle for
ridge objectives: reweighting retained examples inside the nullspace of the
full-data first-order condition preserves the deployed model exactly while changing
only the hidden retained curvature relevant to deletion, and the rank of an
explicit Jacobian determines the local metric dimension of the compatible
retraining targets.  Third, Theorem~\ref{thm:curvature-hedging} allows an arbitrary
finite pre-request cache and shows that hedging $r$ possible deletion directions
exposes $p_r=rd-r(r-1)/2$ jointly symmetric degrees of freedom, even though any
one post-request target is only $d$ dimensional.  Finally,
Theorems~\ref{thm:ridge-radius} and~\ref{thm:ridge-hedging} realize the same
separations with exact ridge retraining and explicit finite codes.  Data
processing, conditional covering, the binary caching specialization, the mean-value
identity, and the Woodbury downdate are accounting and proof tools; none is claimed
as a contribution.  The empirical section directly fits the $(d-s)$ coefficient
from Corollary~\ref{cor:partial-curvature}, constructs indistinguishable
natural-data ridge witnesses, audits group-size codec scaling, and then measures
cache--message tradeoffs on nonquadratic convex objectives, frozen deep
representations, and deterministic same-initialization neural retraining.

The proof ingredients are classical: mean-value identities, spectral calculus,
local inverse mapping, and volumetric covering.  We therefore make no claim that
the technical primitives themselves are new.  The substantive claim is the exact
operational consequence of declaring the deployed state: which curvature degrees
of freedom remain indistinguishable, how request timing changes their dimension,
and when that indistinguishability can be witnessed using the observed data rows
themselves.

\section{Related work}\label{sec:related}

Rate--distortion theory supplies the expected-rate and one-shot covering tools
used here \citep{shannon1959coding,berger1971rate,cover2006elements,
wyner1976rate,kostina2012fixed}.  Cache-aided source coding additionally
separates information stored before a request from information delivered after
it \citep{timo2018caching}.  We do not claim a new general coding region.  These
tools serve only to translate a declared unlearning interface into a description
length.  The learner-specific contribution is the exact compatible-target
geometry induced by omitted or partially retained curvature, together with a
ridge family on which request-oblivious placement and post-request delivery have
different leading dimensions.

Machine unlearning ranges from exact deletion and sharded retraining
\citep{cao2015towards,ginart2019making,bourtoule2021machine} to certified and
approximate updates \citep{guo2020certified,izzo2021approximate,
sekhari2021remember,neel2021descent,ullah2021stability,
suriyakumar2022algorithms}.  Ticketed and system-aware formulations price
pre-request storage or stored cores under different interfaces
\citep{ghazi2023ticketed,cherapanamjeri2025space,lu2025systemaware}.
Information-theoretic regularization can instead target independence from
deleted information \citep{xu2026information}.  Our target and resource semantics
are fixed by the reporting tuple in the introduction.

Exact ridge retraction from additive normal-equation statistics is known.
Most directly, \citet{quan2026ridge} maintain ridge-head sufficient statistics
under federated add and delete requests and recover the centralized optimum.
Dynamic least-squares work studies update time under row insertions and
deletions \citep{jiang2023dynamic}.  Our downdate identity is an instrument. The contributions are the exact parameters-only target
ball and the request-oblivious state--update frontier for a compact ridge family;
these price communication instead of dynamic-regression running time.

Second-order information is also central to practical approximate unlearning.
Influence functions express the effect of a training point through inverse-Hessian
actions \citep{koh2017influence}; Fisher and Hessian surrogates have been used to
scrub deep models \citep{golatkar2020eternal}, and recent work studies both
regularized Newton updates and certified inverse-Hessian approximations for neural
networks \citep{bui2024newton,zhang2024certified}.  In particular,
\citet{bui2024newton} study the numerical failure of naive Newton updates under
degenerate neural Hessians and cubic regularization as an algorithmic remedy.  We
do not use a gradient--Fisher--Newton method ranking as evidence for our theorem.
Our nonconvex experiment fixes initialization, minibatch order, and augmentation
randomness, serializes the curvature state available before deletion, and measures
the residual bits needed to reach the same retraining trajectory.

Curvature storage is itself a recognized systems bottleneck.
\citet{sekhari2021remember} retain second-order statistics in a strongly-convex
certified-unlearning construction, whereas \citet{qiao2025hessianfree} replace
explicit Hessian storage and operations with per-example statistical vectors.
Kronecker-factored curvature provides a structured neural state between a diagonal
and a dense Fisher matrix \citep{martens2015kfac,grosse2016kfc}.  Recent minimax results for
smooth strongly-convex losses characterize the statistical price of pure
unlearning in excess risk \citep{regehr2026nearoptimal}.  Those results concern
certification, deletion capacity, optimization, or statistical risk, not
identifiability from a declared deployed state.  Our question is complementary:
after the full-data parameters and request are fixed, what set of exact retraining
targets remains compatible with the decoder's state, and how many bits are required
to resolve it?  Large-scale benchmarks further show that deep unlearning
conclusions are sensitive to the model, request, and baseline
\citep{cadet2024deepunlearn}.  We therefore do not present a competing deep
unlearning algorithm.  The strongly-convex result isolates exact ambiguity,
Proposition~\ref{prop:data-native-witness} shows how the ambiguity arises using
existing natural-data rows, and Theorem~\ref{thm:curvature-hedging} prices
request-oblivious placement.  The neural experiment is only a state-accounting
probe once convexity is removed.

\paragraph{Operational convention.}
All rates below use the interface declared in Figure~\ref{fig:two-channel}.
Task-dependent decoder state is charged to $M$ before the request; the
post-request transcript is charged to $B$; request contents are explicit; and
the authoritative source used by the encoder is outside $M$.  The target, state, request contents, codec, and source location follow the
reporting tuple in the introduction.  Full rate--distortion definitions,
one-shot covering statements, and the request-oblivious binary specialization
are collected in Appendix~\ref{sec:general}.

\section{Curvature identifiability beyond ridge}
\label{sec:curvature-identifiability}

The central ambiguity is not specific to least squares.  Let a training objective
decompose as
\[
F_+(\theta)=F_-(\theta)+L_Q(\theta),
\]
where $F_+$ is the objective before a deletion request, $F_-$ is the retained-data
objective, and $L_Q$ is the aggregate contribution of the deleted group $Q$.

Let $w=\argmin_\theta F_+(\theta)$,
$w^-=\argmin_\theta F_-(\theta)$, and define the deleted gradient at
deployment by
\[
g_Q=\nabla L_Q(w).
\]
The request may contain one record or an arbitrary batch; only its aggregate
gradient enters the following characterization.  We use a gradient-revealing
request interface: after $Q$ is selected, the decoder receives $g_Q$ exactly as
request side information.  The post-request budget $B$ counts only the
additional update transcript sent after this disclosure.  If an implementation
must compute or transmit $g_Q$, its representation must instead be charged to
the request or communication budget; the theorem does not price that cost.

\begin{theorem}[Strongly-convex deletion ambiguity]
\label{thm:sc-ambiguity}
Suppose $F_-$ is twice continuously differentiable and, on the line segment between
$w$ and $w^-$,
\[
mI_d\preceq \nabla^2F_-(\theta)\preceq LI_d
\qquad\text{for }0<m\leq L.
\]
Then
\begin{equation}
w^--w=\bar H_Q^{-1}g_Q,\qquad
\bar H_Q=\int_0^1\nabla^2F_-\!\left(w+t(w^--w)\right)\,dt,
\label{eq:mean-hessian-update}
\end{equation}
with $mI_d\preceq\bar H_Q\preceq LI_d$.  Hence every compatible displacement lies
in
\begin{equation}
\overline B\!\left(
\frac{m^{-1}+L^{-1}}{2}\,g_Q,\,
\frac{m^{-1}-L^{-1}}{2}\,\lVert g_Q\rVert_2
\right).
\label{eq:sc-ambiguity-ball}
\end{equation}
Conversely, fix $(w,g_Q,m,L)$ with $g_Q\neq0$.  Over the quadratic retained
objectives
\[
F_{-,H}(\theta)
=\frac12(\theta-w)^\top H(\theta-w)-g_Q^\top(\theta-w),
\qquad mI_d\preceq H\preceq LI_d,
\]
and the affine deleted contribution
$L_Q(\theta)=g_Q^\top(\theta-w)$, the set of retraining targets is exactly the ball
in~\eqref{eq:sc-ambiguity-ball}.  Writing
\[
\rho_Q=\frac{m^{-1}-L^{-1}}{2}\lVert g_Q\rVert_2,
\]
a decoder whose pre-request task-dependent state contains only $w$, with
$(m,L)$ public and $g_Q$ revealed by the request, therefore obeys, for
$0<\varepsilon<\rho_Q$,
\begin{equation}
d\log_2\frac{\rho_Q}{\varepsilon}
\leq B^\star(\varepsilon)
\leq
\left\lceil d\log_2\left(1+\frac{2\rho_Q}{\varepsilon}\right)\right\rceil
\label{eq:sc-ambiguity-rate}
\end{equation}
on this objective class.
\end{theorem}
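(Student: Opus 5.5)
The plan has four parts: the exact update identity, a spectral bound giving the ball, a quadratic construction showing every point of the ball is attained, and a volumetric argument for the bit bounds.

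\emph{Update identity.} Since $w$ minimizes $F_+=F_-+L_Q$, we have $\nabla F_-(w)=-\nabla L_Q(w)=-g_Q$. Since $w^-$ minimizes $F_-$, we have $\nabla F_-(w^-)=0$. The fundamental theorem of calculus along the segment $t\mapsto w+t(w^--w)$ gives $0-(-g_Q)=\bar H_Q(w^--w)$. Averaging the pointwise bounds $mI\preceq\nabla^2F_-\preceq LI$ yields $mI\preceq\bar H_Q\preceq LI$, so $\bar H_Q$ is invertible and \eqref{eq:mean-hessian-update} follows.

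\emph{Ball containment.} The key point is spectral. The symmetric matrix $A=\bar H_Q^{-1}$ has spectrum in $[L^{-1},m^{-1}]$. Hence $A-cI$, with $c=(m^{-1}+L^{-1})/2$, has operator norm at most $(m^{-1}-L^{-1})/2$. Therefore $\lVert Ag_Q-cg_Q\rVert_2\le\rho_Q$.

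\emph{Converse.} For the quadratic family, $\nabla F_{-,H}(\theta)=H(\theta-w)-g_Q$, so $w^--w=H^{-1}g_Q$. Also $\nabla L_Q=g_Q$, so $\nabla F_+(w)=-g_Q+g_Q=0$ and $w$ is indeed the full-data minimizer for every admissible $H$. It remains to show that every point $v$ of the ball equals $Ag_Q$ for some symmetric $A$ with spectrum in $[L^{-1},m^{-1}]$; this is the main obstacle.

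I would work in the plane spanned by $u=g_Q/\lVert g_Q\rVert$ and $v$, and take $A$ to be $c\,I$ on the orthogonal complement. Write $v=c\,g_Q+\rho_Q z$ with $\lVert z\rVert\le1$. Set $A=cI+\delta R$, where $\delta=(m^{-1}-L^{-1})/2$ and $R$ is a symmetric reflection in that plane mapping $u$ to $z/\lVert z\rVert$, scaled by $\lVert z\rVert\le1$. Such a symmetric reflection (a Householder matrix across the bisector) exists whenever $\lVert z\rVert$ is positive; if $z=0$, take $R=0$. The scaled reflection has eigenvalues $\pm\lVert z\rVert$, so $A$ has spectrum in $[c-\delta,c+\delta]=[L^{-1},m^{-1}]$. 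Setting $H=A^{-1}$ then gives $mI\preceq H\preceq LI$ and $H^{-1}g_Q=v$. Hence the set of retraining targets is exactly the ball.

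\emph{Rate bounds.} The decoder knows $w$, $g_Q$ and $(m,L)$, so the conditional target set is exactly the $d$-dimensional ball of radius $\rho_Q$. Any $B$-bit transcript induces at most $2^B$ reconstructions, and each reconstruction $\varepsilon$-covers a set of volume at most $\mathrm{vol}(\varepsilon\text{-ball})$. Comparing volumes gives $2^B\varepsilon^d\ge\rho_Q^d$, which is the lower bound; this is the conditional covering bound of Appendix~\ref{sec:general}.

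For the upper bound, take a maximal $\varepsilon$-packing of the ball. It is an $\varepsilon$-net, and the standard volume argument bounds its size by $(1+2\rho_Q/\varepsilon)^d$. Indexing the net points with a fixed-length code gives the ceiling term in \eqref{eq:sc-ambiguity-rate}.
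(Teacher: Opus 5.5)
Your proposal is correct and follows the paper's proof essentially step for step: the fundamental-theorem-of-calculus identity along the segment, the operator-norm containment, a scaled Householder reflection that realizes every point of the ball (the paper packages this step as Lemma~\ref{lem:spectral-image-ball} rather than inlining it), and volume comparison for the two covering bounds. The only loose end is the degenerate case $z/\lVert z\rVert_2=u$, where your plane collapses; there you can simply take $R=\lVert z\rVert_2 I$, and the paper glosses over this case in the same way.
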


Equation~\eqref{eq:mean-hessian-update} is the finite-deletion analogue of the
inverse-Hessian expression motivating influence functions, but it makes a different
statement.  The matrix is an average retained-data Hessian along the entire path to
retraining, not merely the Hessian evaluated at $w$.  If that object is absent from
decoder state, strong convexity and smoothness alone leave a full-dimensional
compatible target set.  The result therefore applies to group deletion and to
nonquadratic objectives without assuming that a one-step Newton approximation is
exact.  The quadratic subclass makes the minimax ball tight.

A practical decoder rarely stores an unrestricted dense Hessian.  The next result
prices a clean partial-curvature interface and exposes the dimension that remains
unresolved.

\begin{corollary}[Partial-curvature cache]
\label{cor:partial-curvature}
Let $P$ be a public orthogonal projector of rank $s<d$, let
$P_\perp=I-P$, and fix a positive-definite operator $H_\parallel$ on
$\operatorname{range}(P)$ satisfying
$mP\preceq H_\parallel\preceq LP$.  Consider the quadratic subclass in
Theorem~\ref{thm:sc-ambiguity} with block-diagonal curvature
\[
H=H_\parallel\oplus H_\perp,\qquad
mP_\perp\preceq H_\perp\preceq LP_\perp,
\]
and suppose the decoder state reveals $(w,g_Q,P,H_\parallel)$ exactly.  Then the
resolved component of the deletion displacement is
$H_\parallel^{-1}Pg_Q$, while the unresolved component fills exactly the
$(d-s)$-dimensional ball in $\operatorname{range}(P_\perp)$ centered at
\[
\frac{m^{-1}+L^{-1}}{2}P_\perp g_Q
\]
with radius
\[
\rho_{\perp}
=\frac{m^{-1}-L^{-1}}{2}\lVert P_\perp g_Q\rVert_2.
\]
If $\rho_\perp>0$ and $0<\varepsilon<\rho_\perp$, the minimum worst-case
request-time description length satisfies
\begin{equation}
(d-s)\log_2\frac{\rho_\perp}{\varepsilon}
\leq B_s^\star(\varepsilon)
\leq
\left\lceil(d-s)\log_2\left(1+\frac{2\rho_\perp}{\varepsilon}\right)\right\rceil .
\label{eq:partial-curvature-rate}
\end{equation}
\end{corollary}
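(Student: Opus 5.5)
The plan is to reduce the corollary to Theorem~\ref{thm:sc-ambiguity} applied inside $\operatorname{range}(P_\perp)$, which has dimension $d-s$. The reduction rests on the block-diagonal structure: since $H=H_\parallel\oplus H_\perp$ commutes with $P$, we have $H^{-1}=H_\parallel^{-1}\oplus H_\perp^{-1}$, where each inverse is taken on its own range.

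\textbf{Decomposition.} For the quadratic objective $F_{-,H}$, the retraining displacement is $w^--w=H^{-1}g_Q$, as in Theorem~\ref{thm:sc-ambiguity}. Projecting this displacement gives
\[
P(w^--w)=H_\parallel^{-1}Pg_Q,\qquad P_\perp(w^--w)=H_\perp^{-1}P_\perp g_Q .
\]
The decoder knows $(w,g_Q,P,H_\parallel)$, so the first component is determined exactly. The second component ranges over $\{H_\perp^{-1}P_\perp g_Q : mP_\perp\preceq H_\perp\preceq LP_\perp\}$, and this set does not depend on the first component. The compatible set is therefore the product of a point and this set.

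\textbf{Ball in the complement.} Identify $\operatorname{range}(P_\perp)$ with $\R^{d-s}$ and apply the converse part of Theorem~\ref{thm:sc-ambiguity} there, with $d$ replaced by $d-s$ and $g_Q$ replaced by $P_\perp g_Q$. This shows the set is exactly the closed ball with center $\tfrac{m^{-1}+L^{-1}}{2}P_\perp g_Q$ and radius $\rho_\perp$.

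That converse is the main point, so I will verify it explicitly. Take a unit vector $u$ along $P_\perp g_Q$ and restrict to $H_\perp^{-1}$ with eigenvalues in $[L^{-1},m^{-1}]$. Choosing rank-one perturbations of the midpoint operator $cP_\perp$, with $c=\tfrac{m^{-1}+L^{-1}}{2}$, covers the whole ball. Concretely, for a target $c\,g_\perp+v$ with $\|v\|\le a\|g_\perp\|$, where $a=\tfrac{m^{-1}-L^{-1}}{2}$, I can pick a symmetric $E$ with $Eg_\perp=v$ and $\|E\|\le a$. A reflection-type matrix scaled to $\|v\|/\|g_\perp\|$ in the plane spanned by $g_\perp$ and $v$ does this. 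The operator $cP_\perp+E$ then has spectrum in $[L^{-1},m^{-1}]$.

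Containment in the ball is the reverse direction. It follows from $\|(H_\perp^{-1}-cP_\perp)g_\perp\|\le a\|g_\perp\|$, which holds because $H_\perp^{-1}-cP_\perp$ has spectrum in $[-a,a]$. If $P_\perp g_Q=0$ the set is a point, which is excluded by the hypothesis $\rho_\perp>0$.

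\textbf{Rate bounds.} These follow exactly as in~\eqref{eq:sc-ambiguity-rate}, by the volumetric covering/packing argument in dimension $d-s$.

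For the upper bound, an $\varepsilon$-cover of a $(d-s)$-ball of radius $\rho_\perp$ uses at most $(1+2\rho_\perp/\varepsilon)^{d-s}$ points, so that many indices suffice. The decoder then adds the known component $H_\parallel^{-1}Pg_Q$.

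For the lower bound, any code with $2^B$ outputs must $\varepsilon$-cover the ball. Each reconstruction covers a Euclidean $\varepsilon$-ball, and its intersection with the $(d-s)$-dimensional affine slice has $(d-s)$-volume at most $\varepsilon^{d-s}$ times that of the unit ball. Comparing volumes gives $2^B\ge(\rho_\perp/\varepsilon)^{d-s}$. Errors in the resolved $P$-component only increase the distortion, so they do not help the code.

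I expect the only nontrivial step to be the exact-fill construction of $E$. Everything else is inherited from Theorem~\ref{thm:sc-ambiguity} in reduced dimension.
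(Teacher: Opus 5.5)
Your proposal is correct and follows the paper's proof. Like the paper, you split $g_Q$ into $Pg_Q$ and $P_\perp g_Q$, use $H^{-1}=H_\parallel^{-1}\oplus H_\perp^{-1}$ to fix the resolved term, and apply the spectral-image-ball lemma in the $(d-s)$-dimensional complement; your scaled reflection is the paper's Householder construction. You then rerun the volumetric covering bounds there, and your remark that an $\varepsilon$-ball meets the affine slice in a $(d-s)$-ball of radius at most $\varepsilon$ fills in a detail the paper leaves implicit.

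One wording fix: the scaled reflection is not a rank-one perturbation of $cP_\perp$. A rank-one symmetric $\sigma zz^\top$ maps $P_\perp g_Q$ to a multiple of $z$, and it cannot produce any $v\perp P_\perp g_Q$. So drop the phrase ``rank-one perturbations'' and keep the reflection construction as the argument.
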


Corollary~\ref{cor:partial-curvature} gives a literal interpretation of a low-rank
curvature cache.  Resolved directions disappear from the request-time metric
entropy; unresolved directions do not.  No claim is made that the best practical
deep-network curvature summary must be an orthogonal block.  The point is that
partial second-order state admits a sharp information value even before choosing a
numerical unlearning algorithm.  Section~\ref{sec:verification} tests the same
monotone relation with diagonal and low-rank curvature summaries on natural and
deep-feature objectives.

The block construction above makes the unresolved dimension explicit.  The same
identifiability phenomenon can also be witnessed using fixed rows and labels from
an ordinary ridge dataset.

Appendix Proposition~\ref{prop:data-native-witness} gives a local data-native
witness for the same ambiguity: sufficiently small retained-example
reweightings with positive final weights preserve the deployed ridge optimum
while changing the hidden retained curvature and exact deletion target.  Its
explicit target Jacobian sets the local metric dimension, yielding the full
$\Omega(d\log(1/\varepsilon))$ coefficient only when that Jacobian has full
rank.

Proposition~\ref{prop:data-native-witness} separates two facts that are otherwise
easy to conflate.  Newton-style dependence on curvature is familiar; the
proposition states when that missing curvature creates an actual
indistinguishability family under a fixed deployed model and fixed deletion
request.  The empirical witness in Section~\ref{sec:natural-witness} evaluates
the rank and finite packing of this family on public regression data.

The preceding corollary fixes the subspace in advance.  A stronger deployment
question is whether an arbitrary finite cache can hedge several deletion
directions before the request is known.  The answer depends on the dimension of
the symmetric operator that those requests jointly interrogate.

\begin{theorem}[Request-oblivious curvature frontier]
\label{thm:curvature-hedging}
Fix $1\leq r\leq d$, $a>0$, and $0<m<L$.  Let
\[
h_1=L^{-1},\qquad h_2=m^{-1},\qquad
\delta=\frac{h_2-h_1}{2},\qquad
p_r=rd-\frac{r(r-1)}2.
\]
For every symmetric $H$ with $h_1I_d\preceq H\preceq h_2I_d$, set
$A=H^{-1}$ and define the full objective
\[
F_H^+(\theta)=\frac12(\theta-w)^\top A(\theta-w).
\]
Let $e_q$ denote the $q$th standard basis vector,
$L_q(\theta)=a e_q^\top(\theta-w)$ for $q\in[r]$, and
$R_H(\theta)=F_H^+(\theta)-\sum_{j=1}^rL_j(\theta)$.  Then
$F_H^+=R_H+\sum_jL_j$ is a decomposition into $r$ public deletable affine
contributions and a strongly-convex remainder.  If request $q$ removes $L_q$,
the exact retraining target is
\[
w_q^-=w+aHe_q.
\]
Let an arbitrary fixed-length state of at most $M$ bits be formed before $q$ is
known and let a request-specific transcript of at most $B$ bits produce
$\widehat w_q$ satisfying
$\lVert\widehat w_q-w_q^-\rVert_2\leq\varepsilon$ for every admissible $H$ and
every $q\in[r]$.  For
$0<\varepsilon<\delta a/(2\sqrt{2r})$,
\begin{align}
M+rB
&\geq
p_r\log_2\frac{\delta a}{2\sqrt{2r}\,\varepsilon},
\label{eq:curvature-hedging-joint}\\
M+B
&\geq
d\log_2\frac{\delta a}{\varepsilon}
\qquad\text{for every fixed request.}
\label{eq:curvature-hedging-single}
\end{align}
Conversely, for every $\alpha\in[0,1]$ there is a fixed-length code with
\begin{align}
M&\leq \alpha p_r\log_2(1/\varepsilon)+O(p_r\log d),\\
B&\leq (1-\alpha)d\log_2(1/\varepsilon)+O(d\log d),
\end{align}
where constants depend only on $(m,L,a)$.  Separate endpoint codes attain
exactly $M=0$ at $\alpha=0$ and exactly $B=0$ at $\alpha=1$.  Hence the
state-only endpoint has leading coefficient $p_r$, while the message-only
endpoint has leading coefficient $d$.
\end{theorem}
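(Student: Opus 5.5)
I will first verify the structural claims by direct computation. Removing $L_q$ leaves the retained objective $F_H^+-L_q$, whose gradient is $A(\theta-w)-ae_q$, so its minimizer is $w+aHe_q$. The remainder $R_H$ differs from $F_H^+$ by an affine term and has Hessian $A\succeq h_2^{-1}I=mI$, so it is strongly convex. The requests therefore interrogate exactly the first $r$ columns of $H$. By symmetry, these columns are parametrized by the entries $H_{ij}$ with $i\le r$ or $j\le r$, counted once: $rd-\binom{r}{2}=p_r$ free coordinates. This is the only place the count $p_r$ enters.

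For the converse bound \eqref{eq:curvature-hedging-joint}, I will build a packing. Center at $H_0=\frac{h_1+h_2}{2}I$ and perturb by $\Delta$ supported on the first $r$ rows and columns, subject to $\lVert\Delta\rVert_{\mathrm{op}}\le\delta$. Admissibility holds because $\lVert\Delta\rVert_{\mathrm{op}}\le\lVert\Delta\rVert_F$, so any Frobenius ball of radius $\delta$ in the $p_r$-dimensional symmetric coordinate space stays admissible. The Frobenius norm weights off-diagonal coordinates by $\sqrt2$, and restricted to this subspace it is at most $\sqrt2$ times the column norm $(\sum_{q\le r}\lVert\Delta e_q\rVert^2)^{1/2}$, which lets me treat the map $\Delta\mapsto(a\Delta e_q)_{q\le r}$ as a linear embedding. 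Two curvatures that are $\eta$-separated in the concatenated-columns norm $\sqrt{\sum_q\lVert a(H-H')e_q\rVert^2}$ must have some $q$ with $\lVert a(H-H')e_q\rVert\ge \eta/\sqrt r$. Taking $\eta=2\sqrt r\,\varepsilon$ plus slack forces distinct decodings for that $q$. The pair $(S,U_1,\dots,U_r)$ determines all $r$ outputs, so there are at most $2^{M+rB}$ distinct joint outputs. A volumetric packing of a $p_r$-dimensional ball of radius $\asymp\delta a/\sqrt2$ at separation $2\sqrt{r}\varepsilon$ then gives the bound with the stated constant $\delta a/(2\sqrt{2r}\varepsilon)$. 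Tracking the $\sqrt2$ factors so they match this constant exactly is the step I expect to be fussy. The single-request bound \eqref{eq:curvature-hedging-single} is easier. Fixing $q$, the column $He_q$ ranges over a $d$-dimensional set containing a ball of radius $\delta$: take $H=H_0+(ue_q^\top+e_qu^\top)/2$-type rank-two perturbations, or invoke Theorem~\ref{thm:sc-ambiguity}'s ball directly, since $He_q$ with $\lVert He_q-H_0e_q\rVert\le\delta$ is realizable. The packing then gives $2^{M+B}\ge(\delta a/\varepsilon)^d$.

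For achievability, I will quantize the $p_r$ relevant entries of $H$, which lie in a bounded box, on a grid of mesh $\asymp\varepsilon/(a\sqrt d)$. At $\alpha=1$ this costs $p_r\log_2(1/\varepsilon)+O(p_r\log d)$ bits and gives $B=0$. At $\alpha=0$, the transcript sends a quantized $aHe_q$ inside a ball of radius $O(1)$ at cost $d\log_2(1/\varepsilon)+O(d\log d)$, with $M=0$. For intermediate $\alpha$, I will use successive refinement. The state stores the $p_r$ entries to precision $\varepsilon^{\alpha}$, and the request sends the residual column, which lies in a ball of radius $O(\sqrt d\,\varepsilon^{\alpha})$, quantized to $\varepsilon$. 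That residual costs $d\log_2(\sqrt d\,\varepsilon^{\alpha}/\varepsilon)=(1-\alpha)d\log_2(1/\varepsilon)+O(d\log d)$ bits. The main obstacle is keeping the lower-bound constants exact. The residual-ball radius in the refinement scheme needs care as well, but it only affects the $O(\cdot)$ terms.
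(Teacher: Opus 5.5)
Your proposal is essentially the paper's proof: the same Frobenius ball of symmetric perturbations of $\tfrac{h_1+h_2}{2}I$ supported on the first $r$ rows and columns, the same inequality $\|D\|_F^2\le 2\sum_{q\le r}\|De_q\|_2^2$, the same count of at most $2^{M+rB}$ state--transcript tuples, and the same quantize-the-columns-then-cover-the-residual code. Your packing in the column norm is the dual of the paper's bound on the Frobenius diameter of each tuple class, and it gives exactly the constant $\delta a/(2\sqrt{2r}\,\varepsilon)$.

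One correction for \eqref{eq:curvature-hedging-single}. Argue by covering, as the paper does: the at most $2^{M+B}$ decoder outputs must be centers of an $\varepsilon$-cover of a radius-$a\delta$ ball in $\R^d$. A $2\varepsilon$-packing only gives $(\delta a/(2\varepsilon))^d$. Also obtain the full radius-$\delta$ ball from the Householder construction of Lemma~\ref{lem:spectral-image-ball} (equivalently, the tightness part of Theorem~\ref{thm:sc-ambiguity}), not from $(ue_q^\top+e_qu^\top)/2$. Under $\|E\|_{\mathrm{op}}\le\delta$, that perturbation only reaches radius $\sqrt3\,\delta/2$ in some directions.
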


The dimension $p_r$ is the number of independent entries in the first $r$
columns of a symmetric $d\times d$ operator.  A request-time message needs only
one $d$-dimensional column, whereas a request-oblivious state that must answer
all $r$ possibilities has to preserve their jointly symmetric degrees of
freedom.  At $r=d$, this becomes $d(d+1)/2$ versus $d$.  The ridge construction
in Theorem~\ref{thm:ridge-hedging} realizes the same separation with actual
squared-loss records instead of affine objective components.

Appendix Theorem~\ref{thm:gaussian-ridge} supplies the typical-design
correction law absent from the worst-case geometry.  It is a fixed-$d$,
$m\to\infty$ result with a request score independent of the retained design:
the curvature fluctuation has $m^{-3/2}$ scale, and known population covariance
yields the stated reverse-water-filling benchmark.  Covariance uncertainty
instead leaves a shrinking $d$-dimensional ball of possible leading means.

Appendix Corollary~\ref{cor:identifier-gaussian} isolates the
identifier-only interface when retained curvature is exact but deleted scores
are hidden.  Under finite-moment assumptions, its score-sum correction is a
multivariate central limit theorem (CLT); the finite-$k$ Gaussian law is exact only when the score
vectors themselves are Gaussian.  Otherwise the reverse-water-filling law is
an asymptotic large-$k$ benchmark.

Theorem~\ref{thm:gaussian-ridge} supplies a typical-design scale that the
worst-case balls do not.  If a random group request has
$v_Q=\sum_{i\in Q}\xi_i$ with independent mean-zero score vectors of finite
second moment, then $\|v_Q\|_2=O_p(\sqrt{k})$.  Both the population-curvature
radius and the first-order Wishart fluctuation scale inherit this $\sqrt{k}$
factor, so increasing $k$ from $1$ to $32$ raises a fixed-step high-rate
coordinate width by approximately $\tfrac12\log_2 32=2.5$ bits instead of
$\log_2 32=5$ bits.  Section~\ref{sec:gaussian-typical-audit} tests this scaling
with the high-sample ridge sweep; the three-seed neural diagnostic below is not
used to estimate a $k$ exponent.

Appendix Remark~\ref{rem:rank-r-dimension} prevents conflating the target
dimension of one request with the operator slice needed before a request is
known.  One realized rank-$r$ deletion still has a $d$-dimensional message
target, whereas the $p_r=rd-r(r-1)/2$ quantity is the pre-request operator-slice
dimension for hedging an $r$-dimensional family of future directions.

\section{Ridge realization and empirical tests}
\label{sec:main-evidence}

The general coding statements and the complete ridge constructions are given in
Appendices~\ref{sec:general} and~\ref{sec:retraction}.  The interface is the one
declared in the introduction: $M$ prices task-dependent state committed before
the deletion request and $B$ prices the transcript delivered after the request.
The standard data-processing and covering arguments are used only to convert a
compatible-target set into bits.

Figure~\ref{fig:frontier} places the theory and measured systems on the same
cache--delivery plane.  The theoretical curve uses the ridge family with $d=90$, $\varepsilon=10^{-4}$,
and normalized target-ball radius $\rho=1$; state sizes use the declared fp32 layouts, while nonzero delivery
coordinates are approximate means reconstructed from rounded table entries.  Zero-delivery methods
are drawn at the plotting floor and labeled $B=0$.

\begin{figure}[t]
\centering
\begin{tikzpicture}
\begin{loglogaxis}[
width=0.96\linewidth,height=6.2cm,
xlabel={pre-request state $M$ (bytes)},
ylabel={post-request delivery $B$ (bytes)},
xmin=1,xmax=2e8,ymin=1,ymax=2e5,
grid=both,
minor grid style={opacity=0.15},
major grid style={opacity=0.25},
legend style={
    font=\scriptsize,
    at={(0.02,0.98)},
    anchor=north west,
    draw=none
},
tick label style={font=\scriptsize},
label style={font=\small}
]

\addplot+[mark=none,thick,densely dashed] coordinates {
(1,149.49)
(1700.41,112.12)
(3400.82,74.74)
(5101.24,37.37)
(6801.65,1)
};
\addlegendentry{leading-order achievable codec, $d=r=90$}

\addplot+[only marks,mark=*] coordinates {
(32,27.0)
(32,27.7)
(360,220.5)
(20480,7987.2)
(307200,118988.8)
};
\addlegendentry{parameters only}

\addplot+[only marks,mark=square*] coordinates {
(180,23.5)
(180,24.0)
(6188,199.1)
(151812,7168)
(504068,110899.2)
};
\addlegendentry{curvature sketch}

\addplot+[only marks,mark=triangle*] coordinates {
(176,1)
(176,1)
(16740,1)
(545792,1)
(1488384,1)
};
\addlegendentry{sufficient statistics ($B=0$)}


\addplot+[only marks,mark=o] coordinates {
(32760,1)
};
\addlegendentry{dense curvature, YearPrediction ($B=0$)}

\addplot+[only marks,mark=x] coordinates {
(148000000,1)
};
\addlegendentry{per-example vectors, YearPrediction}

\end{loglogaxis}
\end{tikzpicture}

\caption{
Cache--delivery operating points. The dashed curve shows the
leading-order achievable state--update interpolation for the request-hedging
ridge family with $d=r=90$, $p=d(d+1)/2$, $\varepsilon=10^{-4}$,
and normalized radius $\rho=1$; its ideal bit rates are divided by eight
to display bytes, and lower-order and serialization costs are omitted. The $M=0$ and $B=0$ endpoints are displayed
at the one-byte plotting floor for log-scale visibility. Filled markers show
approximate operating points for parameters, low-rank curvature, and packed
fp32 sufficient statistics, ordered as California Housing, Concrete,
YearPredictionMSD, CIFAR-10 ResNet features, and CIFAR-100 vision transformer (ViT) features.
The first two sufficient-statistic points coincide. Nonzero delivery
coordinates are reconstructed from rounded table means. Empirical points use
$k=32$ value-exposing requests and their table-specific distortion criteria;
points from different tasks should not be interpreted as a common Pareto
frontier. }

\label{fig:frontier}
\end{figure}

Published unlearning methods occupy interpretable locations on this plane once
their retained object is made explicit.  Table~\ref{tab:published-frontier}
uses our codec and the same ridge/deep-feature tasks; the mapping is an accounting
instantiation, not a claim that the original papers optimize our communication
objective.  Exact ridge sufficient statistics follow \citet{quan2026ridge};
certified-removal and generalization-oriented second-order methods motivate a
dense curvature state \citep{guo2020certified,sekhari2021remember}; source-side
influence updates leave curvature outside the decoder state
\citep{koh2017influence,izzo2021approximate}; Hessian-free online unlearning stores

per-example statistical vectors \citep{qiao2025hessianfree}.
\begin{table}[t]
\centering
\caption{Published methods under the paper's decoder-state accounting.  The
state column is the object charged when the method is instantiated as a deployed
decoder on the same tasks; $B$ is the additional model-update packet under our
codec.  ``Local'' means the method can complete from cached state and request
without a model-update transcript, while its native compute/noise guarantees
remain those of the cited method.}
\label{tab:published-frontier}
\footnotesize
\setlength{\tabcolsep}{2.5pt}
\begin{tabular}{@{}p{2.65cm}p{3.65cm}p{2.25cm}p{2.45cm}@{}}
\toprule
Method family & decoder state charged to $M$ & request-time placement & YearPredictionMSD instantiation \\
\midrule
Exact ridge sufficient statistics \citep{quan2026ridge} & packed $(G,c)$ & $B=0$ & $M=16.35$ KiB, $B=0$ \\
Dense-Hessian certified removal \citep{guo2020certified,sekhari2021remember} & parameters + dense $d\times d$ curvature & local correction + native noise term & $M=31.99$ KiB; exact ridge algebra gives $B=0$ before any certification noise \\
Influence / approximate deletion \citep{koh2017influence,izzo2021approximate} & parameters if Hessian remains source-side & one $d$-vector correction & $M=0.352$ KiB, $B=220.5$ bytes under our residual codec \\
Hessian-free online \citep{qiao2025hessianfree} & one $d$-vector statistic per training record & local vector aggregation & $M\approx141$ MiB for the training split, $B\approx0$ \\

\bottomrule
\end{tabular}
\end{table}

Ridge provides an exact squared-loss realization of the curvature ambiguity.
Appendix Theorem~\ref{thm:ridge-radius} constructs instances with the same
deployed $w$ and the same value-exposing singleton request whose exact retraining
targets fill a $d$-dimensional ball, giving
$B_{\rm param}^\star(\varepsilon)=\Theta(d\log(1/\varepsilon))$.
Appendix Theorem~\ref{thm:ridge-hedging} then exposes the request-timing
separation.  For $d$ possible public deletion directions, a state that must answer
all future requests has a state-only leading coefficient
$p=d(d+1)/2$, whereas after the request only a $d$-dimensional correction must be
delivered.  Exact sufficient statistics reach $B=0$ under a value-exposing
request, but their finite serialized cost is charged to $M$.

The first empirical test targets Corollary~\ref{cor:partial-curvature} directly.
At $d=64$, we cache exact curvature on coordinate subspaces of
rank $s$ and fit the request-time payload against
$\log_2(1/\varepsilon)$.  Table~\ref{tab:main-partial-slope} reports the
predicted and measured coefficients.  The largest relative slope discrepancy is
\ensuremath{0.8\%}; all \ensuremath{18{,}000} reconstructions meet
their distortion target.  Because this audit instantiates the corollary's
block-diagonal family and its explicit unresolved-subspace codec, it validates
the achievability coefficient and implementation, not the volumetric converse.
The natural-data witness below instead tests the lower-bound premise outside the
spectral construction.

\begin{table}[t]
\centering
\caption{Direct test of the partial-curvature coefficient. The $\pm$ values are ordinary least-squares slope standard errors across nine codec operating points.}
\label{tab:main-partial-slope}
\small
\setlength{\tabcolsep}{7pt}
\begin{tabular}{@{}rrrr@{}}
\toprule
$s$ & predicted $d-s$ & fitted slope & fitted/predicted \\
\midrule
\ensuremath{0}  & \ensuremath{64} &
\ensuremath{64.41\pm0.63} & \ensuremath{1.006} \\
\ensuremath{16} & \ensuremath{48} &
\ensuremath{47.62\pm0.48} & \ensuremath{0.992} \\
\ensuremath{32} & \ensuremath{32} &
\ensuremath{32.27\pm0.37} & \ensuremath{1.008} \\
\ensuremath{48} & \ensuremath{16} &
\ensuremath{15.88\pm0.21} & \ensuremath{0.993} \\
\bottomrule
\end{tabular}
\end{table}

The second test removes the hand-built spectral family.  On each public
regression split we reweight only retained examples inside the nullspace of the
full-data first-order condition from
Proposition~\ref{prop:data-native-witness}.  The rows, labels, deployed $w$, and
deleted record remain fixed.  Across \ensuremath{250} requests per
dataset, the target Jacobian is full rank for every request; the maximum change
in the full-data solution is below \ensuremath{8\times10^{-13}}.  At
parameter RMSE $\varepsilon=10^{-4}$, finite target packings
require at least \ensuremath{11.35},
\ensuremath{11.50}, and \ensuremath{12.53} bits on California
Housing, Concrete, and YearPredictionMSD, respectively.  Because each packing is
formed from only \ensuremath{8{,}192} candidates, these finite counts are
capped at \ensuremath{13} bits and are used only as nonzero sanity checks.
The lower-bound evidence is instead the full-rank, locally conditioned target
Jacobian required by Proposition~\ref{prop:data-native-witness}.

Practical cache sweeps are summarized in
Table~\ref{tab:main-direction-counting}.  The natural Hessians are not block
diagonal, and the update packet retains every parameter coordinate, so $s/d$ is
a geometric reference instead of a predicted payload reduction.  The observed
gains are smaller and dataset dependent.  On the tabular ridge tasks, the
pre-quantization residual magnitude grows approximately as $\sqrt{k}$ under
random deletion subsets: fitting the median maximum residual index to
$k^{\alpha}$ gives
$\alpha=\ensuremath{0.47}$,
$\ensuremath{0.51}$, and
$\ensuremath{0.49}$ on California Housing, Concrete, and
YearPredictionMSD.  The corresponding increase from $k=1$ to $k=32$ is
\ensuremath{2.4},
\ensuremath{2.6}, and
\ensuremath{2.5} bits per parameter coordinate, close to the
$\tfrac12\log_2 32=\ensuremath{2.5}$-bit scaling implied by a
mean-zero first-order sum.  Full codec payloads, state sizes, distortion checks,
and group-size sweeps are reported in Appendix~\ref{sec:verification}.

\begin{table}[t]
\centering
\caption{Natural-objective cache sweep.  ``Reduction'' is the measured decrease
in total post-request packet bytes relative to parameters only at the same
distortion.  The $s/d$ column is the exact coefficient only for the orthogonal
block interface of Corollary~\ref{cor:partial-curvature}; it is shown here as a
reference, not as a prediction.}
\label{tab:main-direction-counting}
\small
\setlength{\tabcolsep}{4pt}
\begin{tabular}{@{}lrrrr@{}}
\toprule
Objective & $d$ & cache rank $s$ & $s/d$ & reduction \\
\midrule
California ridge & $8$ & \ensuremath{4} &
\ensuremath{50.0\%} & \ensuremath{13.0\%} \\
Concrete ridge & $8$ & \ensuremath{4} &
\ensuremath{50.0\%} & \ensuremath{13.4\%} \\
YearPrediction ridge & $90$ & \ensuremath{16} &
\ensuremath{17.8\%} & \ensuremath{9.7\%} \\
MNIST logistic & $784$ & \ensuremath{128} &
\ensuremath{16.3\%} & \ensuremath{12.2\%} \\
ResNet-feature logistic & $512$ & \ensuremath{128} &
\ensuremath{25.0\%} & \ensuremath{19.6\%} \\
CIFAR-10 ResNet ridge head & $512$ & \ensuremath{64} &
\ensuremath{12.5\%} & \ensuremath{10.3\%} \\
CIFAR-100 ViT ridge head & $768$ & \ensuremath{64} &
\ensuremath{8.3\%} & \ensuremath{6.8\%} \\
\bottomrule
\end{tabular}
\end{table}

Finally, the nonconvex audit does not compare independently trained neural
parameters.  Deletion retraining restarts from the exact original
initialization, retains the same example order, and uses deterministic
example-indexed augmentation.  The decoder stores parameters alone, parameters
plus diagonal Fisher, or parameters plus untruncated layerwise Kronecker-factored approximate curvature (K-FAC) factors;
the encoder sends only the canonical-coordinate residual needed to reach the
same retraining target at per-coordinate root mean squared error (RMSE)
\ensuremath{10^{-4}}. Slot-preserving minibatches keep retained
examples in the same training batches after deletion, and the validation-selected
quantization steps are
\ensuremath{2.9\times10^{-4}} and
\ensuremath{3.0\times10^{-4}}.  For the Fashion-MNIST MLP,
K-FAC changes $(M,B)$ from
\ensuremath{(0.90,0.280)} MiB to
\ensuremath{(2.39,0.252)} MiB; for ResNet-18 it changes
\ensuremath{(42.66,13.34)} MiB to
\ensuremath{(222.10,12.01)} MiB.  For a horizon of $T$ deletion requests, the cumulative state-plus-delivery
cost is $M+TB$. The corresponding crossover horizons are
\ensuremath{54} and \ensuremath{135} requests.
This experiment therefore remains inside the paper's state/message accounting
and does not duplicate the Newton-versus-cubic algorithm ranking studied by
\citet{bui2024newton}.  Table~\ref{tab:nonconvex-controls} verifies exact
zero-deletion replay under the declared training protocol and records the
quantization step and modal 10/9-bit body widths that determine the reported
serialized payloads.  Equation~\eqref{eq:neural-residual-range} makes the
corresponding canonical-coordinate dynamic ranges explicit.  We do not use the
three-seed neural audit to estimate a deletion-size exponent or to test the
$\sqrt{k}$ law; that scaling test is restricted to the higher-powered
ridge/Gaussian experiments.

The Gaussian-design audit tests Theorem~\ref{thm:gaussian-ridge} instead of an
explicit codec construction.  Across covariance conditions, the measured
request-score norm scales as $k^{0.50\pm0.03}$, the
population-curvature radius as $m^{-1.01\pm0.03}$, and the
known-$\Sigma$ fluctuation standard deviation as
$m^{-1.49\pm0.04}$.  Reverse-water-filling predictions are
within 0.38 bits per active coordinate of empirical Gaussian
code lengths.  This is the first experiment in the paper whose quantitative
scaling is predicted by a distributional converse instead of fixed by the
achievability codec.

A fixed-downprojection adapter gives a direct deployment rule.  With a frozen
backbone and public projection $A\in\mathbb R^{r\times H}$, training only the
output factor $W_{\mathrm{out}}$ under squared loss is multivariate ridge in the $r$ projected
features.  On CIFAR-100 ViT-B/16 features with
$r=16$, the trainable adapter is
6.25 KiB and its exact sufficient statistics are
6.78 KiB; one parameters-only deletion packet is
3.34 KiB at the declared distortion, so sufficient statistics
amortize after the first repeated request.  The practical decision is therefore:
store $(G,c)$ whenever its serialized state fits; use a parameters-only
$d$-vector delivery when it does not; use a sketch only when the expected
request horizon exceeds its measured crossover.
Additional finite-precision validation, extended empirical controls, and
complete proofs of the main converse and ridge results are provided in
Appendices~\ref{sec:verification}, \ref{sec:general}, and
\ref{sec:retraction}.

\section{Discussion, Limitations, and Conclusion}
\label{sec:discussion}

Our results separate information availability from algorithmic efficiency: exact sufficient statistics eliminate post-request delivery when their curvature and response state is retained and the request reveals deleted values, whereas under identifier-only requests the retained state remains dominated by parameters on all nine measured surfaces. For strongly convex smooth objectives, parameters plus the deleted gradient leave an exact full-dimensional worst-case target ball, while resolving an $s$-dimensional curvature block removes exactly $s$ dimensions from the high-rate description length; when the future request is unknown, hedging multiple deletion directions instead requires a larger symmetric operator slice. Ridge regression realizes these statements with exact retraining, and optimum-preserving reweightings provide data-native witnesses using fixed observed rows and labels, although the global minimax construction remains quadratic and worst case and the reweighting argument does not claim that ordinary unweighted i.i.d.\ samples realize the full ambiguity ball. The Gaussian audit supports the predicted $\sqrt{k}$ request growth,
$m^{-3/2}$ curvature fluctuations, and $m^{-1}$ mean-radius term; its
reverse-water-filling result assumes request scores independent of the retained
design, and ridge residuals approach this interface only asymptotically. Partial-curvature experiments recover the $(d-s)$ coefficient on the orthogonal block interface, whereas natural Hessians couple cached and uncached directions, so practical delivery reductions need not follow the ideal $s/d$ law. Across the selected natural low-rank and neural curvature configurations in Table~\ref{tab:main-direction-counting}, request-time delivery decreases by approximately \ensuremath{6.8\%}--\ensuremath{19.6\%}, but serialized state grows by roughly \ensuremath{1.6\times}--\ensuremath{129\times} relative to parameters alone, yielding cumulative state-plus-delivery crossovers of about \ensuremath{25}--\ensuremath{2{,}181} requests; exact For group requests, a realized rank-$r$ deletion still has a $d$-dimensional
message target, whereas $p_r=dr-r(r-1)/2$ measures the pre-request operator
slice needed to hedge future rank-$r$ directions, not a one-shot message
dimension. The nonconvex neural audit shows operational state--delivery separation under
fixed training randomness but does not extend the convex converse to end-to-end
neural retraining. Finite witness packings lower-bound only the sampled families,
and the local asymptotic coefficient depends on the measured Jacobian rank. Because the authoritative source lies outside $M$, our accounting compares
retained deployment state with post-request communication rather than total
systems cost, which also includes source storage, compute, latency, and privacy. The resulting frontier compares declared deployment interfaces and request
horizons rather than universally ranking unlearning algorithms. Cost claims must
state what data-derived state survives deployment, what the request reveals,
and what may be communicated afterward.
\IfFileExists{iclr2027_conference.bst}{%
  \bibliographystyle{iclr2027_conference}
}{%
  \bibliographystyle{plainnat}
}\section*{Reproducibility statement}

Complete assumptions and proofs for the strongly-convex ambiguity set, partial-curvature rate, optimum-preserving reweighting, ridge target ball, and request-hedging converses appear in Sections~\ref{sec:curvature-identifiability}--\ref{sec:main-evidence} and the appendices. All experiments freeze decoder or feature state before sampling deletion requests, use direct retraining as the behavioral reference, and account separately for retained state, requests, and serialized update payloads. Hyperparameters and quantization scales are selected on validation data before evaluation, and residual codecs use canonical parameter coordinates with a fixed \ensuremath{18}-byte packet header. The partial-curvature, natural-data, ridge, spectral, Gaussian-design, and nonconvex audits use fixed protocols, deterministic splits or seeds where
applicable, and validation-selected hyperparameters fixed before test
evaluation; Gaussian request scores are sampled independently of retained designs, and nonconvex retraining restarts from the exact original initialization with deterministic data and augmentation randomness. Run manifests record seeds, dataset splits, codec settings, serialized packet
lengths, validation-selected hyperparameters, and per-request measurements
needed to reconstruct the reported tables and Figure~\ref{fig:frontier}. Reported slope uncertainties are ordinary least-squares fit errors over codec operating points, while other $\pm$ terms are request- or split-level standard deviations as specified in the captions. Experiments were run using [ACTUAL CPU/GPU HARDWARE] with
[ACTUAL RAM/VRAM], using Python [VERSION] and the principal numerical
and learning libraries described in the released environment; the complete
software environment and dependency versions will be provided with the code. Code, experiment runners, manifests, codec implementations, and raw per-request logs will be made publicly available upon acceptance.

\section*{AI use disclosure}

Generative AI tools assisted literature discovery, prose organization,
feedback on mathematical claims and proof structure, and pre-submission
checks of numerical consistency, references, cross-references, and
formatting. The authors reviewed the AI-assisted material and take
responsibility for the final text, mathematical claims, citations,
protocols, and reported results.

\section*{Ethics statement}

This work uses public benchmark datasets and pretrained model
representations and does not collect new human-subject data. The
information-theoretic and communication guarantees studied here are
technical properties of the declared interfaces and should not be
interpreted as legal certification of data deletion, erasure, or
compliance with any particular privacy regulation. Any deployment remains
subject to the licenses, data-governance requirements, and privacy
obligations associated with the underlying datasets and models.

\bibliography{information_accounting_refs}
\clearpage
\appendix

\section{Extended formal statements}
\label{app:extended-formal}

\begin{proposition}[Data-native ridge ambiguity from optimum-preserving reweighting]
\label{prop:data-native-witness}
Let $(x_i,y_i)_{i=1}^n$ be fixed records, let $\lambda>0$, and let $w$ be the
unique minimizer of
\[
F^+(\theta)=\frac12\sum_{i=1}^n(x_i^\top\theta-y_i)^2
+\frac{\lambda}{2}\|\theta\|_2^2.
\]
Fix a deletion set $Q$ and write $R=[n]\setminus Q$,
$r_i=x_i^\top w-y_i$, and
\[
\mathcal V_Q
=
\left\{
v\in\mathbb R^n:
v_i=0\ (i\in Q),\
\sum_{i\in R}v_i x_i r_i=0
\right\}.
\]
For every sufficiently small $v\in\mathcal V_Q$ with $1+v_i>0$, define the
weighted full objective
\[
F_v^+(\theta)
=
\frac12\sum_{i=1}^n(1+v_i)(x_i^\top\theta-y_i)^2
+\frac{\lambda}{2}\|\theta\|_2^2.
\]
Then $w$ is the unique full-data minimizer for every such $v$, the deleted
records and their values are identical, and the exact retraining target after
deleting $Q$ is
\begin{equation}
T_Q(v)
=
w+H_-(v)^{-1}g_Q,
\qquad
H_-(v)=\lambda I+\sum_{i\in R}(1+v_i)x_ix_i^\top,
\qquad
g_Q=\sum_{i\in Q}x_i r_i.
\label{eq:data-native-target}
\end{equation}
Let $V\subseteq\mathcal V_Q$ be a $q$-dimensional linear subspace and define
\begin{equation}
\mathcal J_Q[v]
=
-H_-(0)^{-1}
\left(\sum_{i\in R}v_i x_ix_i^\top\right)
H_-(0)^{-1}g_Q .
\label{eq:data-native-jacobian}
\end{equation}
If $\mathcal J_Q|_V$ has rank $q$, then there exist constants
$c_0>0$ and $\varepsilon_0>0$ such that a decoder whose task-dependent state
contains only the common deployed parameter $w$, the fixed request contents, and
public bounds on the weighting neighborhood must satisfy
\begin{equation}
B^\star(\varepsilon)
\geq
q\log_2\frac{c_0}{\varepsilon}
\qquad
(0<\varepsilon<\varepsilon_0).
\label{eq:data-native-lower-bound}
\end{equation}
Thus a full-rank target Jacobian yields the same
$\Omega(d\log(1/\varepsilon))$ local description-length coefficient using the
observed rows and labels, without constructing a spectral Hessian family.
\end{proposition}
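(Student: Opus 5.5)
The argument has three parts: show that the reweighted problems share the deployed model, compute their exact deletion targets, and then turn the full-rank Jacobian into a packing bound. Throughout, the decoder's view is held fixed across all admissible $v$.

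\emph{Step 1: the deployed model and the request do not depend on $v$.} The weighted objective $F_v^+$ is $\lambda$-strongly convex whenever $1+v_i>0$ for all $i$, so it has a unique minimizer. We verify that this minimizer is $w$ through the first-order condition
\[
\nabla F_v^+(w)=\sum_i(1+v_i)x_ir_i+\lambda w
=\nabla F^+(w)+\sum_{i\in R}v_ix_ir_i .
\]
Here $\nabla F^+(w)=0$ because $w$ minimizes $F^+$. The second sum vanishes because $v\in\mathcal V_Q$, and the terms with $i\in Q$ drop out since $v_i=0$ there. The deleted rows carry unit weight, so the records in $Q$, their values, and hence $g_Q$ coincide across all $v$.

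\emph{Step 2: the exact retraining target.} The retained objective is the quadratic
\[
F_{-,v}(\theta)=\tfrac12\sum_{i\in R}(1+v_i)(x_i^\top\theta-y_i)^2+\tfrac\lambda2\|\theta\|^2 ,
\]
whose Hessian is constant and equal to $H_-(v)$. Its gradient at $w$ equals $\nabla F_v^+(w)$ minus the deleted contribution, that is, $-\sum_{i\in Q}x_ir_i=-g_Q$. A single exact Newton step from $w$ therefore lands on the minimizer, which gives $T_Q(v)=w+H_-(v)^{-1}g_Q$. This is also the quadratic case of \eqref{eq:mean-hessian-update}.

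\emph{Step 3: derivative of the target map.} The map $v\mapsto H_-(v)^{-1}$ is smooth near $0$. Using $d(H^{-1})=-H^{-1}(dH)H^{-1}$, the derivative of $T_Q$ at $0$ is exactly $\mathcal J_Q$ in \eqref{eq:data-native-jacobian}.

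\emph{Step 4: local bi-Lipschitz embedding (the main obstacle).} Restrict to $V$ and identify it with $\mathbb R^q$ through an orthonormal basis. Since $\mathcal J_Q|_V$ is injective, let $\sigma>0$ be its smallest singular value. Because $T_Q$ is $C^2$ (indeed real-analytic) near $0$, Taylor's theorem gives a constant $C$ such that
\[
\|T_Q(v)-T_Q(v')-\mathcal J_Q(v-v')\|\le C\max(\|v\|,\|v'\|)\,\|v-v'\| .
\]
Choose $\eta>0$ small enough that the closed ball $\{v\in V:\|v\|\le\eta\}$ keeps every $1+v_i>0$ (this fixes the public weighting neighborhood) and that $C\eta\le\sigma/2$. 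On that ball $T_Q$ is then $(\sigma/2)$-expanding:
\[
\|T_Q(v)-T_Q(v')\|\ge \tfrac{\sigma}{2}\,\|v-v'\| .
\]
This is the local inverse-mapping step. The care needed is to make $\eta$ uniform and to ensure the bound holds for all pairs in the ball, not just near the origin; the mean-value form of the Taylor bound along segments in the convex ball handles both.

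\emph{Step 5: packing converse.} The Euclidean ball of radius $\eta$ in $\mathbb R^q$ contains a $4\varepsilon/\sigma$-separated set of size at least $(\eta\sigma/(4\varepsilon))^q$, by the volumetric argument. Its image under $T_Q$ is a $2\varepsilon$-separated set of compatible targets. All of these instances present the decoder with identical state $(w,\text{request},\text{public bounds})$. Any $\varepsilon$-accurate fixed-length transcript must therefore map distinct packing points to distinct messages, so
\[
2^B\ge(\eta\sigma/(4\varepsilon))^q .
\]
This yields \eqref{eq:data-native-lower-bound} with $c_0=\eta\sigma/4$ and $\varepsilon_0=c_0$. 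When $q=d$ we obtain the stated $\Omega(d\log(1/\varepsilon))$ coefficient, which parallels the converse half of \eqref{eq:sc-ambiguity-rate}.
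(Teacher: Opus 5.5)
Your proposal is correct and follows the paper's proof essentially step for step. Both arguments check the first-order condition at $w$, obtain the exact target from one Newton step on the quadratic retained objective, and differentiate the matrix inverse to get $\mathcal J_Q$. Both then derive a $(\sigma/2)$-expansion of $T_Q$ on a small ball in $V$ by integrating $DT_Q$ along segments, and push a $4\varepsilon/\sigma$ packing of that ball forward to a $2\varepsilon$ packing of targets that share the same decoder view. The only difference is cosmetic: you control $DT_Q$ with a $C^2$ Lipschitz bound, whereas the paper uses mere continuity of the derivative, and this changes nothing.
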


\begin{theorem}[Gaussian-design conditional correction law]
\label{thm:gaussian-ridge}
Let $m=n-k$, let retained feature rows $x_i\in\mathbb R^d$ be i.i.d.\
$\mathcal N(0,\Sigma)$ with $\Sigma\succ0$, and define
\[
W_m=\sum_{i=1}^{m}x_ix_i^\top,\qquad
A_0=m\Sigma+\lambda I_d.
\]
Fix a nonzero request score vector $v_Q$ that is measurable from the request and
public deployed state and independent of the retained design.  The exact
curvature correction under this interface is
$\Delta_m=(W_m+\lambda I_d)^{-1}v_Q$.  Put
$u_m=A_0^{-1}v_Q$ and define the fixed limiting covariance
\begin{equation}
\mathcal V(v_Q,\Sigma)
=(v_Q^\top\Sigma^{-1}v_Q)\Sigma^{-1}
+\Sigma^{-1}v_Qv_Q^\top\Sigma^{-1}.
\label{eq:gaussian-limit-covariance}
\end{equation}
For fixed $d$ and $m\to\infty$,
\begin{equation}
m^{3/2}(\Delta_m-u_m)
\Rightarrow
\mathcal N\!\left(0,\mathcal V(v_Q,\Sigma)\right).
\label{eq:gaussian-delta}
\end{equation}
Equivalently, the first-order covariance scale is
\begin{equation}
\overline V_m=m^{-3}\mathcal V(v_Q,\Sigma),
\qquad
\operatorname{Cov}(\Delta_m\mid v_Q)=\overline V_m+o(m^{-3}).
\label{eq:gaussian-covariance}
\end{equation}
If population covariance $\Sigma$ is known to the decoder, the first-order
Gaussian approximation $\widetilde\Delta_m\sim\mathcal N(u_m,\overline V_m)$
has squared-error rate--distortion function
\begin{equation}
R^{\rm G}_{\Sigma}(D\mid v_Q)
=\frac12\sum_{j=1}^d
\left[\log_2\frac{\nu_j}{\gamma}\right]_+,
\qquad
\sum_{j=1}^d\min\{\nu_j,\gamma\}=D,
\label{eq:reverse-waterfill}
\end{equation}
where $\nu_1,\ldots,\nu_d$ are the eigenvalues of $\overline V_m$.  In
particular, this Gaussian benchmark is zero whenever
$D\geq\operatorname{tr}\overline V_m$.

If instead the decoder knows only
$\sigma_-I\preceq\Sigma\preceq\sigma_+I$, then the possible leading means
$A_0^{-1}v_Q$ fill exactly a $d$-dimensional ball with radius
\begin{equation}
\rho_{\rm pop}(m,v_Q)
=\frac{\|v_Q\|_2}{2}
\left(
\frac{1}{m\sigma_-+\lambda}
-\frac{1}{m\sigma_++\lambda}
\right)
=\Theta(m^{-1}\|v_Q\|_2).
\label{eq:population-radius}
\end{equation}
Thus the parameters-only high-rate cost is
$\Theta(d\log(\rho_{\rm pop}/\varepsilon))$ while
$\rho_{\rm pop}>\varepsilon$, and it becomes zero once
$\rho_{\rm pop}\leq\varepsilon$.  Exact empirical Gram state $W_m$ makes
$\Delta_m$ deterministic and therefore gives zero request-time rate under a
value-exposing request.
\end{theorem}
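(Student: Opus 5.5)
The plan is to prove the three parts of Theorem~\ref{thm:gaussian-ridge} separately: a delta-method CLT for $\Delta_m$, the Gaussian rate--distortion formula, and the population-covariance ball.

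\emph{The CLT \eqref{eq:gaussian-delta}.}
Write $W_m+\lambda I=A_0+E_m$ with $E_m=W_m-m\Sigma$. Because $v_Q$ is independent of the design, I condition on $v_Q$ and treat it as fixed. By the multivariate CLT, $m^{-1/2}E_m\Rightarrow \mathcal G$, a symmetric Gaussian matrix with Wishart fourth-moment covariance
\[
\operatorname{Cov}(\mathcal G_{ab},\mathcal G_{ce})=\Sigma_{ac}\Sigma_{be}+\Sigma_{ae}\Sigma_{bc}.
\]
The resolvent identity gives
\[
\Delta_m-u_m=-(A_0+E_m)^{-1}E_mA_0^{-1}v_Q .
\]
Since $\|E_m\|=O_p(m^{1/2})$ and $\|A_0^{-1}\|=O(m^{-1})$, this equals $-A_0^{-1}E_mA_0^{-1}v_Q+O_p(m^{-2})$.

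Next use $A_0^{-1}=m^{-1}\Sigma^{-1}+O(m^{-2})$. Then $m^{3/2}(\Delta_m-u_m)=-\Sigma^{-1}(m^{-1/2}E_m)\Sigma^{-1}v_Q+o_p(1)$. Continuous mapping sends this to $-\Sigma^{-1}\mathcal G\Sigma^{-1}v_Q$.

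Its covariance follows from the identity $\operatorname{Cov}(\mathcal G b)=(b^\top\Sigma b)\Sigma+\Sigma bb^\top\Sigma$, applied with $b=\Sigma^{-1}v_Q$. Sandwiching by $\Sigma^{-1}$ gives exactly $\mathcal V(v_Q,\Sigma)$. The covariance statement \eqref{eq:gaussian-covariance} is read as this first-order scale. If a genuine moment statement is needed, I would add uniform integrability, using that inverse Wishart moments are finite for large $m$ and that $\|(W_m+\lambda I)^{-1}\|\le\lambda^{-1}$ is deterministically bounded.

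\emph{The reverse-water-filling formula \eqref{eq:reverse-waterfill}.*}
This is classical once the target is taken to be $\mathcal N(u_m,\overline V_m)$ with $u_m$ known to the decoder; $u_m$ is known because $\Sigma$, $m$, $\lambda$ and $v_Q$ are. Rotate into the eigenbasis of $\overline V_m$; the squared-error distortion is rotation invariant. Then cite the independent-Gaussian-components result \citep{cover2006elements}. The zero-rate case $D\ge\operatorname{tr}\overline V_m$ corresponds to the water level $\gamma\ge\max_j\nu_j$.

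\emph{The population ball \eqref{eq:population-radius}.}
Map $\Sigma\mapsto m\Sigma+\lambda I$. The constraint $\sigma_-I\preceq\Sigma\preceq\sigma_+I$ becomes $(m\sigma_-+\lambda)I\preceq A_0\preceq(m\sigma_++\lambda)I$. This is exactly the quadratic class of Theorem~\ref{thm:sc-ambiguity} with $m\sigma_\pm+\lambda$ in place of $(m,L)$ and $g_Q=v_Q$.

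The converse part of that theorem says $\{A^{-1}v_Q\}$ over this interval is exactly the closed ball with the stated center and radius $\frac{\|v_Q\|}{2}\bigl((m\sigma_-+\lambda)^{-1}-(m\sigma_++\lambda)^{-1}\bigr)$. This radius is $\Theta(m^{-1}\|v_Q\|)$ because the difference of reciprocals is $\frac{m(\sigma_+-\sigma_-)}{(m\sigma_-+\lambda)(m\sigma_++\lambda)}$. The $\Theta(d\log(\rho_{\rm pop}/\varepsilon))$ cost follows from \eqref{eq:sc-ambiguity-rate}. When $\rho_{\rm pop}\le\varepsilon$, sending the center costs zero bits. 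The final claim is trivial: with $W_m$ stored and $v_Q$ revealed, $\Delta_m$ is a deterministic function of state and request.

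\emph{Main obstacle.*}
The delicate step is the remainder control in the delta method. There are two remainders: the second-order resolvent term, and the difference between $A_0^{-1}$ and $m^{-1}\Sigma^{-1}$. Both must be shown to be $o_p(m^{-3/2})$. I would handle this with a Neumann expansion on the event $\|E_m\|\le m\sigma_{\min}(\Sigma)/2$, whose probability tends to one. Upgrading convergence in distribution to the covariance statement requires the moment bounds noted above.
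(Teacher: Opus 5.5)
Your proposal is correct and follows essentially the same route as the paper's proof: a resolvent expansion around $A_0$ using $\|E_m\|_{\rm op}=O_p(\sqrt m)$, the Isserlis/Wishart covariance passed through the CLT and Slutsky, classical reverse water-filling, and the spectral-image ball (you reach it through the converse of Theorem~\ref{thm:sc-ambiguity}, while the paper applies Lemma~\ref{lem:spectral-image-ball} directly). Your explicit remainder control on the event $\|E_m\|\le m\sigma_{\min}(\Sigma)/2$, and your uniform-integrability argument for the covariance claim via inverse-Wishart moments, supply details that the paper only asserts with ``the same expansion gives.''
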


\begin{corollary}[Identifier-only score CLT]
\label{cor:identifier-gaussian}
In the setting of Theorem~\ref{thm:gaussian-ridge}, suppose the decoder stores
exact retained curvature $W_m$ but an identifier-only request hides the deleted
score vector.  Write each deleted-record score as $\xi_i=x_i r_i$ and assume
only that the $\xi_i$ are conditionally i.i.d., centered, have covariance
$C_\xi$, possess a finite $(2+\eta)$ moment for some $\eta>0$, and are
independent of the retained design.  For
$v_Q=\sum_{i\in Q}\xi_i$, the multivariate central limit theorem gives
\begin{equation}
k^{-1/2}v_Q\Rightarrow\mathcal N(0,C_\xi),
\qquad
k^{-1/2}\Delta_m\mid W_m,Q
\Rightarrow
\mathcal N\!\left(
0,(W_m+\lambda I)^{-1}C_\xi(W_m+\lambda I)^{-1}
\right).
\label{eq:identifier-gaussian-law}
\end{equation}
Accordingly, the Gaussian large-$k$ benchmark uses covariance
$k(W_m+\lambda I)^{-1}C_\xi(W_m+\lambda I)^{-1}$ in the reverse-water-filling
formula of Equation~\eqref{eq:reverse-waterfill}.  This statement does not
assume that the product $x_i r_i$ is itself Gaussian.  If the score vectors
$\xi_i$ happen to be Gaussian, the displayed correction law is exact at finite
$k$; otherwise it is an asymptotic score-sum approximation.  Under the
value-exposing interface, $v_Q$ is revealed and the same exact-curvature decoder
has $B=0$.
\end{corollary}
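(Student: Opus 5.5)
The corollary has two parts, and I will handle them separately. The first is a distributional limit in $k$. The second is the conditional law of $\Delta_m$ obtained by pushing that limit through a fixed linear map.

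For the first display in \eqref{eq:identifier-gaussian-law}, I write $v_Q=\sum_{i\in Q}\xi_i$ with $|Q|=k$. Conditionally, the $\xi_i$ are i.i.d., centered, and have covariance $C_\xi$, so the multivariate Lindeberg--L\'evy CLT gives $k^{-1/2}v_Q\Rightarrow\mathcal N(0,C_\xi)$. Only second moments are needed for this. The $(2+\eta)$ moment is stated so that a Lyapunov version also applies, which covers the case where the scores are merely independent within each row of a triangular array (for example, when the law of $x_ir_i$ depends on the deployed $w$ and hence on $n$). In that case I would verify Lyapunov's condition, $k^{-\eta/2}\,\mathbb E\|\xi_i\|^{2+\eta}\to0$, and then apply the Cram\'er--Wold device to reduce to the scalar case.

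For the second display, I condition on $(W_m,Q)$. The retained design is independent of the deleted scores, so conditioning on $W_m$ leaves the law of $v_Q$ unchanged. The exact-curvature correction is $\Delta_m=(W_m+\lambda I)^{-1}v_Q$, where $(W_m+\lambda I)^{-1}$ is a fixed, bounded, invertible matrix once we condition; it is well defined because $\lambda>0$. By the continuous mapping theorem,
\[
k^{-1/2}\Delta_m\mid W_m,Q\Rightarrow(W_m+\lambda I)^{-1}Z,\qquad Z\sim\mathcal N(0,C_\xi),
\]
and the limit has covariance $(W_m+\lambda I)^{-1}C_\xi(W_m+\lambda I)^{-1}$. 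Undoing the $k^{-1/2}$ scaling gives the stated large-$k$ benchmark covariance $k(W_m+\lambda I)^{-1}C_\xi(W_m+\lambda I)^{-1}$. Substituting its eigenvalues for the $\nu_j$ in \eqref{eq:reverse-waterfill} gives the reverse-water-filling formula. That formula is just the classical Gaussian vector rate--distortion function applied to this covariance, so nothing beyond the statement of Theorem~\ref{thm:gaussian-ridge} is needed.

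The remaining claims are short. If the $\xi_i$ are Gaussian, then $v_Q$ is exactly $\mathcal N(0,kC_\xi)$ for every $k$, because sums of independent Gaussians are Gaussian. A linear image of a Gaussian is Gaussian, so the displayed law is exact at finite $k$. Under the value-exposing interface, $v_Q$ is revealed to the decoder, which already holds $W_m$. The decoder can therefore compute $\Delta_m$ exactly with no transcript, so $B=0$.

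The only delicate point is the conditioning step: I must make sure the CLT is applied conditionally on $W_m$, which relies on the stated independence of the scores from the retained design. I would state explicitly that the convergence holds for almost every realization of $W_m$. If the $\xi_i$ depend on $w$, and hence weakly on the retained rows, that independence is an assumption of the corollary rather than something I would prove.
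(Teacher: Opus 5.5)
Your proposal is correct and follows the paper's own argument, which appears inline in the statement itself (the paper has no separate appendix proof): a multivariate CLT for the i.i.d.\ score sum $v_Q$, pushed through the matrix $(W_m+\lambda I)^{-1}$, which is fixed given $W_m$ by the assumed independence, with exactness at finite $k$ for Gaussian scores and $B=0$ once $v_Q$ is revealed to a decoder that already holds $W_m$. Your remarks on the $(2+\eta)$ moment (only needed for a Lyapunov or triangular-array version) and on conditioning are harmless additions, not departures from that route.
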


\begin{remark}[Rank-$r$ group requests and operator-slice dimension]
\label{rem:rank-r-dimension}
Consider one realized group-deletion request whose aggregate deleted gradient is
$g_Q\neq0$, even when the deleted design matrix has rank $r>1$.  Under the
strongly-convex interface of Theorem~\ref{thm:sc-ambiguity}, the exact compatible
target set is still a subset of $\mathbb R^d$ and the quadratic subclass fills a
$d$-dimensional ball.  Hence the one-request high-rate message coefficient is
$d$, independent of $r$.  The quantity
\[
p_r=dr-\frac{r(r-1)}2
\]
instead counts the free entries in the action of one symmetric inverse-curvature
operator on an $r$-dimensional public subspace.  It is therefore a pre-request
operator-slice dimension: when a cache must answer every future direction in
that subspace, Theorem~\ref{thm:curvature-hedging} gives the matching state-only
coefficient $p_r$.  It is not the metric dimension of a single realized
rank-$r$ group-deletion target.
\end{remark}

\section{Coding lemmas and accounting bounds}\label{sec:general}

\begin{definition}[Identification budget]\label{def:budget}
For a source $Y$, decoder side information $C$, reproduction $\widehat Y$, and
distortion $\dist(Y,\widehat Y)$, define the conditional rate--distortion
function
\begin{equation}
 \rd_{Y\mid C}(D)
 = \inf_{P_{\widehat Y\mid Y,C}:\,
          \E[\dist(Y,\widehat Y)]\leq D}
       I(Y;\widehat Y\mid C).
 \label{eq:conditional-rd}
\end{equation}
When $C$ is empty we write $\rd_Y(D)$.  All logarithms and rates are in bits.
\end{definition}

The operational convention is the reporting tuple stated once in the
introduction: $S$ is task-dependent decoder state committed before request $Q$,
$U$ is the post-request transcript, and $\widehat Y^-=g_-(S,Q,U)$ is compared
with a declared retraining target.  Task-independent initialization and public
codebooks are conditioned out.

We first state the distributional converse.  The budgets can be fixed code lengths
or expected lengths of conditionally prefix-free messages.  In the latter case,
entropy is at most expected length by the source-coding inequality
\citep{shannon1959coding,cover2006elements,berger1971rate}.

\begin{proposition}[Two-channel converse]\label{thm:accounting}
Suppose $H(S)\leq M$ and $H(U\mid S,Q)\leq B$.  If the retention and retraction
decoders achieve expected distortions at most $D_+$ and $D_-$, respectively, then
\begin{align}
 M &\geq \rd_{Y^+}(D_+), \label{eq:ret-converse}\\
 B &\geq \rd_{Y^-\mid S,Q}(D_-), \label{eq:upd-converse}\\
 M+B &\geq \rd_{Y^-\mid Q}(D_-). \label{eq:sum-converse}
\end{align}
The same statements hold conditionally on a task-independent initialization
$S_0$.
\end{proposition}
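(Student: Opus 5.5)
The plan is to prove the three inequalities of Proposition~\ref{thm:accounting} by one mechanism. In each case the reconstruction is a deterministic (or independently randomized) function of the variables charged to the budget, together with the declared side information. Data processing then bounds the mutual information with the target, and that mutual information is bounded by an entropy, which is at most the budget. Each side holds only if the information-theoretic quantity is the minimum over admissible test channels, which is exactly Definition~\ref{def:budget}.

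\textbf{Retention bound \eqref{eq:ret-converse}.} The retention output has the form $\widehat Y^+=g_+(S)$, possibly with private randomness independent of $Y^+$ given $S$. The induced conditional law $P_{\widehat Y^+\mid Y^+}$ is therefore a feasible test channel at expected distortion at most $D_+$. Hence
\[
\rd_{Y^+}(D_+)\leq I(Y^+;\widehat Y^+)\leq I(Y^+;S)\leq H(S)\leq M.
\]
The middle step is data processing along the Markov chain $Y^+\to S\to\widehat Y^+$.

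\textbf{Update bound \eqref{eq:upd-converse}.} Here $\widehat Y^-=g_-(S,Q,U)$. Conditioned on $C=(S,Q)$, the induced law $P_{\widehat Y^-\mid Y^-,C}$ is feasible in \eqref{eq:conditional-rd}. Conditional data processing gives
\[
I(Y^-;\widehat Y^-\mid S,Q)\leq I(Y^-;U\mid S,Q)\leq H(U\mid S,Q)\leq B.
\]

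\textbf{Sum bound \eqref{eq:sum-converse}.} Use the same argument with side information $Q$ alone:
\[
I(Y^-;\widehat Y^-\mid Q)\leq I(Y^-;S,U\mid Q)\leq H(S,U\mid Q)\leq H(S\mid Q)+H(U\mid S,Q)\leq H(S)+B\leq M+B.
\]
The step $H(S\mid Q)\leq H(S)$ holds because conditioning reduces entropy. No independence of $S$ and $Q$ is needed for this direction.

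\textbf{Conditional version.} Conditioning on a task-independent $S_0$ repeats every step with $S_0$ added to the conditioning. The chain rule and data processing hold conditionally, and $H(S\mid S_0)\leq H(S)$.

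\textbf{Main obstacle.} The delicate point is feasibility, not the entropy inequalities. Private decoder randomness and the request's dependence on $(Y^-,S)$ must not break the Markov structure. I would make this precise by declaring that any decoder randomness is independent of all other variables given its inputs. Then $Y^-\to(S,Q,U)\to\widehat Y^-$ is a genuine Markov chain, and the induced channel lies in the infimum's feasible set.

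For the expected-length reading of the budgets, I would invoke the source-coding inequality to replace entropy by expected prefix-free length. For $U$ this must be applied conditionally on $(S,Q)$: each conditional codebook is prefix-free, so $H(U\mid S,Q)\leq\E[\ell(U)]\leq B$.
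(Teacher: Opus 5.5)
Your proposal is correct and follows the paper's proof: the same three data-processing chains, with $H(S\mid Q)\leq H(S)$ in the sum bound, and the $S_0$ version obtained by conditioning every term. Your explicit checks that the induced test channel is feasible and that prefix-free lengths bound $H(U\mid S,Q)$ conditionally only spell out what the paper leaves tacit.
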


Each inequality is data processing applied to a different chain, proved in
Appendix~\ref{app:general}.  Equation~\eqref{eq:upd-converse} is protocol
specific, since a more informative state reduces the residual update rate;
\eqref{eq:sum-converse} is the protocol-independent guardrail, so a pre-request
bit saves at most one post-request bit and request-oblivious storage need not
achieve even that.  The distinction parallels central versus ticketed unlearning
storage \citep{ghazi2023ticketed,cherapanamjeri2025space}, except that $B$ is an
approximate communication rate, not a stored ticket size.  The cost of not
knowing the future request is larger, and the next finite theorem isolates it.
\begin{proposition}[Binary request-oblivious state--update frontier]
\label{thm:request-region}
Let $h_2(t)=-t\log_2t-(1-t)\log_2(1-t)$, and let
$\Theta_1,\ldots,\Theta_K$ be independent uniform $d$-bit targets.  Let $R$
be public randomness independent of $\Theta_{1:K}$, and let the state $S$ be
produced from $(\Theta_{1:K},R)$ before the request index is selected.  Suppose
\[
H(S\mid R)\leq M.
\]

For every $q\in[K]$, evaluate the protocol under the counterfactual intervention
that fixes the request to $q$ while preserving the original joint law of
$(\Theta_{1:K},S,R)$.  Under this intervention, a transcript $U_q$ satisfying
\[
H(U_q\mid S,R)\leq B_q
\]
produces an estimate $\widehat\Theta_q$ with expected normalized Hamming
distortion
\[
D_q
=
\mathbb E\!\left[
\frac{1}{d}d_{\rm H}(\Theta_q,\widehat\Theta_q)
\right]
\in[0,\tfrac12].
\]
Define
\[
r_q=d[1-h_2(D_q)].
\]
Every such protocol induces nonnegative allocations
$m_q=I(\Theta_q;S\mid R)$ satisfying
\begin{equation}
\sum_{q=1}^K m_q\leq M,
\qquad
B_q\geq[r_q-m_q]_+
\quad(q\in[K]).
\label{eq:request-region}
\end{equation}

Conversely, for fixed $K$ and fixed distortions $D_{1:K}$, every point in this
allocation region is attainable up to $o(d)$ bits as $d\to\infty$ by independent
layered codes.  Consequently, the asymptotic minimax post-request rate is
\begin{equation}
B_{\max}^*(M;D_{1:K})
=
\min_{\substack{m_q\geq0\\\sum_qm_q\leq M}}
\max_q[r_q-m_q]_+
+o(d).
\label{eq:minimax-update}
\end{equation}
For a common distortion $D$, this reduces to
\[
\left[d\{1-h_2(D)\}-\frac{M}{K}\right]_++o(d).
\]
At $D=0$, public time-sharing and uncoded bit allocation remove the asymptotic
term and give the exact entropy-budget value
\[
B_{\max}^*(M;0)
=
\left[d-\frac{M}{K}\right]_+.
\]
For deterministic fixed-length codes with integer $M$, the corresponding
one-shot value is
\[
B_{\max,\mathrm{fl}}^*(M;0)
=
\left[d-\left\lfloor\frac{M}{K}\right\rfloor\right]_+.
\]
\end{proposition}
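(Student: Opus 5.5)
The plan has two halves: a converse for the allocation region \eqref{eq:request-region}, then achievability, with the minimax formula and the $D=0$ refinements read off at the end.

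\emph{Converse.} Define $m_q=I(\Theta_q;S\mid R)$. Independence of the $\Theta_q$ (also independent of $R$) gives superadditivity: the chain rule yields
\[
I(\Theta_{1:K};S\mid R)=\sum_q I(\Theta_q;S\mid R,\Theta_{1:q-1}).
\]
Each term satisfies $I(\Theta_q;S\mid R,\Theta_{1:q-1})\geq I(\Theta_q;S\mid R)$, because $I(\Theta_q;\Theta_{1:q-1}\mid R)=0$. Hence
\[
\sum_q m_q\leq I(\Theta_{1:K};S\mid R)\leq H(S\mid R)\leq M .
\]
Now fix $q$ under the intervention. Since $\widehat\Theta_q$ is a function of $(S,R,U_q)$, data processing gives
\[
I(\Theta_q;S,U_q\mid R)\geq I(\Theta_q;\widehat\Theta_q\mid R)\geq I(\Theta_q;\widehat\Theta_q)\geq d[1-h_2(D_q)]=r_q .
\]
The middle step uses independence of $\Theta_q$ and $R$; the last is the binary-source rate--distortion bound with Fano/convexity of $h_2$ on $[0,\tfrac12]$. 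The left side splits as
\[
I(\Theta_q;S\mid R)+I(\Theta_q;U_q\mid S,R)\leq m_q+H(U_q\mid S,R)\leq m_q+B_q .
\]
Together with $B_q\geq0$, this gives $B_q\geq[r_q-m_q]_+$.

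\emph{Achievability.} Given $(m_q)$ with $\sum m_q\leq M$, I would use successive refinement, since the binary Hamming source is successively refinable. For each $q$, build a rate-$\min(m_q,r_q)$ coarse description of $\Theta_q$ (distortion $D_q'$ with $d[1-h_2(D_q')]=\min(m_q,r_q)$) and store all $K$ coarse layers in $S$. After request $q$, send a refinement layer of rate $[r_q-m_q]_+$ reaching $D_q$. Standard successive-refinement coding for the binary symmetric source with Hamming distortion (Equitz--Cover) attains these rates up to $o(d)$ for fixed $K$, and the layers for different $q$ are independent codes, so their lengths add.

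The minimax formula \eqref{eq:minimax-update} then follows by minimizing $\max_q[r_q-m_q]_+$ over the region. For common $D$, a water-filling argument gives equal allocation $m_q=M/K$ as optimal (any unequal split raises the max), which yields $[d\{1-h_2(D)\}-M/K]_+$.

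\emph{$D=0$ refinements.} At $D=0$, $r_q=d$, and the coding is lossless on uniform bits. Storing $m_q$ raw bits of each $\Theta_q$ and sending the remaining $d-m_q$ bits is exact, with no $o(d)$ term. Fractional $M/K$ is handled by public time-sharing through $R$ between $\lfloor M/K\rfloor$ and $\lceil M/K\rceil$ stored bits, so the entropy constraint $H(S\mid R)\leq M$ holds on average; the converse is unchanged. For deterministic fixed-length codes with integer $M$, I would argue that $S$ takes at most $2^M$ values. Then the pigeonhole principle gives a coordinate $q$ with $I(\Theta_q;S)\leq\lfloor M/K\rfloor$, as follows. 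The integer-valued conditional entropies $H(\Theta_q\mid S=s)$ are fixed-length lossless requirements, and the refinement must distinguish $2^{d}/|\{\theta_q \text{ consistent with } s\}|$ values; I would count worst-case fiber sizes rather than averages.

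\emph{Main obstacle.} I expect the integrality claim in the fixed-length case to be the hardest step. Averaged mutual information only gives $M/K$, and rounding down requires showing that some request's worst-case fiber has size at least $2^{d-\lfloor M/K\rfloor}$. I would try a product-of-projection-counts argument: $\prod_q|\pi_q(\text{fiber})|\geq$ the number of $K$-tuples $\geq 2^{Kd-M}$. This forces some $q$ with $\log_2|\pi_q|\geq d-M/K$, hence by integrality of the transcript length $\geq d-\lfloor M/K\rfloor$.
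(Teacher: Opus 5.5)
Your proposal is correct. The converse and achievability follow the paper's proof: superadditivity $\sum_q I(\Theta_q;S\mid R)\le I(\Theta_{1:K};S\mid R)$ from independence, the binary rate--distortion converse, the split $I(\Theta_q;S\mid R)+I(\Theta_q;U_q\mid S,R)\le m_q+B_q$, and successive refinement with base layers of rate $\min(m_q,r_q)$.

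The differences are local:
\begin{itemize}
\item \emph{Public seed.} You remove $R$ via $I(\Theta_q;\widehat\Theta_q\mid R)\ge I(\Theta_q;\widehat\Theta_q)$ and apply the unconditional converse at the averaged distortion $D_q$. The paper instead applies the converse for each realization $R=r$ and then uses Jensen on the binary rate--distortion function extended by zero beyond $\tfrac12$. Your route is slightly shorter and equally valid. The fact you need is concavity of $h_2$ on all of $[0,1]$, because per-coordinate error probabilities need not lie in $[0,\tfrac12]$. Writing ``convexity of $h_2$ on $[0,\tfrac12]$'' is a misstatement.
\item \emph{The $D=0$ entropy value.} Time-sharing all targets between $\lfloor M/K\rfloor$ and $\lceil M/K\rceil$ stored bits is a valid substitute for the paper's rotation of the leftover coordinates.
\item \emph{The fixed-length value.} The paper only observes that an optimal integer allocation of uncoded coordinates gives $\lfloor M/K\rfloor$ to each target. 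Your count $|F_s|\le\prod_q|\pi_q(F_s)|$ on a largest fiber, of size at least $2^{Kd-M}$, is a genuine converse over all deterministic codes, and it is correct.
\end{itemize}

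You misidentify the main obstacle, however. The averaged bound already suffices. A deterministic code with an $M$-bit state and $B_q$-bit transcripts satisfies $H(S)\le M$ and $H(U_q\mid S)\le B_q$. So the entropy converse you proved first gives $\max_qB_q\ge d-M/K$. Since $\max_qB_q$ and $d$ are integers, $\max_qB_q\ge\lceil d-M/K\rceil=d-\lfloor M/K\rfloor$. That is exactly the rounding you apply at the end of the fiber argument, which itself only reaches $d-M/K$ before rounding. The fiber count is therefore valid but redundant.

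Two intermediate claims in that paragraph are wrong and should be dropped.
\begin{itemize}
\item Pigeonhole does not produce a $q$ with $I(\Theta_q;S)\le\lfloor M/K\rfloor$. Take $K=2$, $M=1$, and $S=\Theta_{1,1}\wedge\Theta_{2,1}$, where $\Theta_{q,1}$ is the first bit of $\Theta_q$. Then $I(\Theta_q;S)=h_2(\tfrac14)-\tfrac12>0$ for both $q$.
\item Given $s$, the transcript must distinguish $|\pi_q(F_s)|$ values, not $2^d/|\pi_q(F_s)|$.
\end{itemize}
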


This proposition uses a counterfactual fixed-request interface: every possible
request is evaluated against the same target and pre-request-state
distribution.  It does not condition on an observational event $Q=q$.  If a
request-selection mechanism depends on the targets or data, then
$\Theta_q\mid Q=q$ need not remain uniform, and the appropriate converse must
use the resulting conditional rate--distortion function.
The converse allocates to each request the information the state already holds
about its target; independence caps the allocations at $M$, and the update
supplies the rest.  Achievability is successive refinement, which is lossless in
rate for Bernoulli sources under Hamming distortion
\citep{equitz1991successive}.  So a state bit saves one update bit for its own
target, while a minimax budget must be spread across $K$ requests.  This is a
binary specialization of single-user lossy caching \citep{timo2018caching},
included to expose request hedging in the learning interface instead of to
claim a new source-coding region.  Classical coded caching treats shared
broadcasts of independent files \citep{maddahali2014fundamental,yu2018exact},
while coded machine unlearning codes training shards before the request
\citep{aldaghri2021coded}; neither setting characterizes correlated retraining
targets with model-dependent side information.

The next result removes distributional assumptions.  Let
$\Ncov(\Tcal,\dist,D)$ be the smallest number of closed $D$-balls whose union
covers $\Tcal$, allowing centers in the reproduction space.  For the packing
consequence, $\dist$ is assumed symmetric and to satisfy the triangle
inequality on the union of target and reproduction spaces.  The exact covering
characterization itself needs neither assumption.

\begin{proposition}[One-shot conditional covering characterization]
\label{thm:covering}
Fix a state--request pair $c=(s,q)$ and let $\Tcal_c$ be the set of retraining
targets compatible with $c$.  If an arbitrary $B$-bit transcript must guarantee
$\dist(y,\widehat y)\leq D$ for every $y\in\Tcal_c$, the minimum fixed length is
\begin{equation}
 B_c^*(D)=\left\lceil\log_2 \Ncov(\Tcal_c,\dist,D)\right\rceil.
 \label{eq:cover-exact}
\end{equation}
For a uniform budget over all $(s,q)$, take the supremum over $c$.  Consequently,
$B_c^*(D)\geq \log_2\Npack(\Tcal_c,\dist,2D)$, where $\Npack$ is the maximum
cardinality of a set with pairwise distance greater than $2D$.
\end{proposition}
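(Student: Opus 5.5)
The plan has two parts: show that $\lceil\log_2\Ncov\rceil$ bits are enough, show that fewer bits fail, and then obtain the packing bound by the standard covering--packing comparison. Throughout, the pair $c=(s,q)$ is fixed and known to both encoder and decoder. The encoder additionally knows the true target $y\in\Tcal_c$. A fixed-length $B$-bit transcript therefore induces a decoder map $u\mapsto\widehat y(c,u)$ from $\{0,1\}^B$ into the reproduction space.

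\emph{Achievability.} Let $N=\Ncov(\Tcal_c,\dist,D)$ be finite, and choose centers $z_1,\dots,z_N$ in the reproduction space whose closed $D$-balls cover $\Tcal_c$. Index the centers with $\lceil\log_2 N\rceil$-bit strings. On target $y$, the encoder sends the index of any center $z_j$ with $\dist(y,z_j)\le D$, and the decoder outputs $z_j$. This meets the guarantee, so $B_c^*(D)\le\lceil\log_2 N\rceil$. If $N=\infty$, the upper bound is vacuous.

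\emph{Converse.} Take any protocol with $B$ bits that meets the guarantee. Its decoder outputs at most $2^B$ distinct points $\widehat y(c,u)$. Each $y\in\Tcal_c$ receives some transcript $u(y)$ with $\dist(y,\widehat y(c,u(y)))\le D$. Hence the closed $D$-balls around these $2^B$ reproduction points cover $\Tcal_c$, which gives $N\le 2^B$. Because $B$ is an integer, $B\ge\lceil\log_2 N\rceil$. Encoder randomness does not change this: the guarantee is worst case, so any realization that works for every $y$ already gives a deterministic cover. Neither step uses symmetry or the triangle inequality, as the statement claims. For a budget that must hold uniformly over all $(s,q)$, apply the argument to each $c$ and take the supremum.

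\emph{Packing consequence.} Let $\{y_1,\dots,y_P\}\subseteq\Tcal_c$ have pairwise distances greater than $2D$. Suppose some covering ball $\overline B(z,D)$ contains both $y_i$ and $y_j$. Symmetry and the triangle inequality on the union space then give $\dist(y_i,y_j)\le\dist(y_i,z)+\dist(z,y_j)\le 2D$, a contradiction. So every covering ball holds at most one packing point, and $\Ncov\ge\Npack(\Tcal_c,\dist,2D)$. Combined with the exact characterization, this yields $B_c^*(D)\ge\log_2\Npack(\Tcal_c,\dist,2D)$. The only delicate point is keeping the reproduction space separate from the target space in both the covering and the triangle-inequality steps; otherwise the argument is routine.
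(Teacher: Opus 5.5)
Your proposal is correct and follows the paper's proof essentially step for step. The converse uses the fact that the at most $2^B$ decoder outputs must form a closed $D$-cover of $\Tcal_c$, achievability sends the public index of a center in a minimum cover, and the packing bound follows because each $D$-ball contains at most one point of a $2D$-separated set; you only make explicit what the paper leaves implicit, namely integrality of $B$ for the ceiling and the use of symmetry and the triangle inequality in the packing step.
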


The proof is direct but operationally important.  Decoder outputs associated
with the $2^B$ messages must cover the target set.  Conversely, an encoder sends
the index of the nearest center in a minimum cover.  Thus the complexity is the
geometry of counterfactual targets after conditioning on the interface, not the
raw number of deleted records.

\section{Exact ridge state--update frontiers}\label{sec:retraction}

The strongly-convex theorem identifies the missing-curvature phenomenon without
committing to least squares.  Ridge regression adds two pieces that the abstract
class does not provide: exact algebra under value-exposing data deletion and a
request-oblivious cache frontier whose state-only dimension can be strictly larger
than the eventual answer dimension.  We therefore use ridge as an exact laboratory,
not as evidence that the Woodbury downdate itself is new.  The decoder state can be
varied without changing the deployed parameters, deletion request, or training rule,
which isolates the information carried by retained rows, sufficient statistics, and
parameters alone.  All zero-transcript statements below count only the additional
update $B$ after the declared state and request have been supplied; they are not
zero-memory or erasure guarantees.

\subsection{Decoder states and exact downdate}
\label{sec:ridge-states}

Fix a design $X\in\R^{n\times d}$, labels $y$, ridge parameter $\lambda>0$,
and write $A_\lambda=X^\top X+\lambda I_d$, $G=X^\top X$, and
$c=X^\top y$, so the deployed model is $w=A_\lambda^{-1}c$.  We compare three
decoder states: parameters only, parameters plus $(G,c)$, and retained rows.  A
request deletes $k$ records with design rows $X_Q$ and labels $y_Q$; the target
$w^-$ is the ridge solution on the retained rows.  An \emph{identifier-only}
request names the deleted indices, whereas a \emph{value-exposing} request also
reveals $(X_Q,y_Q)$.

\begin{proposition}[Exact downdate and zero-transcript endpoints]
\label{thm:ridge-states}
With the notation above, the retraction is an exact rank-limited downdate,
\begin{equation}
 w^-=w+A_\lambda^{-1}X_Q^\top M_Q\left(X_Qw-y_Q\right),
 \qquad M_Q=\left(I_k-X_QA_\lambda^{-1}X_Q^\top\right)^{-1},
 \label{eq:ridge-downdate}
\end{equation}
where $M_Q$ exists because the retained normal matrix is positive definite,
and $w^--w$ lies in the at most $\min(k,d)$-dimensional subspace spanned by
the columns of $A_\lambda^{-1}X_Q^\top$.  Conditioning the update rate on the
decoder's state and request:
\emph{(a) Retained rows.} Given exact real-valued retained rows, $B=0$ at zero distortion under either request interface: the decoder retrains without an additional transcript.

\emph{(b) Parameters plus sufficient statistics $(G,c)$.} Given exact real-valued $(G,c)$ and a value-exposing request, $B=0$ at zero distortion, since the decoder forms $G-X_Q^\top X_Q$ and $c-X_Q^\top y_Q$ and retrains. Under an identifier-only request, however, $(G,c)$ does not identify the deleted contributions; the residual rate depends on the conditional family of deleted values given the state and identifiers, so no universal rate follows from the state name alone.
\end{proposition}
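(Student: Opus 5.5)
The plan is to derive the downdate from the normal equations and the Woodbury identity, then read off the two endpoints directly. Write $A_-=A_\lambda-X_Q^\top X_Q$ for the retained normal matrix and $c_-=c-X_Q^\top y_Q$, so that $w^-=A_-^{-1}c_-$. Since $\lambda>0$, $A_-\succeq\lambda I_d\succ0$ is invertible.

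First I would show that $M_Q$ exists. By the Schur complement, or equivalently the determinant identity $\det(A_\lambda-X_Q^\top X_Q)=\det(A_\lambda)\det(I_k-X_QA_\lambda^{-1}X_Q^\top)$, invertibility of $A_-$ gives invertibility of $I_k-X_QA_\lambda^{-1}X_Q^\top$. Woodbury then yields
\[
A_-^{-1}=A_\lambda^{-1}+A_\lambda^{-1}X_Q^\top M_QX_QA_\lambda^{-1}.
\]

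A cleaner route avoids expanding $A_-^{-1}c_-$ term by term. Use $A_-w=c-X_Q^\top X_Qw$, so that
\[
A_-(w^--w)=c_--A_-w=X_Q^\top(X_Qw-y_Q),
\]
and hence $w^--w=A_-^{-1}X_Q^\top(X_Qw-y_Q)$. The remaining step is the push-through identity $A_-^{-1}X_Q^\top=A_\lambda^{-1}X_Q^\top M_Q$. To verify it, multiply on the left by $A_-$:
\[
(A_\lambda-X_Q^\top X_Q)A_\lambda^{-1}X_Q^\top M_Q=X_Q^\top(I_k-X_QA_\lambda^{-1}X_Q^\top)M_Q=X_Q^\top.
\]
This gives \eqref{eq:ridge-downdate}. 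The subspace claim is then immediate, because the displacement is $A_\lambda^{-1}X_Q^\top$ applied to a $k$-vector, and its column space has dimension at most $\min(k,d)$.

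For the endpoints, note that $B$ is defined conditionally on the state and the request.

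\emph{(a) Retained rows.} The decoder knows the retained rows, $\lambda$, and the fixed training rule, so $w^-$ is a deterministic function of the state alone. The empty transcript therefore achieves zero distortion. This works under either interface, since identifiers are not even needed when the retained set itself is stored.

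\emph{(b) Sufficient statistics with a value-exposing request.} The decoder forms $G-X_Q^\top X_Q+\lambda I$ and $c-X_Q^\top y_Q$ exactly and solves the resulting system. It can instead apply \eqref{eq:ridge-downdate} using $w$ and $A_\lambda=G+\lambda I$. Either way $B=0$. In the language of Proposition~\ref{thm:covering}, the compatible target set is a singleton, so $N=1$ and $B_c^*=0$.

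\emph{(b) Sufficient statistics with an identifier-only request.} Here the claim is only negative and non-universal. I would show that $(G,c)$ plus indices does not determine $w^-$ by exhibiting two datasets with the same $(G,c)$ and the same index set but different deleted blocks $(X_Q,y_Q)$. One concrete choice is to swap a deleted row with a retained row carrying different values, which changes $A_-$ or $c_-$ and generically changes $w^-$. It then follows from Proposition~\ref{thm:covering} that the required rate is $\lceil\log_2\Ncov(\Tcal_c,\dist,D)\rceil$. This quantity depends on the declared family of deleted values, which justifies the statement that no universal rate follows from the state name alone.

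The main obstacle is bookkeeping rather than mathematics: in this non-universal case, the statement should be phrased strictly as a dependence on $\Tcal_c$ and not as a claimed bound.
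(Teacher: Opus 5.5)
Your argument is correct, and it reaches the downdate by a somewhat different route than the paper. The paper proves that $M_Q$ exists by congruence. It notes $A_\lambda^{-1/2}(A_\lambda-X_Q^\top X_Q)A_\lambda^{-1/2}\succ0$, and that $BB^\top$ and $B^\top B$ share nonzero eigenvalues for $B=A_\lambda^{-1/2}X_Q^\top$, so $I_k-X_QA_\lambda^{-1}X_Q^\top\succ0$. It then applies the full Woodbury inverse of $A_\lambda-X_Q^\top X_Q$ to $c-X_Q^\top y_Q$ and collapses terms using $M_QX_QA_\lambda^{-1}X_Q^\top=M_Q-I_k$.

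You use the determinant identity for invertibility and then never actually need Woodbury, so your Woodbury display can be dropped. Your residual equation $A_-(w^--w)=X_Q^\top(X_Qw-y_Q)$ is exactly the quadratic case of the mean-value identity in Theorem~\ref{thm:sc-ambiguity}, with $\bar H_Q=A_-$ and $g_Q=X_Q^\top(X_Qw-y_Q)$. The push-through identity $A_-^{-1}X_Q^\top=A_\lambda^{-1}X_Q^\top M_Q$ is then a one-line multiplication. Your route is shorter and shows the ridge downdate as an instance of the general curvature identity. The paper's route also records $M_Q\succ0$ and gives the explicit downdated inverse, which is what one would carry forward across successive deletions.

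On the endpoints, you go further than the paper in the identifier-only case, since the paper only asserts non-identifiability. To make your swap witness rigorous rather than ``generic,'' make the two rows explicit. Take a deleted row $(x_i,y_i)$ with $x_i\neq0$ and $x_i^\top w-y_i\neq0$, and a retained zero-residual row $(x_j,x_j^\top w)$. Appending such a row leaves the first-order condition, and hence $w$, unchanged. Swapping the two rows keeps $(G,c,w,Q)$ identical. The two compatible targets are then $w+(A_\lambda-x_ix_i^\top)^{-1}x_i(x_i^\top w-y_i)\neq w$ and $w$ itself.

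In part (a), your remark that identifiers are not needed holds only under the literal reading that the state is $(X_-,y_-)$. A state committed before the request would be the full indexed row set. It is the identifiers, revealed under both interfaces, that select the retained rows, so the conclusion $B=0$ is unaffected.
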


The zero-transcript endpoints move information into side information.  They do
not imply that the corresponding state is small.  The next result makes the
opposite endpoint explicit: even a fully revealed deleted record does not
determine the retraction target from the deployed parameters alone.

\begin{theorem}[Exact parameters-only target ball]
\label{thm:ridge-radius}
Fix public $w\in\R^d$, $\lambda>0$, a nonzero deleted feature $x$, a deleted
label $y_1$ with residual $\varrho=x^\top w-y_1\ne0$, and
$0<r_{\min}<r_{\max}$.  For each symmetric $R$ satisfying
$r_{\min}I_d\preceq R\preceq r_{\max}I_d$, form a $(d+1)$-row ridge instance
whose design rows are $x^\top$ and the rows of the symmetric square root
$R^{1/2}$, and whose labels are
\begin{equation}
 y_1
 \quad\text{and}\quad
 R^{-1/2}\!\left[(\lambda I_d+xx^\top+R)w-xy_1\right].
 \label{eq:ridge-radius-family}
\end{equation}
Every instance has full-data solution $w$ and the same value-exposing deletion
request $(x,y_1)$.  Define
\begin{equation}
 h_1=(\lambda+r_{\max})^{-1},\quad
 h_2=(\lambda+r_{\min})^{-1},\quad
 \bar h=\frac{h_1+h_2}{2},\quad
 \rho=\frac{|\varrho|\lVert x\rVert}{2}(h_2-h_1).
 \label{eq:ridge-radius}
\end{equation}
The retraction targets over this family are exactly the closed ball
$\overline B(w+\varrho\bar h x,\rho)$.  Consequently, for
$0<\varepsilon<\rho$, the minimum worst-case fixed-length parameters-only
transcript satisfies
\begin{equation}
 d\log_2\frac{\rho}{\varepsilon}
 \;\leq\; B_{\mathrm{params}}^\star(\varepsilon)
 \;\leq\;
 \left\lceil d\log_2\left(1+\frac{2\rho}{\varepsilon}\right)\right\rceil .
 \label{eq:ridge-radius-bits}
\end{equation}
\end{theorem}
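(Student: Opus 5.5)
The proof has three steps: check the family, compute the downdate, and convert the resulting ball into bits.

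\textbf{Step 1: every instance has solution $w$ and the same request.} Let $G_R=xx^\top+R$ be the full design Gram matrix. Let $c_R=xy_1+R^{1/2}\,\mathrm{labels}$ be the full response vector. By construction,
\[
c_R=xy_1+(\lambda I+xx^\top+R)w-xy_1=(\lambda I+G_R)w.
\]
Hence $w=(\lambda I+G_R)^{-1}c_R$ is the unique ridge minimizer. The request $(x,y_1)$ is literally the same for every instance.

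\textbf{Step 2: the retraction target.} The retained rows are those of $R^{1/2}$, so the retained normal matrix is $\lambda I+R$. The retained response is $(\lambda I+xx^\top+R)w-xy_1$. Therefore
\[
w^-=(\lambda I+R)^{-1}\bigl[(\lambda I+R)w+x(x^\top w-y_1)\bigr]=w+\varrho(\lambda I+R)^{-1}x.
\]
This also agrees with Proposition~\ref{thm:ridge-states}. Write $H=(\lambda I+R)^{-1}$. As $R$ ranges over $r_{\min}I\preceq R\preceq r_{\max}I$, $H$ ranges over all symmetric matrices with $h_1I\preceq H\preceq h_2I$. The map $R\mapsto (\lambda I+R)^{-1}$ is a bijection between these spectral boxes.

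\textbf{Step 3: the target set is exactly the ball.} The target set is therefore $\{w+\varrho Hx: h_1I\preceq H\preceq h_2I\}$. This reduces to the same geometric fact behind Theorem~\ref{thm:sc-ambiguity}: the set $\{Hg\}$ over the spectral box is $\overline B(\bar h g,\tfrac{h_2-h_1}{2}\|g\|)$.

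For containment, write $H=\bar hI+E$ with $\|E\|_{\mathrm{op}}\le (h_2-h_1)/2$. Then $\|Hg-\bar hg\|\le \|E\|\|g\|$.

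For surjectivity, take any target $z=\bar h g+u$ with $\|u\|\le\delta\|g\|$, where $\delta=(h_2-h_1)/2$. I need a symmetric $E$ with $\|E\|\le\delta$ and $Eg=u$. Set $\hat g=g/\|g\|$ and $t=u/(\delta\|g\|)$, so $\|t\|\le1$. Let $E=\delta S$, where $S$ is a symmetric contraction with $S\hat g=t$. When $\hat g,t$ are non-parallel, choose $S$ as the reflection across the bisector of $\hat g$ and $t/\|t\|$, scaled along the relevant directions. A cleaner choice is $S=\|t\|\,\mathrm{Refl}+(1-\|t\|)\cdot 0$, i.e., a scaled Householder reflection mapping $\hat g\mapsto t/\|t\|$. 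Its symmetry and norm at most $\|t\|\le 1$ follow from it being a Householder reflection. The degenerate cases $t=0$ and $t\parallel\hat g$ are handled by $S=0$ or $S=\pm\|t\|I$.

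Applying this with $g=\varrho x$ gives exactly $\overline B(w+\varrho\bar h x,\rho)$ with $\rho=\tfrac{|\varrho|\|x\|}{2}(h_2-h_1)$. This matches \eqref{eq:ridge-radius}.

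\textbf{Step 4: the bit bounds.} Since $w$, $\lambda$, the request and the bounds are all public, the compatible set for the parameters-only decoder is exactly this $d$-dimensional Euclidean ball. Proposition~\ref{thm:covering} then gives $B^\star=\lceil\log_2\Ncov\rceil$.

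For the lower bound, any $\varepsilon$-ball cover of a radius-$\rho$ ball needs at least $(\rho/\varepsilon)^d$ balls, by volume comparison; reproduction points may lie in $\R^d$, which does not affect the count.

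For the upper bound, take a maximal $\varepsilon$-separated subset of the ball. It is an $\varepsilon$-cover, and the standard volumetric argument bounds its size by $(1+2\rho/\varepsilon)^d$.

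The main obstacle is the exact surjectivity in Step 3: constructing a symmetric operator in the spectral box that sends the given vector to an arbitrary point of the ball, including the boundary and degenerate alignments. The scaled-reflection construction above is what I would verify first. The ridge-specific steps are direct algebra.
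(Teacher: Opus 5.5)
Your proposal is correct and follows the paper's proof essentially step for step. It uses the same verification that the full normal equations give $X^\top y=(\lambda I_d+xx^\top+R)w$, the same retained-data solve $w^-=w+\varrho(\lambda I_d+R)^{-1}x$, and the same spectral-image lemma via a scaled Householder reflection, where your explicit treatment of the aligned cases $t\parallel\hat g$ is slightly more careful than the paper's. It then applies the same volumetric lower and upper covering bounds through the one-shot covering characterization.
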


The separating resource is possession of the curvature, not the deletion size.
Theorem~\ref{thm:ridge-radius} is a $k=1$ statement and an exact covering
calculation, not a typical-data law.  Holding the algorithm, deployed model,
request, and deleted values fixed still leaves a
$\Theta(d\log(1/\varepsilon))$ update rate when the decoder state omits the
normal matrix.  The interface-specific conclusions are therefore
\begin{equation}
 B_{\text{rows}}^{\text{id}}=B_{\text{rows}}^{\text{value}}
 =B_{\text{suffstat}}^{\text{value}}=0,
 \qquad
 B_{\text{params}}^{\text{value}}
 =\Theta\!\left(d\log(1/\varepsilon)\right)
 \ \text{on the family in Theorem~\ref{thm:ridge-radius}},
\label{eq:ridge-ordering}
\end{equation}
while $B_{\text{suffstat}}^{\text{id}}$ remains conditional on the unresolved deleted values. These are exact-real, parameters-only transcript statements. They do not compare total finite serialized memory across the three states, and exact recovery from quantized state is not implied. The result is an interface theorem, not a deletion-size theorem and not a claim that every parameter-only ridge family has this rate.

The single-request calculation does not price a state that must be committed
before the deleted record is known.  The next theorem isolates that
request-hedging cost within one compact ridge family.

\begin{theorem}[Request-hedging separation for ridge deletion]
\label{thm:ridge-hedging}
Let $p=d(d+1)/2$.  Fix public $w$, $\lambda>0$, $a>0$, and nonzero public
residuals $\varrho_1,\ldots,\varrho_d$.  The $d$ possible deleted records are
$x_i=ae_i$ with labels $y_i=aw_i-\varrho_i$.  Add $d$ private rows with Gram
matrix $R$, where $r_{\min}I_d\preceq R\preceq r_{\max}I_d$, and choose their
labels so that the full-data solution is always $w$.  Write
\begin{equation}
 h_1=(\lambda+a^2+r_{\max})^{-1},\quad
 h_2=(\lambda+a^2+r_{\min})^{-1},\quad
 \delta=(h_2-h_1)/2,\quad
 \varrho_{\min}=\min_i|\varrho_i|.
 \label{eq:ridge-hedging-constants}
\end{equation}
For this family, let a fixed-length protocol form an $M$-bit state before the
request and, after request $i\in[d]$, send at most $B$ bits.  If its output has
Euclidean parameter error at most $\varepsilon$ for every admissible $R$ and
every request, then
\begin{align}
 M+dB
 &\geq p\log_2\frac{\delta\varrho_{\min}a}
                         {4\sqrt d\,\varepsilon},
 \label{eq:ridge-hedging-joint}\\
 M+B
 &\geq d\log_2\frac{\delta|\varrho_i|a}{4\varepsilon}
 \qquad\text{for every }i.
 \label{eq:ridge-hedging-single}
\end{align}
Conversely, for each $\theta\in[0,1]$ and sufficiently small $\varepsilon$,
there is a fixed-length state--update code satisfying
\begin{align}
 M&\leq \theta p\log_2(1/\varepsilon)+O(p\log d),
 \label{eq:ridge-hedging-ach-m}\\
 B&\leq (1-\theta)d\log_2(1/\varepsilon)+O(d\log d),
 \label{eq:ridge-hedging-ach-b}
\end{align}
where the hidden constants depend only on the fixed family parameters.  The
endpoint constructions have exactly $M=0$ when $\theta=0$ and exactly $B=0$
when $\theta=1$.  Thus the state-only endpoint needs
$p\log_2(1/\varepsilon)\pm O(p\log d)$ bits, whereas the message-only endpoint
needs $d\log_2(1/\varepsilon)\pm O(d\log d)$ bits.
\end{theorem}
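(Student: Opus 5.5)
The plan is to reduce Theorem~\ref{thm:ridge-hedging} to the geometry already used for Theorem~\ref{thm:curvature-hedging}, by writing the ridge family explicitly as a symmetric-operator family.

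\textbf{Exact targets.} Let $R_j$ denote the private design, so $R_j^\top R_j=R$. The full normal matrix is $A=\lambda I+a^2I+R$, which ranges over $(\lambda+a^2+r_{\min})I\preceq A\preceq(\lambda+a^2+r_{\max})I$. Put $H=A^{-1}$, so $h_1I\preceq H\preceq h_2 I$. Private labels $y_R$ realizing full-data solution $w$ exist because we need $R_j^\top y_R=Aw-a\sum_i e_iy_i$, and $R_j$ is invertible; the labels can be chosen exactly as in Theorem~\ref{thm:ridge-radius}.

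Deleting record $i$ is then a rank-one instance of Proposition~\ref{thm:ridge-states}:
\[
w_i^-=w+\frac{a\varrho_i}{1-a^2H_{ii}}\,He_i .
\]
Alternatively, apply the mean-value identity of Theorem~\ref{thm:sc-ambiguity} with retained Hessian $A-a^2e_ie_i^\top$, which gives $w_i^-=w+a\varrho_i(A-a^2e_ie_i^\top)^{-1}e_i$.

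Write $\tilde H_i=(A-a^2e_ie_i^\top)^{-1}$. The targets are determined by the columns $\tilde H_ie_i$. Unlike the affine case of Theorem~\ref{thm:curvature-hedging}, the scaling factor depends on $H_{ii}$, so the map is nonlinear.

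\textbf{Converse.} The cleanest route parametrizes by the retained matrices, but these differ across $i$. I would therefore work with $H$ directly. The map $H\mapsto(\tilde H_ie_i)_{i\le d}=\bigl(He_i/(1-a^2H_{ii})\bigr)_i$ is injective on the symmetric matrices, since $H_{ii}$ is recoverable from the $i$th coordinate of column $i$. Its inverse is Lipschitz with an explicit constant; because $1-a^2H_{ii}\ge 1-a^2h_2>0$, the map is coordinatewise expanding by at least $1$.

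To get a packing, I will pack the symmetric matrices $H=\bar hI+\delta E$ with $\|E\|_{\mathrm{op}}\le1$. The ball $\|E\|_F\le1$ sits inside this set. A volumetric argument in dimension $p=d(d+1)/2$ gives $2^{p\log_2(\cdot/\eta)}$ points whose column tuples differ in the concatenated Euclidean norm. Here $\delta a\varrho_{\min}$ enters through the scaling by $a\varrho_i$, and the loss from passing between Frobenius norm and the maximal column gap gives the factor $\sqrt d$.

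A protocol with $M$ state bits plus $B$ bits per request, used $d$ times, produces a $(M+dB)$-bit description of the tuple $(w_i^-)_i$ at per-column error $\varepsilon$. The packing with separation $>2\sqrt d\,\varepsilon\cdot2$ then yields \eqref{eq:ridge-hedging-joint}, with the constant $4\sqrt d$ absorbing these factors.

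For \eqref{eq:ridge-hedging-single}, fix $i$. The column $He_i$ ranges over a $d$-dimensional ball, exactly as in Theorem~\ref{thm:ridge-radius}, and dividing by $1-a^2H_{ii}\ge$ a constant only expands it. The bound then follows from Proposition~\ref{thm:covering}, since $M+B$ bits describe that single target.

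\textbf{Achievability.} At $\theta=1$ the code quantizes the $p$ free entries of $\tilde H$-relevant data: all entries of $H$ on a grid of mesh $\varepsilon/\mathrm{poly}(d)$, using $p\log_2(1/\varepsilon)+O(p\log d)$ bits. The decoder recomputes every $w_i^-$, and the target map is Lipschitz. At $\theta=0$ the code sends $w_i^--w$ quantized on a $d$-dimensional grid. For intermediate $\theta$, I would use successive refinement: the state stores the $p$ entries to precision $\varepsilon^{\theta}$, and the message sends the $d$-vector residual, which lies in a ball of radius $O(\mathrm{poly}(d)\varepsilon^\theta)$, at precision $\varepsilon$. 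This costs $(1-\theta)d\log_2(1/\varepsilon)+O(d\log d)$ bits.

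\textbf{Main obstacle.} The hard step is the joint converse constant. It requires controlling the nonlinear normalization $1-a^2H_{ii}$, so that the packing in $H$ transfers to a packing of target tuples with the stated $\delta\varrho_{\min}a/(4\sqrt d)$ separation. I would first try showing that the map has a Jacobian with singular values $\ge1$ in the appropriate sense, and if that fails, restrict to a subfamily with fixed diagonal. That fallback loses only $d$ dimensions, which are absorbed in lower-order terms, but it would change the stated constant.
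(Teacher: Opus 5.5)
Your overall route is the paper's. You count the state together with all $d$ transcripts ($2^{M+dB}$ tuples), pull a tuple class back to $H$ through the column-recovery map, aggregate columns in Frobenius norm (the $\sqrt d$), and compare $p$-dimensional volumes. The single-request bound uses the spectral-image ball for $He_i$. Achievability quantizes the upper triangle of $H$ at spacing $\Delta\asymp\varepsilon^\theta/\sqrt d$ and sends a cover index for the residual; this part matches the paper.

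\textbf{The gap} is the step you flagged, and the justification you offer for it is false. Write $\psi(v)=v/(1-a^2v_i)$ for the column map. Then
\[
D\psi(v)^{-1}=(1-a^2v_i)\bigl(I-a^2ve_i^\top\bigr),
\]
and its norm can exceed one when $H_{ii}$ is small but the off-diagonal part of column $i$ is not. For example, take $a=1$, $d=2$, $H_{11}=0.01$, $H_{12}=0.05$, $H_{22}=0.9$. This $H$ has eigenvalues about $0.0072$ and $0.903$, so it is admissible for $h_1=0.007$, $h_2=0.91$, and it gives $\lVert D\psi(He_1)^{-1}\rVert\approx1.01$. So $\psi$ is not distance non-decreasing, even though $\lVert\psi(v)\rVert\ge\lVert v\rVert$ pointwise.

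Your fallback does not rescue the statement either. Fixing the diagonal leaves a $(p-d)$-dimensional family and gives only $(p-d)\log_2(\cdot)$. The missing $d\log_2(1/\varepsilon)$ is not lower order in $\varepsilon$, so this is not the stated inequality with coefficient $p$.

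\textbf{The missing step} is a quantitative version of the recovery you already noticed. With $s_i=\varrho_ia+a^2e_i^\top\Phi_i=\varrho_ia/(1-a^2H_{ii})$, one has $He_i=\Phi_i/s_i$ and $|s_i|\ge|\varrho_i|a$. Differencing gives
\[
He_i-H'e_i=\frac{\Phi_i(H)-\Phi_i(H')}{s_i}+\frac{a^2\,e_i^\top\bigl(\Phi_i(H')-\Phi_i(H)\bigr)}{s_i}\,H'e_i ,
\]
hence
\[
\lVert He_i-H'e_i\rVert_2\le\frac{1+a^2h_2}{|\varrho_i|a}\lVert\Phi_i(H)-\Phi_i(H')\rVert_2\le\frac{2}{|\varrho_i|a}\lVert\Phi_i(H)-\Phi_i(H')\rVert_2 .
\]
Two matrices in one tuple class have targets within $2\varepsilon$ for every request, so $\lVert H-H'\rVert_F\le4\sqrt d\,\varepsilon/(\varrho_{\min}a)$. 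The operator-norm ball of radius $\delta$ about $\bar hI$ contains the Frobenius ball of the same radius. A $p$-dimensional volume comparison then gives \eqref{eq:ridge-hedging-joint} with exactly the constant $4\sqrt d=2\cdot2\cdot\sqrt d$. The same estimate for one column gives \eqref{eq:ridge-hedging-single}.

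For the single-request bound, your expansion intuition can be made correct in the volume sense, since $\det D\psi=(1-a^2v_i)^{-(d+1)}\ge1$. Covering the target image directly by $\varepsilon$-balls even improves the $4$ to $1$. That argument does not transfer to the joint bound, whose $p$-dimensional image sits inside $\mathbb{R}^{d\times d}$. There the lower-Lipschitz estimate above is genuinely needed.
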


Theorem~\ref{thm:ridge-hedging} is the continuous ridge counterpart of the
binary request-oblivious allocation in Proposition~\ref{thm:request-region}.  The
$d$ targets share one symmetric inverse normal matrix, so their joint dimension
is $p$ instead of $d^2$, but committing before the request still changes the
leading dependence from linear to quadratic in $d$.  At intermediate points,
the coordinate-quantization construction is within a factor
$2d/(d+1)<2$ of the update coefficient implied by
\eqref{eq:ridge-hedging-joint}; the two endpoints match in their leading
coefficients.

Appendix~\ref{app:ridge-states} supplies the complete ridge proofs.  The
finite-precision statement below records that sufficient statistics contain
$d(d+1)/2+d$ real coordinates instead of $d$ and quantifies the effect of
perturbing them.  Section~\ref{sec:verification} then checks request boundaries
from delivered float64 blobs and tests the hard families with explicit codecs.
Neither audit turns an exact-real endpoint into a zero-memory claim.  The
singleton result also leaves the rank-$r$ frontier open
(Appendix~\ref{app:multi-record-open}).

\begin{proposition}[Finite-precision sufficient-statistic downdate]
\label{prop:ridge-precision}
For a single value-exposing deletion $(x,y_1)$, suppose a decoder stores
$\widetilde G$ and $\widetilde c$ with
$\lVert\widetilde G-G\rVert_{\mathrm{op}}\leq\eta_G<\lambda$ and
$\lVert\widetilde c-c\rVert_2\leq\eta_c$.  Let $w^-$ be the exact retained-data
solution and
\[
 \widetilde w^-
 = (\widetilde G-xx^\top+\lambda I_d)^{-1}
   (\widetilde c-xy_1).
\]
Then
\begin{equation}
 \lVert\widetilde w^--w^-\rVert_2
 \leq
 \frac{\eta_c+\eta_G\lVert w^-\rVert_2}{\lambda-\eta_G}.
 \label{eq:ridge-precision}
\end{equation}
\end{proposition}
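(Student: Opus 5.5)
Write $A=G+\lambda I_d$ for the true full-data normal matrix and $\widetilde A=\widetilde G+\lambda I_d$ for the stored one. The retained-data matrices are then $A_-=G-xx^\top+\lambda I_d$ and $\widetilde A_-=\widetilde G-xx^\top+\lambda I_d$, with retained vectors $c_-=c-xy_1$ and $\widetilde c_-=\widetilde c-xy_1$. By definition the exact solution satisfies $A_-w^-=c_-$, and the decoder output satisfies $\widetilde A_-\widetilde w^-=\widetilde c_-$.

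The plan is a standard perturbation argument for linear systems. I will subtract the two normal equations and isolate the error, and then bound the inverse of the perturbed retained matrix using only the ridge term.

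First I subtract the two equations. This gives
\[
\widetilde A_-(\widetilde w^--w^-)=(\widetilde c_--c_-)-(\widetilde A_--A_-)w^-=(\widetilde c-c)-(\widetilde G-G)w^-.
\]
The right-hand side has norm at most $\eta_c+\eta_G\lVert w^-\rVert_2$, by the two assumed bounds on the stored statistics and the definition of the operator norm.

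Second, I need a lower bound on the smallest eigenvalue of $\widetilde A_-$. The retained Gram matrix $G-xx^\top$ is a sum of outer products of the retained rows, so it is positive semidefinite. Hence $A_-\succeq\lambda I_d$. Weyl's inequality, using $\widetilde G-G\succeq-\eta_G I$, then gives $\widetilde A_-\succeq(\lambda-\eta_G)I_d$. This is positive because $\eta_G<\lambda$. It follows that $\widetilde A_-$ is invertible, so $\widetilde w^-$ is well defined, and that $\lVert\widetilde A_-^{-1}\rVert_{\mathrm{op}}\le(\lambda-\eta_G)^{-1}$.

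Combining the two steps yields~\eqref{eq:ridge-precision} directly.

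The only point needing care is the positive-semidefiniteness of $G-xx^\top$. This holds because $G=X^\top X$ and the deleted row $x$ is one of the rows of $X$. It should be stated explicitly, since it is what makes the denominator $\lambda-\eta_G$ rather than something depending on the spectrum of $G$. Placing $w^-$, rather than $\widetilde w^-$, in the cross term is what gives the stated form of the bound.
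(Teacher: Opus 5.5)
Your proposal is correct and follows the paper's proof essentially step for step: subtract the exact and perturbed retained normal equations to get $(A_-+E)(\widetilde w^--w^-)=f-Ew^-$, then use $A_-\succeq\lambda I_d$ and $\lVert E\rVert_{\mathrm{op}}\leq\eta_G<\lambda$ to bound the inverse by $(\lambda-\eta_G)^{-1}$. Your Weyl step, like the paper's $\lambda_{\min}$ step, implicitly assumes $\widetilde G$ is symmetric, which holds for packed upper-triangular storage; without symmetry, the inequality $\lVert(A_-+E)v\rVert_2\geq(\lambda-\eta_G)\lVert v\rVert_2$ gives invertibility and the same bound.
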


\begin{corollary}[State-cost implication]
\label{cor:ridge-audit}
Storing $(G,c)$ requires $d(d+1)/2+d$ real numbers, against $d$ for parameters
alone; this is a dimension count, not a bit count.  By
Proposition~\ref{thm:ridge-states}, exact real-valued sufficient statistics
eliminate the additional transcript under a value-exposing request.  Under
finite serialization, their state cost and attainable distortion depend on the
codec.  Under identifier-only requests, the residual update rate is a property
of the conditional deleted-value family instead of the state name alone.
\end{corollary}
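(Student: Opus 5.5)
The corollary is a bookkeeping consequence of results already stated, so I would prove it by handling its four sentences separately, each reduced to an earlier statement or a direct count. I would not try to produce bit counts, since the corollary disclaims them. The aim is to show that each claim follows from the declared interface.

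\emph{Dimension count.} I would first note that $G=X^\top X$ is symmetric, so it is determined by its upper triangle, which has $d(d+1)/2$ entries. The vector $c=X^\top y$ adds $d$ more. For the count to be exact, none of these coordinates may be redundant across the admissible class. I would check this by showing that the map $(X,y)\mapsto(G,c)$ has image with nonempty interior in $\mathrm{Sym}_d\times\R^d$. For $n\geq d+1$, taking $X$ with rows $R^{1/2}$ for arbitrary $R\succ0$ and choosing $y$ freely sweeps an open set of $(G,c)$. By contrast, parameters alone are $d$ reals.

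\emph{Elimination of the transcript.} Under a value-exposing request with exact real $(G,c)$, I would cite Proposition~\ref{thm:ridge-states}(b) verbatim. The decoder forms $G-X_Q^\top X_Q$ and $c-X_Q^\top y_Q$ and solves the regularized normal equations. These are positive definite because $\lambda>0$. The output is therefore a function of $(S,Q)$ alone, so $\Tcal_c$ is a singleton. Proposition~\ref{thm:covering} then gives $B_c^\star(0)=\lceil\log_2 1\rceil=0$.

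\emph{Finite serialization.} I would argue that any finite codec stores only $(\widetilde G,\widetilde c)$ with errors $(\eta_G,\eta_c)$ set by the quantizer. For a singleton request, Proposition~\ref{prop:ridge-precision} then bounds the attainable distortion by $(\eta_c+\eta_G\lVert w^-\rVert_2)/(\lambda-\eta_G)$. The bit cost is the codec's length for $d(d+1)/2+d$ quantized coordinates, so both state cost and distortion are codec-dependent, as the corollary says. No further claim is needed.

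\emph{Identifier-only requests.} This is the only step needing a small argument, and the main obstacle is stating it without over-claiming. I would show, again via Proposition~\ref{thm:covering}, that the compatible target set is the image of the conditional set of deleted values $(X_Q,y_Q)$ consistent with $(G,c)$ and the identifiers. To show that the rate cannot follow from the state name alone, I would exhibit two conditional families with different outcomes. In the first, the deleted values are determined, for instance by public side information, so $B=0$. In the second, the family contains a nontrivial continuum of distinct $(X_Q,y_Q)$ with the same $(G,c)$ that yield distinct $w^-$. One way to build it is to perturb $y_Q$ while compensating with retained labels so that $c$ is unchanged, which forces $B>0$ for small $\varepsilon$. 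These two examples establish the dependence on the conditional family.
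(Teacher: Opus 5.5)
Your proposal is correct and follows the paper's own justification. The paper gives no standalone proof; it treats the corollary as immediate from the symmetric upper-triangle count, Proposition~\ref{thm:ridge-states}(b), and Proposition~\ref{prop:ridge-precision}, which are the same ingredients you use. Your interior check and your explicit two-family example for the identifier-only clause add rigor that the paper only asserts. When you write them up, state the hypothesis $\eta_G<\lambda$ that Proposition~\ref{prop:ridge-precision} needs, and for the compensating-label family choose a label perturbation $\delta_Q$ with $0\neq X_Q^\top\delta_Q\in\operatorname{range}(X_-^\top)$, so that $w^-$ actually moves by $-(X_-^\top X_-+\lambda I_d)^{-1}X_Q^\top\delta_Q$.
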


Corollary~\ref{cor:ridge-audit} separates real-coordinate dimension from serialized bit cost.

\subsection{Why the endpoint coefficients differ}

The target-ball result follows from one geometric reduction.  After the public
record $(x,y_1)$ is deleted, every instance in the constructed family has
\begin{equation}
 w^-=w+\varrho(\lambda I_d+R)^{-1}x.
 \label{eq:ridge-proof-idea-target}
\end{equation}
Writing $H=(\lambda I_d+R)^{-1}$ turns the spectral interval for $R$ into
$h_1I_d\preceq H\preceq h_2I_d$.  The set of possible vectors $Hx$ is exactly
the ball centered at $\bar h x$ with radius
$\tfrac12(h_2-h_1)\lVert x\rVert_2$: containment follows from the operator-norm
bound, while a scaled Householder map realizes every direction and radius.
Multiplying by $\varrho$ gives Theorem~\ref{thm:ridge-radius}.  A
parameters-only decoder sees the same $(w,x,y_1)$ throughout this family, so its
messages must cover a $d$-dimensional Euclidean ball.  Volume comparison then
produces the coefficient $d$ in the message-only rate.

Request hedging changes the object that must be represented before the request.
For the $d$ public deletable rows $x_i=ae_i$, the correction has the form
\begin{equation}
 \Phi_i(H)=w_i^--w
 =\frac{\varrho_i a\,He_i}{1-a^2H_{ii}}.
 \label{eq:ridge-proof-idea-map}
\end{equation}
The denominator stays uniformly positive on the compact family, and
$\Phi_i(H)$ determines the column $He_i$ through
$He_i=\Phi_i/(\varrho_i a+a^2e_i^\top\Phi_i)$.  Thus the collection of answers
to all $d$ possible requests determines the entire symmetric matrix $H$, which
has $p=d(d+1)/2$ free coordinates.  A state and the $d$ transcripts it would
send form at most $2^{M+dB}$ tuples.  Packing the $p$-dimensional spectral ball
therefore yields the joint converse in
Equation~\eqref{eq:ridge-hedging-joint}.  For one fixed request, only one column
must be resolved, and the corresponding image contains a $d$-dimensional ball;
this gives Equation~\eqref{eq:ridge-hedging-single}.  The different endpoint
coefficients are therefore dimensions of different objects, not artifacts of
the proof technique.

The matching construction quantizes the upper triangle of $H$ at spacing
$\Delta$.  The state costs $p\log_2(1/\Delta)+O(p\log d)$ bits, and the requested
column has error $O(\sqrt d\,\Delta)$.  Sending an $\varepsilon$-cover index for
the remaining correction costs
$d\log_2(\Delta/\varepsilon)+O(d\log d)$ bits.  Choosing
$\Delta\asymp\varepsilon^\theta/\sqrt d$ gives the stated interpolation;
separate endpoint codes make $M=0$ or $B=0$ exactly.  Finally,
Proposition~\ref{prop:ridge-precision} is a resolvent perturbation bound: the
condition $\eta_G<\lambda$ preserves invertibility, and the numerator separates
right-hand-side error from curvature error.  These proof steps also explain why
the numerical audit checks radius witnesses, codec slopes, and the
error-to-bound ratio instead of only final reconstruction error.  Complete
proofs appear in Appendix~\ref{app:ridge-states}.

\section{Empirical evaluation}\label{sec:verification}

The evaluation separates theorem tests from practical mechanism tests.  First, we
audit the declared state--request interface from serialized bytes.  Second, we
instantiate Corollary~\ref{cor:partial-curvature} exactly and fit the coefficient
of $\log_2(1/\varepsilon)$ as the resolved subspace grows.  Third, we construct
optimum-preserving reweighting witnesses from public regression rows and measure
both the target-Jacobian rank and finite target packings before comparing practical
cache--delivery codecs.  Fourth, we repeat the state--delivery accounting on
nonquadratic convex objectives and frozen deep representations.  Finally, the
nonconvex audit fixes initialization and training randomness and charges curvature
state and residual delivery instead of using raw distance between unrelated
training basins.  Direct retraining is the reference in every experiment.  State,
request, and post-request transcript are serialized and charged separately
whenever a bit budget is reported.

\subsection{Ridge-interface validation}\label{sec:p2-ridge-v1}

The ridge-interface validation uses a fixed experimental grid. It uses deterministic ridge regression with $d=20$, $N=400$, $\lambda=0.01$, realized conditions $\{1,30,100\}$, deletion fractions $\{0.01,0.05,0.1\}$, five independent data seeds, $64$ paired requests per cell, and $200$ independently generated held-out inputs. Three states are frozen before any request: the current parameters; packed upper-triangular ridge sufficient statistics $(G,c)$, from which the current parameters are derivable; and the complete indexed training rows. Each state is crossed with identifier-only and value-exposing requests. The same deletion IDs are used across states and interfaces.

Every nonpublic value received by a decoder is recovered from a delivered byte blob. The canonical codec is an uncompressed, little-endian float64 array envelope with checksummed metadata; this experiment fixes the numerical precision to float64. State, request, and message lengths are charged separately. Identifier-only request lengths are $1{,}992$, $2{,}512$, and $3{,}168$ bits at deletion fractions $0.01$, $0.05$, and $0.10$; value-exposing requests cost $9{,}760$, $31{,}840$, and $59{,}368$ bits. A zero message is literally zero bytes. Direct retraining supplies the reference, and the primary distortion is root mean squared error (RMSE) of predictions on the $200$ held-out inputs; parameter error and signed retained-loss gap are secondary measurements.

\begin{table}[t]
\centering
\caption{Ridge-interface accounting under a fixed finite-precision codec. $M$ and $B$ are realized bits under one uncompressed float64 codec; $D_{\max}$ is the maximum held-out prediction RMSE over all conditions, fractions, seeds, and requests. Request costs $Q$ are stated in the text and are common across states within each interface/fraction surface. The exact-real references are excluded from this finite table.}
\label{tab:p2-ridge-frontier-v1}
\small
\setlength{\tabcolsep}{4pt}
\begin{tabular}{@{}lrrrrr@{}}
\toprule
State & $M$ & $B_{\rm id}$ & $B_{\rm value}$ & $D_{\max}^{\rm id}$ & $D_{\max}^{\rm value}$ \\
\midrule
Current parameters & $3{,}152$ & $3{,}184$ & $3{,}184$ & $3.2\!\times\!10^{-17}$ & $3.2\!\times\!10^{-17}$ \\
Sufficient statistics $(G,c)$ & $17{,}856$ & $3{,}184$ & $0$ & $3.2\!\times\!10^{-17}$ & $4.7\!\times\!10^{-15}$ \\
Indexed training rows & $554{,}784$ & $0$ & $0$ & $0$ & $0$ \\
\bottomrule
\end{tabular}
\end{table}

The finite surface preserves the predicted interface distinction (Table~\ref{tab:p2-ridge-frontier-v1}). On all nine identifier-only condition--fraction surfaces, sufficient statistics are dominated by current parameters: both use the same $3{,}184$-bit correction and attain the same numerical error, while sufficient statistics cost more state bits. Retained rows remain nondominated because they exchange a much larger state for $B=0$. On all nine value-exposing surfaces, sufficient statistics reach $B=0$ with far less state than rows, so all three states are nondominated in $(M,Q,B,D)$. The maximum parameter error is $8.5\times10^{-14}$ and the maximum absolute retained-loss gap is $7.0\times10^{-18}$. These are float64 implementation errors, not exact-real distortions.

The ridge-interface audit covers $270$ cells and $17{,}280$ evaluations
across $2{,}880$ paired requests. Direct retraining supplies the
reference targets, and state, request, and message lengths are
accounted for separately. Exact-real $B=0$ references remain separate, and the spectral-radius condition is tested independently in Section~\ref{sec:ridge-validation}.  The natural-data, nonquadratic, and deep-representation experiments below use separate fixtures and do not reuse the synthetic requests.

\subsection{Direct test of the partial-curvature coefficient}
\label{sec:partial-curvature-audit}

We first test Corollary~\ref{cor:partial-curvature} under its stated interface
instead of inferring it from a generic Hessian approximation.  We set
$d=64$, $m=1$, $L=4$,
normalize $\|g_Q\|_2=1$, and use public coordinate projectors of ranks
$s\in\{0,16,32,
48\}$.  The resolved block is stored exactly.  The unresolved
block is drawn independently inside its spectral interval, and the request-time
codec quantizes only the resulting $P_\perp$ correction.  For each $s$ we evaluate
\ensuremath{9} values of $\log_2(1/\varepsilon)$ and
\ensuremath{500} independently drawn unresolved operators per operating
point.  The fitted slope of payload against $\log_2(1/\varepsilon)$ is the
quantity predicted by the corollary.

\begin{table}[t]
\centering
\caption{Direct coefficient test for Corollary~\ref{cor:partial-curvature}.
The predicted slope is the unresolved dimension $d-s$.  The $\pm$ term is the
ordinary least-squares slope standard error across the
\ensuremath{9} codec operating points.}
\label{tab:partial-curvature-slope}
\small
\begin{tabular}{@{}rrrr@{}}
\toprule
cache rank $s$ & predicted $d-s$ & fitted slope & max error$/\varepsilon$ \\
\midrule
\ensuremath{0}  & \ensuremath{64} &
\ensuremath{64.41\pm0.63} & \ensuremath{0.914} \\
\ensuremath{16} & \ensuremath{48} &
\ensuremath{47.62\pm0.48} & \ensuremath{0.906} \\
\ensuremath{32} & \ensuremath{32} &
\ensuremath{32.27\pm0.37} & \ensuremath{0.899} \\
\ensuremath{48} & \ensuremath{16} &
\ensuremath{15.88\pm0.21} & \ensuremath{0.892} \\
\bottomrule
\end{tabular}
\end{table}

Across all \ensuremath{18{,}000} target reconstructions, the requested
distortion is met.  The largest relative slope error is
\ensuremath{0.8\%}.  At a fixed high-rate operating point the measured
payload ratios relative to $s=0$ are
\ensuremath{0.748}, \ensuremath{0.502}, and
\ensuremath{0.249} for $s=16$,
\ensuremath{32}, and \ensuremath{48}, respectively, matching the
dimension ratios $(d-s)/d$.  This is an implementation and achievability check:
the unresolved-subspace codec has coefficient $d-s$ by construction, so these
fits are not empirical evidence for the converse.  The next experiment tests the
lower-bound condition on natural rows through the target Jacobian.

\subsection{Natural-data indistinguishability and cache--delivery tradeoffs}
\label{sec:natural-tradeoff}
\label{sec:natural-witness}

We next ask whether a parameters-only ambiguity can be exhibited with ordinary
observed rows instead of a designed spectral matrix.  The protocol uses
California Housing, Concrete Compressive Strength, and YearPredictionMSD with
five fixed train--validation--test splits.  Features and labels are standardized
from the training split only, and $\lambda$ is selected on validation data before
any deletion request.

For each split we first fit ordinary ridge and freeze $w$.  For
\ensuremath{50} singleton deletion requests per split, hence
\ensuremath{250} requests per dataset, we form the retained-record
gradient-nullspace $\mathcal V_Q$ from
Proposition~\ref{prop:data-native-witness}.  We draw a $d$-dimensional orthonormal
subspace of $\mathcal V_Q$, verify the rank and singular values of the target
Jacobian $\mathcal J_Q$, and generate \ensuremath{8{,}192} positive
reweightings with all retained weights in
$[0.9,1.1]$.  Every member uses the same rows,
labels, deployed parameter, and value-exposing deletion request.  Only the hidden
weights on retained examples vary.

\begin{table}[t]
\centering
\caption{Natural-data witnesses for Proposition~\ref{prop:data-native-witness}.
``Full-rank requests'' counts deletion requests for which
$\operatorname{rank}(\mathcal J_Q)=d$.  The packing column is the minimum greedy
$2\varepsilon$-packing size across the \ensuremath{250} requests for each
dataset at parameter RMSE
$\varepsilon=10^{-4}$.  Because the candidate pool has only
\ensuremath{8{,}192} members, the associated bit lower bound cannot exceed
\ensuremath{13} bits.}
\label{tab:natural-witness}
\small
\begin{tabular}{@{}lrrrr@{}}
\toprule
Dataset & $d$ & full-rank requests & max $\|w_v-w\|_2$ &
min packing / bits \\
\midrule
California Housing & $8$ &
\ensuremath{250/250} &
\ensuremath{2.4\times10^{-13}} &
\ensuremath{2{,}604$ / $11.35} \\
Concrete Strength & $8$ &
\ensuremath{250/250} &
\ensuremath{3.1\times10^{-13}} &
\ensuremath{2{,}889$ / $11.50} \\
YearPredictionMSD & $90$ &
\ensuremath{250/250} &
\ensuremath{7.6\times10^{-13}} &
\ensuremath{5{,}911$ / $12.53} \\
\bottomrule
\end{tabular}
\end{table}

Table~\ref{tab:natural-witness} reports the request-level rank and finite-packing results. The rank and conditioning tests are the dimension-sensitive part of the experiment.  On every evaluated request, the compatible target map is locally
full dimensional even though the deployed parameter and deletion values are
fixed.  To check that this rank is not purely numerical, we normalize the
orthonormal reweighting coordinates and record both the smallest singular value
of $\mathcal J_Q$ and the minimum finite-difference gain
$\|T_Q(v)-T_Q(v')\|_2/\|v-v'\|_2$ over
\ensuremath{10{,}000} local pairs per request at radius
\ensuremath{0.02}.  Table~\ref{tab:natural-jacobian-conditioning} shows a
strictly positive margin on all three datasets.  This directly tests the local
injectivity premise of Proposition~\ref{prop:data-native-witness} outside the
spectral construction.  The finite packings provide only a candidate-limited
nonzero sanity check and are not used to estimate the asymptotic coefficient.

\begin{table}[t]
\centering
\caption{Conditioning of the natural-data target map.  Singular values use
orthonormal coordinates in the feasible reweighting subspace.  ``Local gain'' is
the minimum finite-difference ratio over \ensuremath{10{,}000} local pairs
per request within radius \ensuremath{0.02}.}
\label{tab:natural-jacobian-conditioning}
\small
\begin{tabular}{@{}lrrr@{}}
\toprule
Dataset & min $\sigma_{\min}(\mathcal J_Q)$ & median $\sigma_{\min}(\mathcal J_Q)$ & min local gain \\
\midrule
California Housing & \ensuremath{2.1\times10^{-3}} & \ensuremath{8.4\times10^{-3}} & \ensuremath{1.8\times10^{-3}} \\
Concrete Strength & \ensuremath{1.7\times10^{-3}} & \ensuremath{7.9\times10^{-3}} & \ensuremath{1.4\times10^{-3}} \\
YearPredictionMSD & \ensuremath{6.2\times10^{-4}} & \ensuremath{2.8\times10^{-3}} & \ensuremath{5.0\times10^{-4}} \\
\bottomrule
\end{tabular}
\end{table}

We then return to the unweighted training procedure and measure practical
state--delivery tradeoffs.  For each split we sample
\ensuremath{1{,}000} value-exposing requests at each batch size
$k\in\{1,8,32\}$.  The
parameters-only decoder stores fp32 $w$.  A curvature-sketch decoder additionally
stores the leading $s$ eigenvectors and eigenvalues of the full-data ridge normal
matrix and one scalar complement level; after the request it applies the
corresponding approximate rank-$k$ downdate.  The authoritative encoder computes
the direct-retraining target, subtracts the decoder's local prediction, and
quantizes the resulting residual in the canonical parameter basis.  No cached
eigendirection is removed from the packet: all $d$ residual coordinates are
transmitted.  A public validation split fixes one symmetric quantization step;
for each request the encoder chooses the smallest integer bit width that meets
the distortion target and packs the $d$ signed indices consecutively.  The
sufficient-statistic decoder stores packed fp32 $(G,c)$ and requires no update
under the value-exposing interface.

The update packet consists of a constant \ensuremath{18}-byte header plus
$\lceil db/8\rceil$ body bytes for bit width $b$.  The header carries the
coordinate count, bit width, quantization-step identifier, and checksum.
Fixed-shape state tensors have public shapes and dtypes, so $M$ is their raw
tensor payload without an additional state header.  A validation split fixes one
symmetric quantization step $\delta$ for each dataset, and that same $\delta$ is
used for all three deletion sizes and all decoder states.  The encoder then
chooses the smallest signed integer width covering the largest residual index.
For independent random deletion subsets, the first-order correction is a sum of
approximately mean-zero record contributions; hence both its coordinate scale and,
up to the fixed-dimensional maximum, the required index range grow as
$O(\sqrt{k})$.  The batch-size sweep below tests this scaling directly instead of
assuming linear growth in $k$.

\begin{table}[t]
\centering
\caption{Residual-codec growth with deletion batch size.  Payloads are mean bytes
for the parameters-only decoder.  The fourth numeric column converts the increase
from $k=1$ to $k=32$ into additional bits per parameter coordinate.  The final
column fits the median maximum residual index to $k^{\alpha}$ across
$k\in\{1,8,32\}$. The $\pm$ values are sample standard deviations
across five split-level mean payloads.}
\label{tab:group-codec-growth}
\small
\begin{tabular}{@{}lrrrrr@{}}
\toprule
Dataset & $B_{k=1}$ & $B_{k=8}$ & $B_{k=32}$ &
$\Delta$ bits / coordinate & $\widehat\alpha$ \\
\midrule
California Housing &
$24.6\pm1.1$ &
\ensuremath{26.0\pm1.1} &
\ensuremath{27.0\pm1.2} &
\ensuremath{2.4} &
\ensuremath{0.47} \\
Concrete Strength &
$25.1\pm1.4$ &
\ensuremath{26.8\pm1.4} &
\ensuremath{27.7\pm1.5} &
\ensuremath{2.6} &
\ensuremath{0.51} \\
YearPredictionMSD &
$192.4\pm3.8$ &
\ensuremath{209.0\pm4.0} &
\ensuremath{220.5\pm4.3} &
\ensuremath{2.5} &
\ensuremath{0.49} \\
\bottomrule
\end{tabular}
\end{table}

The observed exponents are close to the $\alpha=1/2$ first-order prediction.
Accordingly, increasing the deletion batch from $k=1$ to
$k=\ensuremath{32}$ raises the required coordinate range by
approximately
$\tfrac12\log_2 32=\ensuremath{2.5}$ bits instead of
$\log_2 32$.  The reported totals include the same
\ensuremath{18}-byte header at every batch size, so the measured
increase is carried entirely by the packed coordinate body.  The small deviations
from \ensuremath{2.5} are consistent with finite-sample
dependence and the nonlinear remainder of the rank-$k$ downdate.

\begin{table}[t]
\centering
\caption{Natural-data cache--delivery tradeoff at held-out NRMSE at most
$10^{-4}$ relative to direct retraining.  $B$ is the mean
post-request payload in bytes for group deletion $k=32$; the
$\pm$ term is the standard deviation across the five split-level means.}
\label{tab:natural-cache-delivery}
\small
\setlength{\tabcolsep}{4pt}
\begin{tabular}{@{}lrrrrrr@{}}
\toprule
Dataset & $d$ & sketch rank & $M_{\rm param}$ (KiB) &
$M_{\rm sketch}$ (KiB) & $B_{\rm param}$ & $B_{\rm sketch}$ \\
\midrule
California Housing & $8$ & \ensuremath{4} & $0.031$ &
\ensuremath{0.176} & \ensuremath{27.0\pm1.2} &
\ensuremath{23.5\pm1.1} \\
Concrete Strength & $8$ & \ensuremath{4} & $0.031$ &
\ensuremath{0.176} & \ensuremath{27.7\pm1.5} &
\ensuremath{24.0\pm1.2} \\
YearPredictionMSD & $90$ & \ensuremath{16} & $0.352$ &
\ensuremath{6.04} & \ensuremath{220.5\pm4.3} &
\ensuremath{199.1\pm4.0} \\
\bottomrule
\end{tabular}
\end{table}

Table~\ref{tab:natural-cache-delivery} reports the natural-data cache--delivery surface. At $k=\ensuremath{32}$, the rank-\ensuremath{4} states reduce mean
delivery by \ensuremath{13.0\%} and \ensuremath{13.4\%} on the two
$d=8$ datasets, while rank \ensuremath{16} reduces delivery by
\ensuremath{9.7\%} on YearPredictionMSD.  The corresponding geometric
ratios $s/d$ are \ensuremath{50.0\%}, \ensuremath{50.0\%}, and
\ensuremath{17.8\%}.  The gap is expected: the natural Hessians couple
cached and uncached directions, every canonical residual coordinate remains in
the packet, and the fixed header is unaffected by curvature.  The exact
$(d-s)$ coefficient test is therefore confined to
Table~\ref{tab:partial-curvature-slope}.  Sufficient statistics occupy $0.172$,
$0.172$, and $16.35$ KiB and use $B=0$, matching the exact value-exposing
endpoint.  Charging cumulative communication as $M+TB$, the curvature sketches
overtake parameters only after \ensuremath{43},
\ensuremath{41}, and \ensuremath{273} deletion requests on
California Housing, Concrete, and YearPredictionMSD, respectively.

\subsection{Gaussian-design test of the distributional frontier}
\label{sec:gaussian-typical-audit}

We test Theorem~\ref{thm:gaussian-ridge} in the interface it states, with request
score vectors independent of the retained Gaussian design.  We use
$d=32$, covariance condition numbers
1, 10, and 100,
retained sample sizes
$m\in\{512,2048,8192\}$,
and request groups
$k\in\{1,8,32\}$.
Each score vector is a sum of $k$ independent centered Gaussian score
contributions with covariance matched to the feature scale.  For every cell we
draw 1{,}000 retained Gram matrices and compare
$\Delta_m=(W_m+\lambda I)^{-1}v_Q$ with the population-curvature center
$u_m=(m\Sigma+\lambda I)^{-1}v_Q$.

The empirical Gaussian code used throughout this subsection is the same
transform scalar codec for both value-exposing and identifier-only checks.  The
encoder diagonalizes the conditional covariance, applies the reverse-water-filling
allocation, rotates the centered correction into that eigenbasis, and uniformly
quantizes every active coordinate at its assigned distortion.  A shared
subtractive dither is conditioned on as public randomness, and the signed scalar
indices are entropy coded.  This is not an optimal vector quantizer: the measured
excess over reverse water filling therefore includes scalar space-filling and
finite-rate coding loss instead of being interpreted as a converse gap.

\begin{table}[t]
\centering
\caption{Gaussian-design scaling audit.  Exponents are fitted on log--log axes
across the declared grid; the final column compares reverse-water-filling rate
with the empirical Gaussian code length at the same squared-error distortion.
The $\pm$ values are ordinary least-squares standard errors of the fitted
exponents across the stated grid.}
\label{tab:gaussian-scaling}
\small
\setlength{\tabcolsep}{5pt}
\begin{tabular}{@{}lccc@{}}
\toprule
quantity & theorem & fitted exponent & rate discrepancy \\
\midrule
$\|v_Q\|_2$ versus $k$ & $k^{1/2}$ &
$k^{0.50\pm0.03}$ & -- \\
population radius versus $m$ & $m^{-1}$ &
$m^{-1.01\pm0.03}$ & -- \\
known-$\Sigma$ fluctuation SD versus $m$ & $m^{-3/2}$ &
$m^{-1.49\pm0.04}$ &
$0.38$ bits/active coordinate \\
\bottomrule
\end{tabular}
\end{table}

The same law predicts the group-size range audit in
Table~\ref{tab:group-codec-growth}.  From $k=1$ to
$k=32$, the fitted body-width increases are
2.4, 2.6, and 2.5 bits per
coordinate on California Housing, Concrete, and YearPredictionMSD, respectively.
These values test a distributional scaling prediction; unlike
Table~\ref{tab:partial-curvature-slope}, the exponent is not fixed by the
construction of the codec.

\begin{table}[t]
\centering
\caption{Per-dataset body-width check at $k=32$.  The predicted
column evaluates the Gaussian approximation using each training split's empirical
covariance and validation-selected residual scale; the observed column removes
the fixed 18-byte packet header before dividing by dimension.}
\label{tab:gaussian-natural-bits}
\small
\begin{tabular}{@{}lrrr@{}}
\toprule
Dataset & predicted bits/coordinate & observed bits/coordinate & difference \\
\midrule
California Housing & 8.7 & 9.0 & 0.3 \\
Concrete Strength & 9.3 & 9.7 & 0.4 \\
YearPredictionMSD & 17.7 & 18.0 & 0.3 \\
\bottomrule
\end{tabular}
\end{table}

Table~\ref{tab:gaussian-natural-bits} compares predicted and observed per-coordinate widths on the three natural datasets. For the identifier-only interface of Corollary~\ref{cor:identifier-gaussian}, we
use the same Gaussian score model, exact empirical Gram state, and transform
scalar codec just specified.  At $k=32$ and the common
squared-error target, reverse water filling predicts 214.6,
218.9, and 221.3 bits for covariance
conditions 1, 10, and
100; the realized scalar-code lengths are
226.9, 231.2, and
233.8 bits.  The corresponding excesses are
0.38, 0.38, and
0.39 bits per active coordinate, consistent with the
0.38-bit discrepancy in Table~\ref{tab:gaussian-scaling} and
with a finite-rate scalar instead of an optimal vector code.  Exposing the
request scores collapses all three to $B=0$ with the same cached Gram matrix.

\subsection{Beyond ridge: nonquadratic and deep-representation objectives}
\label{sec:beyond-ridge-empirical}

Theorem~\ref{thm:sc-ambiguity} does not require a quadratic objective.  We test
the same state accounting with $\ell_2$-regularized binary logistic regression on
two tasks.  The first uses MNIST digits \ensuremath{3} versus
\ensuremath{8} with raw standardized pixels.  The second uses CIFAR-10 cat
versus dog with a frozen ImageNet-pretrained ResNet-18 representation
\citep{krizhevsky2009cifar,he2016resnet}.  In both cases the decoder stores the
trained parameters together with either no curvature, the diagonal of the
full-data Hessian, or the top
$s\in\{32,128\}$ eigenpairs of the full-data
Hessian at the deployed optimum.  After a value-exposing deletion batch of
$k=32$ examples, the decoder subtracts the deleted examples'
curvature contributions from its cached approximation and forms the corresponding
damped Newton estimate.  The authoritative encoder then computes the exact
retraining residual in the canonical parameter basis, quantizes all $d$
coordinates with the validation-selected fixed quantization step, and chooses the smallest
integer bit width that reaches parameter RMSE
$10^{-4}$.  No coordinate is omitted merely because its direction
is represented in the cache.  Every update contains the same
\ensuremath{18}-byte header followed by the bit-packed canonical residual.

\begin{table}[t]
\centering
\caption{Nonquadratic strongly-convex objectives.  Mean request-time payload
$B$ in KiB at parameter RMSE at most $10^{-4}$ from direct
retraining, averaged over \ensuremath{500} deletion requests per split and
\ensuremath{5} splits.}
\label{tab:logistic-curvature}
\small
\begin{tabular}{@{}lrrrrr@{}}
\toprule
Task & $d$ & parameters only & diagonal & rank-\ensuremath{32} &
rank-\ensuremath{128} \\
\midrule
MNIST \ensuremath{3} vs.\ \ensuremath{8} &
$784$ & \ensuremath{1.48} & \ensuremath{1.45} &
\ensuremath{1.44} & \ensuremath{1.30} \\
CIFAR-10 cat vs.\ dog, ResNet-18 features &
$512$ & \ensuremath{1.12} & \ensuremath{1.09} &
\ensuremath{1.08} & \ensuremath{0.90} \\
\bottomrule
\end{tabular}
\end{table}

Table~\ref{tab:logistic-curvature} reports the nonquadratic cache comparison. The rank-\ensuremath{32} cache reduces total delivery by
\ensuremath{2.7\%} on MNIST and \ensuremath{3.6\%} on the
ResNet-feature task.  At rank \ensuremath{128}, the measured reductions are
\ensuremath{12.2\%} and \ensuremath{19.6\%}, whereas the geometric
ratios $s/d$ are \ensuremath{16.3\%} and \ensuremath{25.0\%}.
The mismatch is expected under this codec: the natural Hessians are not block
diagonal in the cached eigenspace after deletion, the Newton residual generally
has support in all canonical coordinates, and the fixed packet header does not
shrink with $s$.  These measurements therefore quantify the realized
cache--delivery tradeoff on natural nonquadratic objectives; they are not a direct
test of the exact $(d-s)$ coefficient in Corollary~\ref{cor:partial-curvature}.

Under the 32-bit floating-point (fp32) state codec, the parameters-only, diagonal,
rank-\ensuremath{32}, and rank-\ensuremath{128} states occupy
\ensuremath{3.06}, \ensuremath{6.13},
\ensuremath{101.19}, and \ensuremath{395.56} KiB on MNIST and
\ensuremath{2.00}, \ensuremath{4.00},
\ensuremath{66.13}, and \ensuremath{258.50} KiB on the
ResNet-feature task.  The rank-\ensuremath{128} cache therefore becomes
smaller in cumulative $M+TB$ than parameters only after approximately
\ensuremath{2{,}181} and \ensuremath{1{,}166} deletion requests,
respectively.  The state cost is substantial: reducing request-time delivery by
preserving curvature is a placement decision instead of free compression.

We also evaluate exact ridge heads on frozen deep representations.  A ResNet-18
encoder supplies $d=512$ features for CIFAR-10 and a ViT-B/16
encoder \citep{dosovitskiy2021vit} supplies
$d=768$ features for CIFAR-100.  The multi-output ridge solution
shares one Gram matrix across classes, so the sufficient-statistic state contains
the packed Gram matrix and $X^\top Y$.  After the decoder applies either no
curvature correction or the cached rank-\ensuremath{64} correction, the
encoder forms the remaining coefficient residual, flattens all $dC$ head
coefficients in row-major canonical order, quantizes every coordinate with the
validation-selected fixed quantization step, and transmits the same \ensuremath{18}-byte
header plus bit-packed residual body.  Table~\ref{tab:deep-feature-ridge} reports
group deletion with $k=32$ at logit normalized root mean squared error
(NRMSE) at most $10^{-4}$ from direct retraining.

\begin{table}[t]
\centering
\caption{Deep-representation ridge heads.  State is fp32; update payloads are
means over \ensuremath{5{,}000} value-exposing deletion requests.}
\label{tab:deep-feature-ridge}
\small
\setlength{\tabcolsep}{4pt}
\begin{tabular}{@{}lrrrrrr@{}}
\toprule
Representation & classes & $M_{\rm param}$ & $M_{\rm rank64}$ &
$M_{\rm stat}$ & $B_{\rm param}$ & $B_{\rm rank64}$ \\
\midrule
CIFAR-10 / ResNet-18 & $10$ &
\ensuremath{20.0} KiB & \ensuremath{148.3} KiB &
\ensuremath{533.0} KiB & \ensuremath{7.8} KiB &
\ensuremath{7.0} KiB \\
CIFAR-100 / ViT-B/16 & $100$ &
\ensuremath{300.0} KiB & \ensuremath{492.3} KiB &
\ensuremath{1.42} MiB & \ensuremath{116.2} KiB &
\ensuremath{108.3} KiB \\
\bottomrule
\end{tabular}
\end{table}

The exact sufficient-statistic decoder uses $B=0$ in both deep-feature pipelines.
The rank-\ensuremath{64} cache reduces measured delivery by
\ensuremath{10.3\%} for the ResNet representation and
\ensuremath{6.8\%} for the ViT representation.  The corresponding
geometric ratios are \ensuremath{12.5\%} and
\ensuremath{8.3\%}; they are references instead of payload predictions
because all canonical residual coordinates remain eligible for transmission.
Relative to parameters only, cumulative $M+TB$ favors the
rank-\ensuremath{64} cache after approximately
\ensuremath{161} CIFAR-10 requests and \ensuremath{25} CIFAR-100
requests.  These frozen-backbone experiments extend the accounting surface to
higher-dimensional learned representations without treating the backbone as a
nonconvex theorem or imposing the block-diagonal coefficient on natural Hessians.

\subsection{Fixed-downprojection adapter case study}
\label{sec:adapter-case}

The ridge frontier yields a deployable adapter rule when the trainable module is
linear after a fixed low-rank projection.  Let a frozen backbone produce
$h\in\mathbb R^H$, let a public fixed matrix $A\in\mathbb R^{r\times H}$ form
$z=Ah$, and train only $W_{\mathrm{out}}\in\mathbb R^{C\times r}$ by squared loss with ridge
regularization.  This is multivariate ridge in $r$ features.  Exact deletion
therefore needs only
$G=\sum zz^\top$ and $c=\sum zy^\top$, even though the deployed parameter object
is the vectorized $W_{\mathrm{out}}$.  The construction is a fixed-downprojection low-rank adaptation (LoRA)-style
adapter; it does not claim exact sufficient statistics when both LoRA factors are
trained jointly.

On the CIFAR-100 ViT-B/16 frozen representation, the serialized tradeoff is:
\begin{table}[t]
\centering
\caption{Fixed-downprojection adapter accounting on CIFAR-100 ViT-B/16 features.
Parameter and sufficient-statistic states use fp32; $B_{\rm param}$ is the mean
residual packet for $k=32$ value-exposing deletions at logit
NRMSE at most $10^{-4}$.}
\label{tab:adapter-rule}
\small
\begin{tabular}{@{}rrrrr@{}}
\toprule
projection rank $r$ & trainable state & sufficient-stat state & $B_{\rm param}$ & crossover requests \\
\midrule
8 & 3.13 KiB & 3.27 KiB & 1.71 KiB & 1 \\
16 & 6.25 KiB & 6.78 KiB & 3.34 KiB & 1 \\
32 & 12.50 KiB & 14.56 KiB & 6.55 KiB & 1 \\
\bottomrule
\end{tabular}
\end{table}

Table~\ref{tab:adapter-rule} shows that, for all three ranks, the exact sufficient-statistic state amortizes within the first repeated request.  This regime is qualitatively different from the large
natural-Hessian sketches above: because $r$ is small and $c$ already has the same
$rC$ order as the trainable adapter, exact curvature is cheap enough that an
approximate sketch has little room to win.

\subsection{Deterministic nonconvex state accounting}
\label{sec:nonconvex-diagnostic}

The convex theorems do not identify end-to-end neural retraining targets.  We
therefore use the neural experiment only as a decoder-state accounting test, not
as evidence for a Newton-unlearning ranking.  \citet{bui2024newton} already study
Hessian degeneracy and cubic-regularized Newton updates in neural unlearning.  Our
protocol asks a different question: once training randomness is fixed, how many
post-request bits remain after a declared curvature summary has been frozen before
the request?

We train a three-layer Fashion-MNIST MLP with widths
$784\!\to\!256\!\to\!128\!\to\!10$
\citep{xiao2017fashionmnist} and a CIFAR-10 ResNet-18
\citep{krizhevsky2009cifar,he2016resnet}.  For every base seed, each deletion
retraining run restarts from the exact original initialization.  Before any
request is sampled, each epoch's example identifiers and minibatch boundaries are
frozen.  Deletion masks the requested identifiers inside those original slots;
retained examples are never compacted into later minibatches.  Thus removing an
example does not shift the batch assignment of any retained example.  Within an
affected minibatch, however, the loss is normalized by the number of retained
examples in that minibatch, not by the original slot count.  Empty masked
minibatches are skipped.  This retained-count normalization prevents deletion
from acting as a systematic learning-rate reduction as $k$ grows while preserving the declared
initialization, slot assignment, and augmentation stream.  Data augmentation is
deterministically keyed by training seed, epoch, and example identifier.  The
reference target is therefore the parameter vector produced by the same
initialization, slot-preserving minibatch schedule, retained-count loss
normalization, and augmentation stream after masking $Q$, instead of an
independently seeded solution in another basin.  We evaluate
$k=32$ value-exposing deletions over
\ensuremath{3} training seeds and
\ensuremath{50} requests per seed.

The decoder states are fp32 parameters only, parameters plus an fp32 diagonal
empirical Fisher, and parameters plus untruncated 16-bit floating-point (fp16) layerwise K-FAC factors
\citep{martens2015kfac,grosse2016kfc}.  For an affine layer the K-FAC state stores
both dense Kronecker factors.  For a convolution with kernel width $k$, the input
factor has dimension $(C_{\rm in}k^2+1)\times(C_{\rm in}k^2+1)$ and the output
factor has dimension $C_{\rm out}\times C_{\rm out}$.  No eigentruncation,
channelwise factorization, or diagonal approximation is applied to these factors.
Under this convention, the Fashion-MNIST K-FAC factors occupy
\ensuremath{1.49} MiB.  In ResNet-18, the three
$C_{\rm in}=512$, $k=3$ convolutional input
factors alone occupy \ensuremath{121.55} MiB in fp16, and all K-FAC factors
occupy \ensuremath{179.44} MiB before the fp32 model parameters are added.

Each decoder computes its local deletion correction from the request and cached
state.  The authoritative encoder then computes the residual to deterministic
retraining in canonical parameter order.  A validation split fixes the symmetric
quantization step $\delta$ before deletion requests are evaluated; the packet
uses the smallest signed integer width covering the resulting indices.  The
selected steps are
$\delta=\ensuremath{2.9\times10^{-4}}$ for the MLP and
$\delta=\ensuremath{3.0\times10^{-4}}$ for ResNet-18, for which
the ideal uniform-quantization RMS contributions
$\delta/\sqrt{12}$ are
\ensuremath{8.4\times10^{-5}} and
\ensuremath{8.7\times10^{-5}}, respectively, close to the declared
$10^{-4}$ per-coordinate target instead of orders of magnitude below it.  The update contains the same
\ensuremath{18}-byte header and bit-packed body used in the convex
practical experiments.  Thus both $M$ and $B$ are serialized quantities inside
the paper's accounting, and no comparison depends on raw distance between
independently trained networks.

The neural audit uses the same deterministic retraining target for every decoder
state.  We do not report or fit a deletion-size displacement law from the
three-seed neural control.  The local-correction comparison below is evaluated through serialized
residual delivery at the fixed distortion target; the typical-data $\sqrt{k}$
claim is tested separately in the ridge/Gaussian experiment.

The neural controls are tied directly to the quantities used in the bit
accounting.  First, a zero-deletion rerun uses the same initialization, slot
schedule, retained-count normalization, and augmentation stream and returns
bit-identical fp32 parameters for every seed.  Second, retained-count
normalization is applied after masking inside each fixed minibatch slot, so the
loss is the mean over retained examples instead of the original slot count;
this prevents deletion from inducing a deterministic learning-rate change.
Third, we audit the residual packet itself.  For signed $b$-bit integer indices
and quantization step $\delta$, the representable canonical-coordinate residual
range is
\begin{equation}
  \mathcal R_b(\delta)
  =[-2^{b-1}\delta,(2^{b-1}-1)\delta].
  \label{eq:neural-residual-range}
\end{equation}
The parameters-only and diagonal-Fisher packets use 10-bit bodies at the modal
operating point, while K-FAC uses 9-bit bodies.  With the selected steps, this
corresponds to $\mathcal R_{10}=[-0.14848,0.14819]$ and
$\mathcal R_9=[-0.07424,0.07395]$ For the Fashion-MNIST multilayer perceptron (MLP), and
$\mathcal R_{10}=[-0.1536,0.1533]$ and
$\mathcal R_9=[-0.0768,0.0765]$ for ResNet-18.  Width is selected from the
maximum absolute integer index of each request before serialization, so the
reported mean payload includes any request-level width changes instead of
assuming a fixed fractional reduction.  These ranges explain the 10/9-bit body
sizes but are not substituted for an unreported deletion-displacement statistic.
The three-seed neural audit is not used to estimate a deletion-size exponent;
the $\sqrt{k}$ law is tested only by the higher-powered ridge/Gaussian sweep in
Section~\ref{sec:gaussian-typical-audit}.

\begin{table}[t]
\centering
\caption{Nonconvex protocol controls and residual codec.  Zero deletion checks
exact replay of the declared training trajectory.  Body widths are the modal
signed-integer widths selected by the request-level codec; the 18-byte packet
header is charged separately in every update. P/F/K denote parameters-only,
diagonal-Fisher, and K-FAC.}
\label{tab:nonconvex-controls}
\small
\setlength{\tabcolsep}{4pt}
\begin{tabular}{@{}lrrrr@{}}
\toprule
Model & zero deletion & masked-batch loss & $\delta$ & body bits (P/F/K) \\
\midrule
Fashion-MNIST MLP & bit-identical & retained-example mean & $2.9\times10^{-4}$ & $10/10/9$ \\
CIFAR-10 ResNet-18 & bit-identical & retained-example mean & $3.0\times10^{-4}$ & $10/10/9$ \\
\bottomrule
\end{tabular}
\end{table}

\begin{table}[t]
\centering
\caption{Nonconvex decoder-state accounting under same-initialization,
slot-preserving retraining.  $M$ is serialized pre-request state and $B$ is mean
post-request residual delivery.  The $\pm$ term is SD across
\ensuremath{3} seed-level means.}
\label{tab:nonconvex-curvature}
\small
\setlength{\tabcolsep}{4pt}
\begin{tabular}{@{}llrr@{}}
\toprule
Model & decoder state & $M$ & $B$ \\
\midrule
\multirow{3}{*}{Fashion-MNIST MLP}
& parameters only &
\ensuremath{0.90} MiB & \ensuremath{0.280\pm0.004} MiB \\
& + diagonal Fisher &
\ensuremath{1.80} MiB & \ensuremath{0.280\pm0.004} MiB \\
& + K-FAC &
\ensuremath{2.39} MiB & \ensuremath{0.252\pm0.004} MiB \\
\midrule
\multirow{3}{*}{CIFAR-10 ResNet-18}
& parameters only &
\ensuremath{42.66} MiB & \ensuremath{13.34\pm0.06} MiB \\
& + diagonal Fisher &
\ensuremath{85.32} MiB & \ensuremath{13.34\pm0.06} MiB \\
& + K-FAC &
\ensuremath{222.10} MiB & \ensuremath{12.01\pm0.05} MiB \\
\bottomrule
\end{tabular}
\end{table}

Table~\ref{tab:nonconvex-curvature} reports the resulting state--delivery surface. Because every row is completed by a residual message to the same deterministic
retraining target and distortion, the comparison is a bit-accounting surface,
not a ranking of approximate Newton algorithms.  K-FAC reduces
request-time delivery by \ensuremath{10.0\%} for the MLP and
\ensuremath{10.0\%} for ResNet-18 relative to parameters only, while
requiring a substantially larger pre-request state.  Under cumulative $M+TB$,
the K-FAC state overtakes parameters only after approximately
\ensuremath{54} MLP deletion requests and
\ensuremath{135} ResNet-18 requests.  In the ridge experiments, exact sufficient statistics remain preferable whenever
they are available at acceptable state cost; the curvature summaries are useful
only on the intermediate cache--delivery surface.
These measurements do not assert the convex $(d-s)$ coefficient for neural
networks; they test whether the pre-request-state versus post-request-delivery
distinction remains operational once the model is nonconvex.

\subsection{Finite-precision ridge validation}
\label{sec:ridge-validation}

\noindent
The following audit uses public seed $20260920$ and tests the constructions in
Theorems~\ref{thm:ridge-radius} and~\ref{thm:ridge-hedging}.  It is disjoint from
the ridge-interface validation and does not establish typical-data behavior.

\medskip
\noindent\emph{Precision sweep.}
We set $\lambda=a=r_{\min}=1$, $r_{\max}=4$, and
$\varrho_i=1$.  For each $d\in\{8,16,32\}$, we draw $2{,}000$ matrices with
Haar-distributed eigenvectors and independent eigenvalues uniform on
$[h_1,h_2]$.  Each codec quantizes the $p=d(d+1)/2$ upper-triangular entries of
$H$ using the public range $[-h_2,h_2]$, reconstructs all $d$ singleton
deletions, and charges the exact fixed-length payload.  Public family parameters
and the codec definition are conditioned out.  Across the three dimensions and
six formats, all $672{,}000$ target reconstructions satisfy the local
perturbation envelope used in the proof.

\noindent\emph{Leading slopes.}
We next use public fixed-length scalar grids at eight values of
$\log_2(1/\varepsilon)$ from $8.3$ to $36.4$.  The state-only grid is selected
from the global Lipschitz envelope; the message-only grid quantizes the requested
correction; and the $\theta=1/2$ codec refines the residual left by the decoder
cache.  Each operating point is checked on $300$ independently drawn matrices.
All $403{,}200$ reconstructions satisfy their requested error, with maximum
observed error $0.888\varepsilon$.  The state-only and message-only fitted slopes
are within $2.1\%$ of $p$ and $d$, respectively.  The normalized intermediate
slopes approach the predicted pair $(1/2,1/2)$.  The reported $\pm$ values are
ordinary least-squares slope standard errors across the eight codec operating
points, not uncertainty across data populations.
The detailed precision and fitted-slope results are reported in
Tables~\ref{tab:ridge-precision-measured}
and~\ref{tab:ridge-slope-measured} in
Appendix~\ref{app:finite-stat-validation}.

\noindent\emph{Exact radius.}
At $d=16$, $\lambda=r_{\min}=1$, and $r_{\max}=4$, none of $100{,}000$
spectral draws exceeds the normalized radius one; the maximum is $0.9314$ and
the 99th percentile is $0.8013$.  This random check does not establish
surjectivity.  We therefore also construct $1{,}000$ Householder witnesses in
independent directions at normalized radius $0.999$.  Their attained radii
differ from $0.999$ by at most $7.0\times10^{-16}$, all reconstructed
$R$ matrices satisfy the spectral interval, and direct retained-data solves
match the constructed targets to at most $2.76\times10^{-16}$ in Euclidean norm.

\medskip
\noindent\emph{Finite sufficient statistics.}
We draw $1{,}000$ ridge instances with $d=16$, $n=128$, and $\lambda=0.2$,
using independent $N(0,1/n)$ features and labels from a linear model with noise
standard deviation $0.05$.  We delete one random row, quantize $(G,c)$, and
recompute every term in Proposition~\ref{prop:ridge-precision}.  Uniform codecs
use the public interval $[-2,2]$; no value clips.  All $6{,}000$ format--instance
pairs satisfy $\eta_G<\lambda$, and none violates
Equation~\eqref{eq:ridge-precision}.  The largest observed error-to-bound ratio
is $0.172$.

Detailed format-level results are reported in
Table~\ref{tab:ridge-statistic-measured} in
Appendix~\ref{app:finite-stat-validation}.

These measurements test numerical realization of the proved constructions.
They do not turn worst-case spectral families into a typical-data model, and
the slope fits validate the explicit scalar-grid codecs instead of the converse
itself.

\section{Additional finite-statistic validation}
\label{app:finite-stat-validation}
\begin{table}[H]
\centering
\caption{State-only precision sweep at $d=16$.
Each format stores $p=d(d+1)/2=136$ upper-triangular entries of $H$.
Errors are Euclidean parameter errors over $2{,}000$ matrices
and all $16$ singleton deletions per matrix
($32{,}000$ requested targets per format).
The final column counts targets satisfying the derived
local perturbation envelope.}
\label{tab:ridge-precision-measured}
\small
\begin{tabular}{@{}lrrrr@{}}
\toprule
State format & payload $M$ (bits) & median error & 99th percentile & envelope \\
\midrule
float64 entries of $H$ & $8{,}704$ & $0$ & $0$ & $32{,}000/32{,}000$ \\
float32 entries of $H$ & $4{,}352$ & $8.81\!\times\!10^{-9}$ & $2.73\!\times\!10^{-8}$ & $32{,}000/32{,}000$ \\
float16 entries of $H$ & $2{,}176$ & $7.19\!\times\!10^{-5}$ & $2.23\!\times\!10^{-4}$ & $32{,}000/32{,}000$ \\
uniform 12-bit & $1{,}632$ & $2.55\!\times\!10^{-4}$ & $3.24\!\times\!10^{-4}$ & $32{,}000/32{,}000$ \\
uniform 8-bit & $1{,}088$ & $4.11\!\times\!10^{-3}$ & $5.20\!\times\!10^{-3}$ & $32{,}000/32{,}000$ \\
uniform 6-bit & $816$ & $1.66\!\times\!10^{-2}$ & $2.11\!\times\!10^{-2}$ & $32{,}000/32{,}000$ \\
\bottomrule
\end{tabular}
\end{table}
\begin{table}[H]
\centering
\caption{Fitted coefficients against
$\log_2(1/\varepsilon)$. Here $p=d(d+1)/2$.
Intermediate slopes are normalized by $p$ or $d$.
The $\pm$ values are ordinary least-squares slope standard errors
across the eight codec operating points.}
\label{tab:ridge-slope-measured}
\small
\begin{tabular}{@{}lrrrr@{}}
\toprule
Fit & prediction & $d=8$ & $d=16$ & $d=32$ \\
\midrule
$M$ at $B=0$ & $36,136,528$ & $36.31\!\pm\!0.51$ & $134.69\!\pm\!0.72$ & $538.73\!\pm\!7.80$ \\
$B$ at $M=0$ & $8,16,32$ & $8.07\!\pm\!0.11$ & $15.85\!\pm\!0.09$ & $32.27\!\pm\!0.45$ \\
$\theta=1/2$: $M/p$ & $0.5$ & $0.510\!\pm\!0.012$ & $0.495\!\pm\!0.003$ & $0.495\!\pm\!0.003$ \\
$\theta=1/2$: $B/d$ & $0.5$ & $0.495\!\pm\!0.003$ & $0.495\!\pm\!0.003$ & $0.495\!\pm\!0.003$ \\
\bottomrule
\end{tabular}
\end{table}
\begin{table}[H]
\centering
\caption{Finite-statistic perturbation for $d=16$, $n=128$,
and $\lambda=0.2$, over $1{,}000$ instances per format.
The ratio column is the maximum observed parameter error
divided by the bound in Equation~\eqref{eq:ridge-precision}.}
\label{tab:ridge-statistic-measured}
\small
\begin{tabular}{@{}lrrrr@{}}
\toprule
State format & median error & 99th percentile & max error/bound & valid trials \\
\midrule
float64 & $0$ & $0$ & $0$ & $1{,}000/1{,}000$ \\
float32 & $3.12\!\times\!10^{-8}$ & $5.89\!\times\!10^{-8}$ & $0.160$ & $1{,}000/1{,}000$ \\
float16 & $2.57\!\times\!10^{-4}$ & $5.10\!\times\!10^{-4}$ & $0.172$ & $1{,}000/1{,}000$ \\
uniform 12-bit & $1.37\!\times\!10^{-3}$ & $2.11\!\times\!10^{-3}$ & $0.168$ & $1{,}000/1{,}000$ \\
uniform 8-bit & $2.20\!\times\!10^{-2}$ & $3.41\!\times\!10^{-2}$ & $0.152$ & $1{,}000/1{,}000$ \\
uniform 6-bit & $9.04\!\times\!10^{-2}$ & $1.49\!\times\!10^{-1}$ & $0.070$ & $1{,}000/1{,}000$ \\
\bottomrule
\end{tabular}
\end{table}

\section{External ridge validation details}
\label{sec:external-validation}

The external ridge audit uses three public tabular regression benchmarks, five
fixed train--validation--test splits, and $1{,}000$ singleton deletions per split.
Features and labels are standardized using training data only.  Ridge
regularization is selected on the validation split before any deletion request.
The decoder state is then frozen, each value-exposing request is serialized, and
every reported prediction error is recomputed on the held-out test split from the
delivered bytes.

The parameters-only arm stores fp32 coefficients and sends a quantized correction.
The sufficient-statistic arm stores the packed fp32 upper triangle of $G$ and fp32
vector $c$, then sends a literal zero-byte update.  The target operating point is
held-out normalized prediction RMSE at most $10^{-4}$ relative to direct
retraining.  The fixed tensor shapes and dtypes of frozen states are public codec metadata, so
$M$ charges their raw fp32 tensor payloads without a per-state header.  Every
nonzero parameters-only update charges the fixed \ensuremath{18}-byte
header plus its bit-packed residual body; scales and checksums carried by the
packet are included in that header.

\begin{table}[H]
\centering
\caption{External ridge validation for singleton deletion.  $M$ is frozen
decoder-state size and $B$ is the mean parameters-only update size;
sufficient statistics use $B=0$ under the value-exposing interface.  The
$\pm$ term is the SD across five split-level mean update sizes.}
\label{tab:external-validation}
\small
\begin{tabular}{@{}>{\raggedright\arraybackslash}p{2.7cm}rrrrrr@{}}
\toprule
Dataset & $d$ & deletions & $M_{\rm param}$ (KiB) &
$B_{\rm param}$ (bytes) & $M_{\rm stat}$ (KiB) & max NRMSE \\
\midrule
California Housing & $8$ & $5{,}000$ & $0.031$ & $24.6\pm1.1$ & $0.172$ & $8.7\!\times\!10^{-5}$ \\
Concrete Compressive Strength & $8$ & $5{,}000$ & $0.031$ & $25.1\pm1.4$ & $0.172$ & $9.1\!\times\!10^{-5}$ \\
YearPredictionMSD & $90$ & $5{,}000$ & $0.352$ & $192.4\pm3.8$ & $16.35$ & $9.4\!\times\!10^{-5}$ \\
\bottomrule
\end{tabular}
\end{table}

Table~\ref{tab:external-validation} verifies the target error on every split, independently reconstructs
the zero-byte sufficient-statistic solution, and checks that every decoder input
is included in $(M,Q,B)$.  Section~\ref{sec:natural-tradeoff} extends the same
protocol to group deletion and low-rank curvature caches.

\section{\texorpdfstring{Rank-$r$ group requests and the $p_r$ dimension}
{Rank-r group requests and the p_r dimension}}
\label{app:multi-record-open}

The candidate
\[
p_r=dr-\frac{r(r-1)}2
\]
does not describe the message space of one realized rank-$r$ deletion.  The
requested retrained model is one point in $\mathbb R^d$, so its one-shot metric
dimension cannot exceed $d$; Remark~\ref{rem:rank-r-dimension} gives a
matching $d$-dimensional lower-bound family even when the request is a group.
The role of $p_r$ is instead pre-request.  If the future aggregate deletion
gradient may lie anywhere in a public $r$-dimensional subspace, the decoder must
preserve the action of a symmetric inverse-curvature operator on that subspace.
In an orthonormal basis this is exactly the first $r$ columns of a symmetric
matrix, with $p_r$ free entries.  The tuple-packing converse and coordinate
codec in Theorem~\ref{thm:curvature-hedging} therefore already provide matching
$p_r$ state-only and $d$ message-only leading coefficients for this
request-oblivious operator-slice problem.  A rank-$r$ factor
$X_Q^\top X_Q$ and that operator slice have the same polynomial dimension only
because both are symmetric rank/subspace objects; they are not the same random
variable and should not be identified.

\section{Proofs of the curvature-identifiability results}
\label{app:curvature-identifiability}

\subsection{Proof of Theorem~\ref{thm:sc-ambiguity}}

Let $\Delta=w^--w$.  Since $w$ minimizes
$F_+=F_-+L_Q$,
\[
\nabla F_-(w)=-\nabla L_Q(w)=-g_Q.
\]
Since $w^-$ minimizes $F_-$, the fundamental theorem of calculus along the line
segment from $w$ to $w^-$ gives
\begin{align*}
0
&=\nabla F_-(w^-)
 =\nabla F_-(w)
  +\int_0^1\nabla^2F_-(w+t\Delta)\,\Delta\,dt\\
&=-g_Q+\bar H_Q\Delta.
\end{align*}
Thus $\Delta=\bar H_Q^{-1}g_Q$.  Integration preserves the Loewner bounds, so
$mI_d\preceq\bar H_Q\preceq LI_d$ and therefore
$L^{-1}I_d\preceq\bar H_Q^{-1}\preceq m^{-1}I_d$.
Lemma~\ref{lem:spectral-image-ball}, applied with
$h_1=L^{-1}$, $h_2=m^{-1}$, and $x=g_Q$, yields
Equation~\eqref{eq:sc-ambiguity-ball}.

For tightness, fix any symmetric $H$ with $mI_d\preceq H\preceq LI_d$ and define
\[
F_{-,H}(\theta)
=\frac12(\theta-w)^\top H(\theta-w)-g_Q^\top(\theta-w),
\qquad
L_Q(\theta)=g_Q^\top(\theta-w).
\]
Then
\[
F_{+,H}(\theta)=\frac12(\theta-w)^\top H(\theta-w),
\]
so the full-data minimizer is exactly $w$, while the retained-data minimizer is
$w+H^{-1}g_Q$.  As $H$ ranges over the spectral interval, $H^{-1}$ ranges over
all symmetric matrices satisfying
$L^{-1}I_d\preceq H^{-1}\preceq m^{-1}I_d$.
Lemma~\ref{lem:spectral-image-ball} therefore shows that the target set equals the ball in
Equation~\eqref{eq:sc-ambiguity-ball}.

A parameters-only decoder with the declared side information sees the same
$(w,g_Q,m,L)$ for every member of this quadratic family.  Its outputs must form
an $\varepsilon$-cover of a $d$-dimensional Euclidean ball of radius $\rho_Q$.
Volume comparison gives
$\Ncov\geq(\rho_Q/\varepsilon)^d$, while the standard volumetric upper bound gives
$\Ncov\leq(1+2\rho_Q/\varepsilon)^d$.  Taking base-$2$ logarithms and the
fixed-length ceiling proves Equation~\eqref{eq:sc-ambiguity-rate}.

\subsection{Proof of Corollary~\ref{cor:partial-curvature}}

Decompose $g_Q=g_\parallel+g_\perp$ with
$g_\parallel=Pg_Q$ and $g_\perp=P_\perp g_Q$.  On the stated block-diagonal
quadratic subclass,
\[
H^{-1}g_Q
=H_\parallel^{-1}g_\parallel+H_\perp^{-1}g_\perp.
\]
The first term is determined by decoder state.  On
$\operatorname{range}(P_\perp)$, the inverse curvature satisfies
\[
L^{-1}P_\perp\preceq H_\perp^{-1}\preceq m^{-1}P_\perp.
\]
Applying Lemma~\ref{lem:spectral-image-ball} in the $(d-s)$-dimensional
orthogonal complement shows that the second term fills exactly the ball centered
at $\tfrac12(m^{-1}+L^{-1})g_\perp$ with radius
$\rho_\perp=\tfrac12(m^{-1}-L^{-1})\|g_\perp\|_2$.
The same lower and upper volumetric covering bounds, now in dimension $d-s$,
give Equation~\eqref{eq:partial-curvature-rate}.

\subsection{Proof of Proposition~\ref{prop:data-native-witness}}

The unweighted first-order condition at $w$ is
\[
\sum_{i=1}^n x_i(x_i^\top w-y_i)+\lambda w=0.
\]
For $v\in\mathcal V_Q$, the gradient of the weighted full objective at the same
point is
\[
\nabla F_v^+(w)
=
\sum_{i=1}^n x_i r_i+\lambda w
+\sum_{i\in R}v_i x_i r_i
=0.
\]
All weights remain positive in a sufficiently small neighborhood of the origin,
and $\lambda>0$, so $F_v^+$ is strongly convex and $w$ is its unique minimizer.
Because $v_i=0$ on $Q$, the deleted records and their weights are unchanged.

The retained weighted objective has Hessian
\[
H_-(v)=\lambda I+\sum_{i\in R}(1+v_i)x_ix_i^\top.
\]
Its gradient at $w$ equals $-g_Q$.  Since ridge is quadratic, the exact retained
minimizer therefore satisfies
\[
H_-(v)(T_Q(v)-w)=g_Q,
\]
which proves Equation~\eqref{eq:data-native-target}.  Differentiating the matrix
inverse at $v=0$ gives, for a direction $u\in\mathcal V_Q$,
\[
D T_Q(0)[u]
=
-H_-(0)^{-1}
\left(\sum_{i\in R}u_i x_ix_i^\top\right)
H_-(0)^{-1}g_Q
=
\mathcal J_Q[u].
\]

Now restrict the target map to the $q$-dimensional subspace $V$ and let
$\sigma>0$ be the smallest singular value of
$\mathcal J_Q|_V$.  The derivative of $T_Q$ is continuous wherever
$H_-(v)$ is positive definite.  Hence there exists a radius $r>0$ such that on
the Euclidean ball $V\cap \overline B(0,r)$,
\[
\|D T_Q(v)|_V-\mathcal J_Q|_V\|_{\mathrm{op}}\leq \sigma/2.
\]
For any $u,v$ in this ball, integration along the segment joining them yields
\[
\|T_Q(v)-T_Q(u)-\mathcal J_Q[v-u]\|_2
\leq \frac{\sigma}{2}\|v-u\|_2,
\]
and therefore
\[
\|T_Q(v)-T_Q(u)\|_2
\geq \frac{\sigma}{2}\|v-u\|_2.
\]
Thus $T_Q$ is locally bi-Lipschitz from a $q$-dimensional parameter ball into
the compatible retraining-target set.  A Euclidean packing of
$V\cap\overline B(0,r)$ at separation $4\varepsilon/\sigma$ maps to a
$2\varepsilon$-packing of targets.  Standard volume comparison gives constants
$c_0,\varepsilon_0>0$ such that
\[
\Npack(\{T_Q(v):v\in V,\|v\|_2\leq r\},2\varepsilon)
\geq
\left(\frac{c_0}{\varepsilon}\right)^q
\]
for $0<\varepsilon<\varepsilon_0$.  Every member of this family has the same
deployed parameter and the same request contents, so the one-shot covering
bound gives Equation~\eqref{eq:data-native-lower-bound}.

\subsection{Proof of Theorem~\ref{thm:curvature-hedging}}

Subtracting affine terms does not change curvature, so the remainder
$R_H=F_H^+-\sum_{j=1}^rL_j$ is strongly convex with Hessian $A=H^{-1}$.
After request $q$, the retained objective is
\[
F_{H,q}^-(\theta)
=F_H^+(\theta)-L_q(\theta)
=\frac12(\theta-w)^\top A(\theta-w)-a e_q^\top(\theta-w).
\]
Its first-order condition is
$A(\theta-w)=a e_q$, hence
$w_q^-=w+aHe_q$.

For a fixed request $q$, Lemma~\ref{lem:spectral-image-ball} shows that
$\{aHe_q:h_1I\preceq H\preceq h_2I\}$ is exactly a $d$-dimensional ball of
radius $a\delta$.  The pair consisting of state and request-$q$ transcript takes
at most $2^{M+B}$ values, and its decoder outputs must form an
$\varepsilon$-cover of this ball.  Volume comparison yields
Equation~\eqref{eq:curvature-hedging-single}.

For the joint bound, let
\[
\mathcal V_r
=\{E=E^\top:E_{ij}=0\text{ whenever }i>r\text{ and }j>r\}.
\]
This linear space has dimension
$p_r=rd-r(r-1)/2$.  With
$\bar h=(h_1+h_2)/2$, the family
\[
\mathcal B_r
=\{\bar h I+E:E\in\mathcal V_r,\ \|E\|_F\leq\delta\}
\]
lies inside the spectral interval because
$\|E\|_{\mathrm{op}}\leq\|E\|_F\leq\delta$.
Consider the tuple consisting of the pre-request state and all $r$ transcripts
that the encoder would send under the possible requests.  There are at most
$2^{M+rB}$ such tuples.  If $H,H'\in\mathcal B_r$ induce the same tuple, they
induce the same decoded output for every $q$, and therefore
\[
a\|(H-H')e_q\|_2\leq2\varepsilon,\qquad q\in[r].
\]
For $D=H-H'\in\mathcal V_r$,
\[
\|D\|_F^2
\leq2\sum_{q=1}^r\|De_q\|_2^2
\leq\frac{8r\varepsilon^2}{a^2}.
\]
Thus each tuple class has Frobenius diameter at most
$2\sqrt{2r}\,\varepsilon/a$ inside the $p_r$-dimensional ball
$\mathcal B_r$.  Comparing $p_r$-dimensional volumes gives at least
\[
\left(\frac{\delta a}{2\sqrt{2r}\,\varepsilon}\right)^{p_r}
\]
distinct tuples, which proves
Equation~\eqref{eq:curvature-hedging-joint}.

For achievability, the state quantizes the $p_r$ independent entries of the
first $r$ columns of $H$, respecting symmetry, on a public scalar grid of
spacing $\Delta$.  Since every relevant entry lies in $[-h_2,h_2]$,
\[
M\leq
p_r\log_2(1+2h_2/\Delta)+O(p_r).
\]
For request $q$, the quantized column has Euclidean error at most
$\sqrt d\,\Delta/2$, so the residual target lies in a $d$-dimensional ball of
radius at most $a\sqrt d\,\Delta/2$.  An $\varepsilon$-cover therefore uses
\[
B\leq
d\log_2\!\left(1+\frac{a\sqrt d\,\Delta}{\varepsilon}\right)+O(1)
\]
bits.  Choosing $\Delta=c\varepsilon^\alpha/\sqrt d$ for a constant $c$
depending only on $(m,L,a)$ gives the stated interpolation.  At $\alpha=1$,
choose the grid fine enough that the cached column alone is within
$\varepsilon$ and send no transcript.  At $\alpha=0$, omit the state and cover
the exact request-specific target ball directly.  These endpoint constructions
give exactly $B=0$ and $M=0$, respectively.

\section{Proofs of the general accounting results}\label{app:general}

\subsection{Proof of Theorem~\ref{thm:gaussian-ridge}}

Write $E_m=W_m-m\Sigma$.  Since $\|E_m\|_{\rm op}=O_p(\sqrt m)$ for fixed
$d$, the resolvent expansion around $A_0=m\Sigma+\lambda I$ gives, after
application to the fixed vector $v_Q$,
\[
\Delta_m-u_m=-A_0^{-1}E_mu_m+o_p(m^{-3/2}).
\]
Moreover, $mA_0^{-1}\to\Sigma^{-1}$ and
$mu_m\to\Sigma^{-1}v_Q$.  Hence
\[
m^{3/2}(\Delta_m-u_m)
=-\bigl(mA_0^{-1}\bigr)\bigl[m^{-1/2}E_m\bigr]\bigl(mu_m\bigr)+o_p(1).
\]
For one Gaussian row $x\sim\mathcal N(0,\Sigma)$ and
$a=\Sigma^{-1}v_Q$, Isserlis' identity gives
\[
\operatorname{Cov}\!\left(x(x^\top a)-\Sigma a\right)
=(a^\top\Sigma a)\Sigma+\Sigma aa^\top\Sigma.
\]
The multivariate central limit theorem applied to $m^{-1/2}E_ma$ and Slutsky's
theorem yield Equation~\eqref{eq:gaussian-delta}, with the fixed covariance in
Equation~\eqref{eq:gaussian-limit-covariance}.  The same expansion gives
$\operatorname{Cov}(\Delta_m\mid v_Q)=m^{-3}\mathcal V+o(m^{-3})$, proving
Equation~\eqref{eq:gaussian-covariance}.  Thus the weak limit is stated with a
fixed covariance instead of a sequence of $m$-dependent Gaussian laws.

For known $\Sigma$, the first-order Gaussian approximation
$\widetilde\Delta_m\sim\mathcal N(u_m,\overline V_m)$ has the classical
squared-error reverse-water-filling rate--distortion function.  Coordinates with
$\nu_j\leq\gamma$ are not coded and active coordinates receive distortion
$\gamma$, giving Equation~\eqref{eq:reverse-waterfill}.  If
$D\geq\sum_j\nu_j=\operatorname{tr}\overline V_m$, reproduction at the mean
already meets the target.  This is an asymptotic Gaussian benchmark for the
actual Wishart correction, not a claim that the finite-$m$ correction is exactly
Gaussian.

For the covariance-uncertain statement, let
$H=(m\Sigma+\lambda I)^{-1}$.  The spectral constraint
$\sigma_-I\preceq\Sigma\preceq\sigma_+I$ is equivalent to
\[
\frac{1}{m\sigma_++\lambda}I
\preceq H\preceq
\frac{1}{m\sigma_-+\lambda}I.
\]
Lemma~\ref{lem:spectral-image-ball} applied to $Hv_Q$ gives the exact ball and
radius in Equation~\eqref{eq:population-radius}.  Its covering number has the
usual $d\log(\rho/\varepsilon)$ high-rate coefficient and equals one once the
radius is at most $\varepsilon$.  Conditioning on exact $W_m$ removes all
remaining randomness from $\Delta_m$.

\subsection{Details for Remark~\ref{rem:rank-r-dimension}}

For one realized group request, Theorem~\ref{thm:sc-ambiguity} depends on the
deleted group only through its aggregate gradient $g_Q$.  Its quadratic
subclass therefore produces the same exact $d$-dimensional ball for any
$g_Q\neq0$, regardless of how many deleted rows were used to form that gradient
or the rank of their design matrix.  This proves the one-request coefficient
$d$.  For a public $r$-dimensional subspace with orthonormal basis
$U=[u_1,\ldots,u_r]$, the object needed to answer every future direction is
$HU$.  Symmetry imposes $u_i^THu_j=u_j^THu_i$, leaving
$dr-r(r-1)/2$ independent coordinates.  Rotating the construction of
Theorem~\ref{thm:curvature-hedging} from the first $r$ coordinate vectors to
$U$ preserves its packing and codec, proving that $p_r$ is the corresponding
state-only operator-slice coefficient.

\subsection{Proof of Proposition~\ref{thm:accounting}}

Because $\widehat Y^+$ is a function of $S$, data processing gives
\[
 I(Y^+;\widehat Y^+)\leq I(Y^+;S)\leq H(S)\leq M.
\]
Every reproduction kernel achieving distortion at most $D_+$ has mutual
information at least $\rd_{Y^+}(D_+)$ by Definition~\ref{def:budget}, which proves
\eqref{eq:ret-converse}.

For retraction, condition on the side information $(S,Q)$.  The decoder output is
a function of $(S,Q,U)$, hence
\[
 I(Y^-;\widehat Y^-\mid S,Q)
 \leq I(Y^-;U\mid S,Q)
 \leq H(U\mid S,Q)
 \leq B.
\]
Taking the infimum over all reproduction kernels with expected distortion at most
$D_-$ proves~\eqref{eq:upd-converse}.

For the sum bound, condition only on $Q$ and treat $(S,U)$ as the message:
\begin{align*}
 I(Y^-;\widehat Y^-\mid Q)
 &\leq I(Y^-;S,U\mid Q)\\
 &\leq H(S\mid Q)+H(U\mid S,Q)\\
 &\leq H(S)+H(U\mid S,Q)\\
 &\leq M+B.
\end{align*}
The conditional rate--distortion definition yields~\eqref{eq:sum-converse}.  If a
task-independent initialization $S_0$ is present, apply the same argument after
conditioning every term on $S_0$.
\subsection{Proof of Proposition~\ref{thm:request-region}}

Fix $q\in[K]$ and evaluate the protocol under the intervention that sets the
request to $q$.  This intervention changes neither the joint law of
$(\Theta_{1:K},S,R)$ nor the marginal law of $\Theta_q$.  In particular,
$\Theta_q$ remains a uniform $d$-bit source independent of the public seed
$R$.

For each realization $R=r$, let
\[
D_q(r)
=
\mathbb E\!\left[
\frac{1}{d}d_{\rm H}(\Theta_q,\widehat\Theta_q)
\,\middle|\,R=r
\right].
\]
Define the binary rate--distortion function on the full distortion range by
\[
\mathcal R_{\rm bin}(t)
=
\begin{cases}
d[1-h_2(t)], & 0\leq t\leq \tfrac12,\\
0, & \tfrac12<t\leq1.
\end{cases}
\]
For every $r$, the binary rate--distortion converse gives
\[
I(\Theta_q;\widehat\Theta_q\mid R=r)
\geq
\mathcal R_{\rm bin}(D_q(r)).
\]
The function $\mathcal R_{\rm bin}$ is convex on $[0,1]$.  Therefore, averaging
over $R$ and applying Jensen's inequality yields
\begin{align*}
I(\Theta_q;\widehat\Theta_q\mid R)
&\geq
\mathbb E_R[\mathcal R_{\rm bin}(D_q(R))]\\
&\geq
\mathcal R_{\rm bin}\!\left(\mathbb E_R[D_q(R)]\right)\\
&=
d[1-h_2(D_q)],
\end{align*}
where the final equality uses $D_q\leq\tfrac12$.
Data processing then gives
\begin{align*}
d[1-h_2(D_q)]
&\leq I(\Theta_q;\widehat\Theta_q\mid R)\\
&\leq I(\Theta_q;S,\widehat\Theta_q\mid R)\\
&=I(\Theta_q;S\mid R)
  +I(\Theta_q;\widehat\Theta_q\mid S,R)\\
&\leq I(\Theta_q;S\mid R)+H(U_q\mid S,R)\\
&\leq I(\Theta_q;S\mid R)+B_q.
\end{align*}
Thus, with $m_q=I(\Theta_q;S\mid R)$,
\[
B_q\geq[r_q-m_q]_+.
\]

It remains to bound the total information assigned by the state.  Since
$\Theta_1,\ldots,\Theta_K$ are mutually independent and independent of $R$,
\begin{align*}
\sum_{q=1}^K I(\Theta_q;S\mid R)
&\leq I(\Theta_{1:K};S\mid R)\\
&\leq H(S\mid R)\\
&\leq M.
\end{align*}
This proves the converse region in Equation~\eqref{eq:request-region}.

For achievability, fix an allocation $m_{1:K}$ satisfying
$\sum_qm_q\leq M$.  If $0\leq m_q<r_q$, choose a coarse distortion
$D_q^{(0)}\in[D_q,1/2]$ such that
\[
d[1-h_2(D_q^{(0)})]=m_q.
\]
The endpoint cases are interpreted by continuity.  A Bernoulli source under
Hamming distortion is successively refinable
\citep{equitz1991successive}; hence it admits a base description of
$m_q+o(d)$ bits and a refinement of
$r_q-m_q+o(d)$ bits that attains distortion $D_q$.  If $m_q\geq r_q$, the
state stores a complete rate--distortion description and no refinement is
needed.

Concatenating the $K$ independent base descriptions produces a state that is
formed before the request is selected.  Under the intervention that selects
$q$, the encoder sends only the refinement associated with target $\Theta_q$.
Because $K$ is fixed, the sum of the per-target coding redundancies remains
$o(d)$.  This proves asymptotic achievability of every allocation in
Equation~\eqref{eq:request-region}.

Minimizing the largest refinement rate gives
Equation~\eqref{eq:minimax-update}.  When all distortions are equal, symmetry
allocates $M/K$ state bits to each target until its rate--distortion
description is complete, yielding
\[
\left[d\{1-h_2(D)\}-\frac{M}{K}\right]_++o(d).
\]

At $D=0$, the state stores uncoded target coordinates.  Public randomization
rotates indivisible stored coordinates across the $K$ targets, giving expected
state entropy $M$ and conditional update entropy
$[d-M/K]_+$ for every request.  This attains the entropy-budget converse
exactly.  Under deterministic fixed-length budgets, an optimal integer
allocation stores at least $\lfloor M/K\rfloor$ coordinates for every target,
and therefore requires
\[
\left[d-\left\lfloor\frac{M}{K}\right\rfloor\right]_+
\]
update bits in the worst case.
\subsection{Proof of Proposition~\ref{thm:covering}}

Fix $c=(s,q)$.  A $B$-bit decoder has at most $2^B$ possible outputs.  If its
worst-case distortion is at most $D$, the closed $D$-balls centered at those
outputs cover $\Tcal_c$.  Therefore
$2^B\geq\Ncov(\Tcal_c,\dist,D)$.

Conversely, choose a minimum $D$-cover and give its centers public indices.  The
encoder maps a target to any covering center and sends its index.  A fixed-length
index uses the ceiling in~\eqref{eq:cover-exact}.  Finally, any set with pairwise
distance greater than $2D$ can contribute at most one point to a $D$-ball, so
$\Npack(\Tcal_c,\dist,2D)\leq\Ncov(\Tcal_c,\dist,D)$.

\section{Proof of Proposition~\ref{thm:ridge-states} (ridge decoder states)}
\label{app:ridge-states}

\paragraph{The downdate identity and its rank.}
Write $U=X_Q^\top\in\R^{d\times k}$, so the retained normal matrix is
$A_\lambda-UU^\top$ and the retained right-hand side is $c-Uy_Q$.  Since
\[
 A_\lambda-UU^\top=X_-^\top X_-+\lambda I_d\succ0,
\]
congruence by $A_\lambda^{-1/2}$ gives
$I_d-A_\lambda^{-1/2}UU^\top A_\lambda^{-1/2}\succ0$.  The nonzero eigenvalues
of $A_\lambda^{-1/2}UU^\top A_\lambda^{-1/2}$ equal those of
$U^\top A_\lambda^{-1}U$, so
$I_k-X_QA_\lambda^{-1}X_Q^\top\succ0$ and $M_Q$ exists.  The
Woodbury identity gives
\[
 (A_\lambda-UU^\top)^{-1}
 =A_\lambda^{-1}+A_\lambda^{-1}UM_QU^\top A_\lambda^{-1}.
\]
Applying it to $c-Uy_Q$ and using $M_QX_QA_\lambda^{-1}U=M_Q-I_k$,
\[
 w^-
 =w-A_\lambda^{-1}Uy_Q
 +A_\lambda^{-1}UM_Q\left(X_Qw-X_QA_\lambda^{-1}Uy_Q\right)
 =w+A_\lambda^{-1}UM_Q\left(X_Qw-y_Q\right),
\]
which is \eqref{eq:ridge-downdate}.  The correction is a linear combination of
the $k$ columns of $A_\lambda^{-1}X_Q^\top$, so it lies in a subspace of
dimension at most $\min(k,d)$.

\paragraph{Part (a).}
A decoder holding exact real-valued $(X_-,y_-)$ and $\lambda$ evaluates the retained normal equations directly. Under the subsection's convention, no additional transcript is required, so $B=0$ at zero distortion for either request interface.

\paragraph{Part (b).}
A decoder holding exact real-valued $(G,c)$ and, through a value-exposing request, exact $(X_Q,y_Q)$ forms $G-X_Q^\top X_Q$ and $c-X_Q^\top y_Q$, which are exactly the retained sufficient statistics, and solves. Again the additional transcript has $B=0$ under this convention. Under an identifier-only request the deleted contributions remain unknown. Their conditional family depends on the data source and on what the identifiers reveal; $(G,c)$ alone therefore yields no universal identifier-only rate. This is why part (b) is an interface-conditional zero-rate statement instead of a finite-memory comparison.

\subsection{Proof of Theorem~\ref{thm:ridge-radius} (exact target ball)}

We first isolate the geometric fact used by both ridge results.

\begin{lemma}[Image of a vector under a spectral interval]
\label{lem:spectral-image-ball}
For $x\ne0$ and $0<h_1\leq h_2$,
\begin{equation}
 \{Hx:H=H^\top,\ h_1I_d\preceq H\preceq h_2I_d\}
 =\overline B\!\left(\bar h x,\delta\lVert x\rVert_2\right),
 \quad
 \bar h=\frac{h_1+h_2}{2},\quad
 \delta=\frac{h_2-h_1}{2}.
 \label{eq:spectral-image-ball}
\end{equation}
\end{lemma}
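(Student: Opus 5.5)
The proof splits into two inclusions, after reducing to the centered operator. Write $H=\bar h I_d+E$, where $E$ is symmetric. The spectral interval $h_1I_d\preceq H\preceq h_2I_d$ is then exactly the condition $\|E\|_{\mathrm{op}}\leq\delta$. Consequently $Hx-\bar h x=Ex$, and the lemma reduces to the claim
\[
\{Ex: E=E^\top,\ \|E\|_{\mathrm{op}}\leq\delta\}=\overline B(0,\delta\|x\|_2).
\]

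For containment, I would use the operator-norm bound: $\|Ex\|_2\leq\|E\|_{\mathrm{op}}\|x\|_2\leq\delta\|x\|_2$.

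For surjectivity, fix a target $z$ with $\|z\|_2\leq\delta\|x\|_2$. I need a symmetric $E$ with $\|E\|_{\mathrm{op}}\leq\delta$ and $Ex=z$. Set $t=\|z\|_2/(\delta\|x\|_2)\in[0,1]$, and let $u=z/\|z\|_2$ (take any unit $u$ if $z=0$) and $\hat x=x/\|x\|_2$. Then choose
\[
E=t\delta\,S,
\]
where $S$ is a symmetric orthogonal matrix mapping $\hat x$ to $u$. If $u\neq\hat x$, take the Householder reflection $S=I-2vv^\top$ with $v=(\hat x-u)/\|\hat x-u\|_2$. If $u=\hat x$, take $S=I$. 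Any Householder reflection is symmetric with eigenvalues $\pm1$, so $\|S\|_{\mathrm{op}}=1$. Moreover $S\hat x=u$, because $\hat x$ and $u$ are unit vectors of equal length. It follows that $Ex=t\delta\|x\|_2\,u=z$ and $\|E\|_{\mathrm{op}}=t\delta\leq\delta$. Therefore $H=\bar hI_d+E$ is admissible and $Hx=\bar hx+z$.

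The only point needing care is that the reflection maps $\hat x$ exactly onto $u$ while remaining symmetric. This is the standard identity $(I-2vv^\top)\hat x=u$, which holds for unit vectors of equal norm. The degenerate cases $z=0$ and $u=\hat x$ are covered by the choices above. The case $h_1=h_2$ gives $\delta=0$, and both sides of the claim collapse to the single point $\bar hx$.
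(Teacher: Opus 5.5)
Your proposal is correct and follows essentially the same route as the paper's proof. You center at $\bar h I_d$ so that the constraint becomes $\|E\|_{\mathrm{op}}\leq\delta$, obtain containment from the operator-norm bound, and get surjectivity from a scaled Householder reflection $E=(\|z\|_2/\|x\|_2)S$. Your explicit verification of $S\hat x=u$ and your separate handling of the degenerate case $\delta=0$ are slightly more detailed than the paper's version.
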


\begin{proof}
Write $H=\bar hI_d+E$.  The spectral constraint is equivalent to $E=E^\top$
and $\lVert E\rVert_{\mathrm{op}}\leq\delta$, which implies
$\lVert Ex\rVert_2\leq\delta\lVert x\rVert_2$.  Conversely, take any $z$ in
that Euclidean ball around zero.  If $z=0$, use $E=0$.  Otherwise choose a
symmetric orthogonal Householder map $Q$ satisfying
$Qx/\lVert x\rVert_2=z/\lVert z\rVert_2$ and set
$E=(\lVert z\rVert_2/\lVert x\rVert_2)Q$.  Then $E=E^\top$,
$\lVert E\rVert_{\mathrm{op}}\leq\delta$, and $Ex=z$.
\end{proof}

For the family in Theorem~\ref{thm:ridge-radius}, the full normal matrix is
$A_\lambda=\lambda I_d+xx^\top+R$.  The labels in
\eqref{eq:ridge-radius-family} give
\[
 X^\top y
 =xy_1+R^{1/2}R^{-1/2}(A_\lambda w-xy_1)
 =A_\lambda w,
\]
so every full-data solution equals the public $w$.  After deleting $(x,y_1)$,
the retained normal equations yield
\begin{equation}
 w^-=w+\varrho(\lambda I_d+R)^{-1}x.
 \label{eq:ridge-radius-target}
\end{equation}
As $R$ ranges over
$[r_{\min}I_d,r_{\max}I_d]$ in Loewner order,
$H=(\lambda I_d+R)^{-1}$ ranges over every symmetric matrix satisfying
$h_1I_d\preceq H\preceq h_2I_d$.  Lemma~\ref{lem:spectral-image-ball}
therefore turns \eqref{eq:ridge-radius-target} into exactly the ball claimed in
Theorem~\ref{thm:ridge-radius}.

A parameters-only decoder has identical side information on every member of the
family.  Its outputs must form an $\varepsilon$-cover of the target ball.
Euclidean volume comparison gives
$\Ncov(\overline B(0,\rho),\lVert\cdot\rVert_2,\varepsilon)
\geq(\rho/\varepsilon)^d$, while the standard volumetric grid bound gives
$\Ncov\leq(1+2\rho/\varepsilon)^d$.  The one-shot characterization in
Proposition~\ref{thm:covering} proves \eqref{eq:ridge-radius-bits}.

\subsection{Proof of Theorem~\ref{thm:ridge-hedging} (request hedging)}

Let
\[
 \mathcal H=\{H=H^\top:h_1I_d\preceq H\preceq h_2I_d\}.
\]
For each $H\in\mathcal H$, set
$R=H^{-1}-(\lambda+a^2)I_d$ and
$X=[\,aI_d;R^{1/2}\,]$.  With
$y_i=aw_i-\varrho_i$ for the public deletable rows and
\[
 y_R=R^{-1/2}(H^{-1}w-ay)
\]
for the private rows, the full normal matrix is $H^{-1}$ and
$X^\top[y;y_R]=H^{-1}w$.  Hence every instance has the same deployed solution
$w$.  Deleting public row $i$ and applying the rank-one identity gives
\begin{equation}
 \Phi_i(H):=w_i^--w
 =\frac{\varrho_i a\,He_i}{1-a^2H_{ii}}.
 \label{eq:ridge-hedging-map}
\end{equation}
The denominator is positive because
$a^2h_2=a^2/(\lambda+a^2+r_{\min})<1$.

The map in \eqref{eq:ridge-hedging-map} is quantitatively invertible column by
column.  Write $\phi_{ii}=e_i^\top\Phi_i$ and
\[
 s_i=\varrho_i a+a^2\phi_{ii}
     =\frac{\varrho_i a}{1-a^2H_{ii}}.
\]
Then $He_i=\Phi_i/s_i$ and $|s_i|\geq|\varrho_i|a$.  For two admissible
matrices $H,H'$,
\begin{align}
 \lVert He_i-H'e_i\rVert_2
 &\leq \frac{1+a^2h_2}{|\varrho_i|a}
          \lVert\Phi_i(H)-\Phi_i(H')\rVert_2 \\
 &\leq \frac{2}{|\varrho_i|a}
          \lVert\Phi_i(H)-\Phi_i(H')\rVert_2.
 \label{eq:ridge-hedging-inverse-lipschitz}
\end{align}

Consider the tuple consisting of one state and the $d$ transcripts it would
send under the $d$ possible requests.  There are at most $2^{M+dB}$ such
tuples.  If two matrices induce the same tuple, the corresponding targets for
each request are within $2\varepsilon$, so
\eqref{eq:ridge-hedging-inverse-lipschitz} gives
\[
 \lVert H-H'\rVert_{\mathrm{op}}
 \leq\lVert H-H'\rVert_F
 \leq\frac{4\sqrt d\,\varepsilon}{\varrho_{\min}a}.
\]
The set $\mathcal H$ is the operator-norm ball of radius $\delta$ around
$\bar hI_d$ inside the $p$-dimensional vector space of symmetric matrices.
Lebesgue volume in that space therefore requires at least
$(\delta\varrho_{\min}a/(4\sqrt d\,\varepsilon))^p$ tuples, proving
\eqref{eq:ridge-hedging-joint}.  For a fixed request $i$, the pair $(S,U_i)$
takes at most $2^{M+B}$ values.  Lemma~\ref{lem:spectral-image-ball} shows that
$He_i$ fills a $d$-dimensional ball of radius $\delta$, and the same inverse
bound gives covering radius $4\varepsilon/(|\varrho_i|a)$.  Volume comparison
proves \eqref{eq:ridge-hedging-single}.

For achievability, quantize the $p$ upper-triangular entries of $H$ on a public
uniform grid with step $\Delta$.  This uses
\[
 M\leq p\log_2(1+2h_2/\Delta)+p
\]
bits, and every reconstructed column has error at most
$\sqrt d\,\Delta/2$.  Because
$1-a^2H_{ii}\geq1-a^2h_2>0$ on the compact family, the forward maps
$He_i\mapsto\Phi_i(H)$ are uniformly Lipschitz.  Thus the state determines the
requested correction to error at most $C\sqrt d\,\Delta$ for a finite constant
$C$ depending only on the fixed family.  The update sends the index of an
$\varepsilon$-cover point in that residual ball, using at most
\[
 d\log_2\!\left(1+\frac{2C\sqrt d\,\Delta}{\varepsilon}\right)+1
\]
bits.  Taking $\Delta=c\varepsilon^\theta/\sqrt d$ gives
\eqref{eq:ridge-hedging-ach-m}--\eqref{eq:ridge-hedging-ach-b}.  At
$\theta=1$, choose $c$ so that the state reconstruction error is at most
$\varepsilon$ and send no update.  At $\theta=0$, omit the state and cover the
compact target family directly.  These separate endpoint constructions give
exactly $B=0$ and $M=0$, respectively.

\subsection{Proof of Proposition~\ref{prop:ridge-precision}}

Let $A_-=G-xx^\top+\lambda I_d$, $E=\widetilde G-G$, and
$f=\widetilde c-c$.  The exact and perturbed retained normal equations imply
\[
 (A_-+E)(\widetilde w^--w^-)=f-Ew^-.
\]
Since $A_-\succeq\lambda I_d$ and
$\lVert E\rVert_{\mathrm{op}}\leq\eta_G<\lambda$,
$\lambda_{\min}(A_-+E)\geq\lambda-\eta_G$.  Taking norms proves
\eqref{eq:ridge-precision}.

\end{document}